\documentclass{article} 
\usepackage[table]{xcolor}
\usepackage{iclr2022_conference,times}

\usepackage[utf8]{inputenc} 
\usepackage[T1]{fontenc}    
\usepackage{hyperref}       
\usepackage{url}            
\usepackage{booktabs}       
\usepackage{amsfonts}       
\usepackage{nicefrac}       
\usepackage{microtype}      
\usepackage{amsmath,amsfonts,amssymb,amsthm}
\usepackage{standalone}
\usepackage{algorithm}
\usepackage{algpseudocode}
\usepackage{bbm}
\usepackage{multirow}
\usepackage{adjustbox}
\usepackage[makeroom]{cancel}

\usepackage{pifont}
\newcommand{\cmark}{\ding{51}}%
\newcommand{\xmark}{\ding{55}}%

\definecolor{flat-red}{RGB}{231, 76, 60}
\definecolor{flat-blue}{RGB}{41, 128, 185}
\definecolor{flat-teal}{RGB}{22, 160, 133}
\definecolor{flat-green}{RGB}{39, 174, 96}
\definecolor{flat-purple}{RGB}{142, 68, 173}
\definecolor{flat-yellow}{RGB}{243, 156, 18}

\definecolor{C2}{RGB}{44, 160, 44}
\definecolor{C3}{RGB}{214, 39, 40}

\usepackage[frozencache,newfloat]{minted}
\usepackage{caption}

\newenvironment{code}{\captionsetup{type=listing}}{}
\SetupFloatingEnvironment{listing}{name=Code Snippet}


\usepackage{amsmath,amsfonts,bm}









\def\eqref#1{equation~\ref{#1}}









\def\1{\bm{1}}







\def\vzero{{\bm{0}}}

\def\vtheta{{\bm{\theta}}}
\def\va{{\bm{a}}}

\def\vh{{\bm{h}}}

\def\vp{{\bm{p}}}

\def\vs{{\bm{s}}}

\def\vx{{\bm{x}}}
\def\vy{{\bm{y}}}
\def\vz{{\bm{z}}}


\def\mA{{\bm{A}}}
\def\mB{{\bm{B}}}
\def\mC{{\bm{C}}}
\def\mD{{\bm{D}}}

\def\mF{{\bm{F}}}
\def\mG{{\bm{G}}}

\def\mI{{\bm{I}}}

\def\mV{{\bm{V}}}
\def\mW{{\bm{W}}}

\DeclareMathAlphabet{\mathsfit}{\encodingdefault}{\sfdefault}{m}{sl}
\SetMathAlphabet{\mathsfit}{bold}{\encodingdefault}{\sfdefault}{bx}{n}


\def\gB{{\mathcal{B}}}

\def\gL{{\mathcal{L}}}

\def\gO{{\mathcal{O}}}

\def\gS{{\mathcal{S}}}



\def\sR{{\mathbb{R}}}








\newcommand{\E}{\mathbb{E}}

\newcommand{\softmax}{\mathrm{softmax}}



\newtheorem{lemma}{Lemma}
\newtheorem{proposition}{Proposition}
\newtheorem{cor}{Corollary}

\newtheorem{definition}{Definition}

\definecolor{darkblue}{rgb}{0.0, 0.0, 0.55}
\hypersetup{
	pdftitle={Continuous-Time Meta-Learning with Forward Mode Differentiation},
	pdfkeywords={comln,meta-learning,few-shot learning,dynamical systems,ode},
	pdfborder=0 0 0,
	pdfpagemode=UseNone,
	colorlinks=true,
	linkcolor=darkblue,
	citecolor=darkblue,
	filecolor=darkblue,
	urlcolor=darkblue,
	pdfview=FitH,
	pdfauthor={Tristan Deleu, David Kanaa, Leo Feng, Giancarlo Kerg, Yoshua Bengio, Guillaume Lajoie, Pierre-Luc Bacon}
}

\usepackage{hyperref}
\usepackage[capitalize]{cleveref}

\crefformat{section}{#2Section~#1#3}
\crefformat{appendix}{#2Appendix~#1#3}

\crefformat{equation}{(#2#1#3)}
\crefmultiformat{equation}{#2Equations~#1#3}%
{ \&~#2#1#3}{, #2#1#3}{, \&~#2#1#3}

\crefformat{table}{#2Table~#1#3}
\crefformat{figure}{#2Figure~#1#3}

\crefformat{theorem}{#2Theorem~#1#3}
\crefmultiformat{theorem}{#2Theorems~#1#3}%
{ \&~#2#1#3}{, #2#1#3}{, and~(#2#1#3)}

\crefformat{lemma}{#2Lemma~#1#3}
\crefformat{proposition}{#2Proposition~#1#3}
\crefmultiformat{proposition}{#2Propositions~#1#3}%
{ \&~#2#1#3}{, #2#1#3}{, and~(#2#1#3)}

\crefformat{algorithm}{#2Algorithm~#1#3}
\crefmultiformat{algorithm}{#2Algorithms~#1#3}%
{ \&~#2#1#3}{, #2#1#3}{, and~(#2#1#3)}

\crefformat{corollary}{#2Corollary~#1#3}
\crefformat{definition}{#2Definition~#1#3}

\crefformat{assumption}{#2Assumption~#1#3}
\crefmultiformat{assumption}{#2Assumptions~#1#3}%
{ \&~#2#1#3}{, #2#1#3}{, and~(#2#1#3)}

\DeclareMathOperator{\diag}{diag}
\def\vphi{{\bm{\phi}}}

\newcommand{\Dtrain}[1][\tau]{\mathcal{D}_{#1}^{\mathrm{train}}}
\newcommand{\Dtest}[1][\tau]{\mathcal{D}_{#1}^{\mathrm{test}}}

\title{Continuous-Time Meta-Learning with\\Forward Mode Differentiation}


\author{Tristan Deleu\thanks{Correspondence to: Tristan Deleu <\href{mailto:deleutri@mila.quebec}{deleutri@mila.quebec}>\newline\hspace*{1.35em}\textsuperscript{1}CIFAR Senior Fellow,\ \textsuperscript{2}CIFAR AI Chair\newline\hspace*{1.75em}Code is available at: \href{https://github.com/tristandeleu/jax-comln}{\texttt{https://github.com/tristandeleu/jax-comln}}} \quad David Kanaa \quad Leo Feng \quad Giancarlo Kerg\\[0.2em]
\textbf{Yoshua Bengio}\,\textsuperscript{1,2} \quad \textbf{Guillaume Lajoie}\,\textsuperscript{2} \quad \textbf{Pierre-Luc Bacon}\,\textsuperscript{2}\\[0.5em]
Mila -- Universit\'{e} de Montr\'{e}al
}

%

\usepackage{tikz}
\usepackage{pgfplots}
\usepgfplotslibrary{groupplots}
\usetikzlibrary{positioning}
\usetikzlibrary{arrows}
\usetikzlibrary{calc,fit}
\usetikzlibrary{shapes.geometric}
\usetikzlibrary{shapes.misc}
\usetikzlibrary{decorations.pathmorphing}
\usetikzlibrary{decorations.pathreplacing}

\definecolor{tableau10_C0}{RGB}{31, 119, 180}
\definecolor{tableau10_C1}{RGB}{255, 127, 14}
\definecolor{tableau10_C2}{RGB}{44, 160, 44}
\definecolor{tableau10_C3}{RGB}{214, 39, 40}

\iclrfinalcopy 
\begin{document}

\maketitle

\begin{abstract}
Drawing inspiration from gradient-based meta-learning methods with infinitely small gradient steps, we introduce Continuous-Time Meta-Learning (COMLN), a meta-learning algorithm where adaptation follows the dynamics of a gradient vector field. Specifically, representations of the inputs are meta-learned such that a task-specific linear classifier is obtained as a solution of an ordinary differential equation (ODE). Treating the learning process as an ODE offers the notable advantage that the length of the trajectory is now continuous, as opposed to a fixed and discrete number of gradient steps. As a consequence, we can optimize the amount of adaptation necessary to solve a new task using stochastic gradient descent, in addition to learning the initial conditions as is standard practice in gradient-based meta-learning. Importantly, in order to compute the exact meta-gradients required for the outer-loop updates, we  devise an efficient algorithm based on forward mode differentiation, whose memory requirements do not scale with the length of the learning trajectory, thus allowing longer adaptation in constant memory. We provide analytical guarantees for the stability of COMLN, we show empirically its efficiency in terms of runtime and memory usage, and we illustrate its effectiveness on a range of few-shot image classification problems.
\end{abstract}

\section{Introduction}
\label{sec:introduction}
Among the existing meta-learning algorithms, gradient-based methods as popularized by Model-Agnostic Meta-Learning \citep[MAML, ][]{finn17maml} have received a lot of attention over the past few years. They formulate the problem of learning a new task as an inner optimization problem, typically based on a few steps of gradient descent. An outer \emph{meta}-optimization problem is then responsible for updating the meta-parameters of this learning process, such as the initialization of the gradient descent procedure. However since the updates at the outer level typically require backpropagating through the learning process, this class of methods has often been limited to only a few gradient steps of adaptation, due to memory constraints. Although solutions have been proposed to alleviate the memory requirements of these algorithms, including checkpointing \citep{baranchuk2019memoryefficientmaml}, using implicit differentiation \citep{rajeswaran2019imaml}, or reformulating the meta-learning objective \citep{flennerhag2018leap}, they are generally either more computationally demanding, or only approximate the gradients of the meta-learning objective \citep{nichol2018reptile,flennerhag2020warpgrad}.

In this work, we propose a continuous-time formulation of gradient-based meta-learning, called \emph{Continuous-Time Meta-Learning} (COMLN), where the adaptation is the solution of a differential equation (see \cref{fig:comln}). Moving to continuous time allows us to devise a novel algorithm, based on forward mode differentiation, to efficiently compute the exact gradients for meta-optimization, no matter how long the adaptation to a new task might be. We show that using forward mode differentiation leads to a stable algorithm, unlike the counterpart of backpropagation in continuous time called the adjoint method (frequently used in the Neural ODE literature; \citealp{chen2018neuralode}) which tends to be unstable in conjunction with gradient vector fields. Moreover as the length of the adaptation trajectory is a continuous quantity, as opposed to a discrete number of gradient steps fixed ahead of time, we can treat the amount of adaptation in COMLN as a meta-parameter---on par with the initialization---which we can meta-optimize using stochastic gradient descent. We verify empirically that our method is both computationally and memory efficient, and we show that COMLN outperforms other standard meta-learning algorithms on few-shot image classification datasets.

\section{Background}
\label{sec:background}
In this work, we consider the problem of few-shot classification, that is the problem of learning a classification model with only a small number of training examples. More precisely for a classification task $\tau$, we assume that we have access to a (small) training dataset $\Dtrain = \{(\vx_{m}, \vy_{m})\}_{m=1}^{M}$ to fit a model on task $\tau$, and a distinct test dataset $\Dtest$ to evaluate how well this adapted model generalizes on that task. In the few-shot learning literature, it is standard to consider the problem of $k$-shot $N$-way classification, meaning that the model has to classify among $N$ possible classes, and there are only $k$ examples of each class in $\Dtrain$, so that overall the number of training examples is $M = kN$. We use the convention that the target labels $\vy_{m} \in \{0, 1\}^{N}$ are one-hot vectors.

\begin{figure}[t]
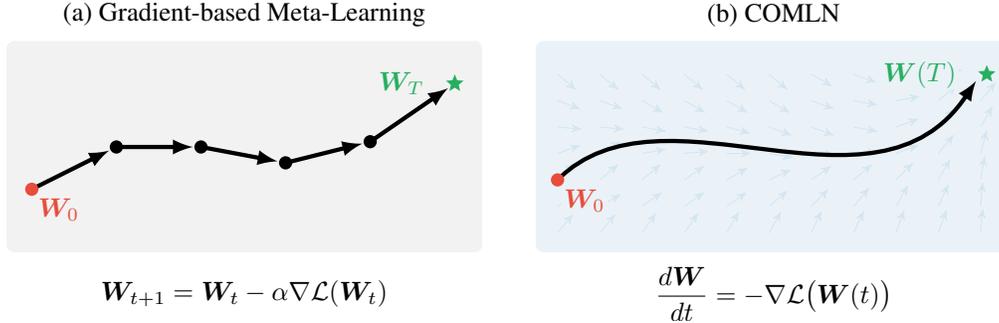

    \centering
    \includestandalone{figures/comln}
    \caption{Illustration of the adaptation process in (a) a gradient-based meta-learning algorithm, such as ANIL \citep{raghu2019anil}, where the adapted parameters $\mW_{T}$ are given after $T$ steps of gradient descent, and in (b) Continuous-Time Meta-Learning (COMLN), where the adapted parameters $\mW(T)$ are the result of following the dynamics of a gradient vector field up to time $T$.}
    \label{fig:comln}
\end{figure}

\subsection{Gradient-based meta-learning}
\label{sec:gradient-based-meta-learning}

Gradient-based meta-learning methods aim to learn an initialization such that the model is able to adapt to a new task via gradient descent. Such methods are often cast as a bi-level optimization process: adapting the task-specific parameters $\vtheta$ in the inner loop, and training the (task-agnostic) meta-parameters $\Phi$ and initialization $\vtheta_{0}$ in the outer loop. The meta-learning objective is:
\begin{align}
    \min_{\vtheta_{0}, \Phi}\ \ & \E_{\tau}\big[\gL(\vtheta^{\tau}_{T}, \Phi; \Dtest)\big] \label{eq:gradient-based-meta-learning-objective}\\
    \textrm{s.t.}\ \ & \vtheta^{\tau}_{t+1} = \vtheta^{\tau}_{t} - \alpha\nabla_{\vtheta} \gL(\vtheta^{\tau}_{t}, \Phi; \Dtrain) & \vtheta_{0}^{\tau} &= \vtheta_{0} & & \forall \tau \sim p(\tau), \label{eq:gradient-based-meta-learning-adaptation}
\end{align}
where $T$ is the number of inner loop updates. For example, in the case of MAML \citep{finn17maml}, there is no additional meta-parameter other than the initialization ($\Phi \equiv \emptyset$); in ANIL \citep{raghu2019anil}, $\vtheta$ are the parameters of the last layer, and $\Phi$ are the parameters of the shared embedding network; in CAVIA \citep{zintgraf2019fast}, $\vtheta$ are referred to as context parameters.

During meta-training, the model is trained over many tasks $\tau$. The task-specific parameters $\vtheta$ are learned via gradient descent on $\Dtrain$.
The meta-parameters are then updated by evaluating the error of the trained model on the test dataset $\Dtest$. At meta-testing time, the meta-trained model is adapted on $\Dtrain$---i.e. applying \cref{eq:gradient-based-meta-learning-adaptation} with the learned meta-parameters $\vtheta_{0}$ and $\Phi$.

\subsection{Local sensitivity analysis of Ordinary Differential Equations}
\label{sec:local-sensitivity-analysis-odes}
Consider the following (autonomous) Ordinary Differential Equation (ODE):
\begin{align}
    \frac{d\vz}{dt} &= g\big(\vz(t); \vtheta\big) & \vz(0) &= \vz_{0}(\vtheta),
    \label{eq:ode}
\end{align}
where the dynamics $g$ and initial value $\vz_{0}$ may depend on some external parameters $\vtheta$, and integration is carried out from $0$ to some time $T$. \emph{Local sensitivity analysis} is the study of how the solution of this dynamical system responds to local changes in $\vtheta$; this effectively corresponds to calculating the derivative $d\vz(t)/d\vtheta$. We present here two methods to compute this derivative, with a special focus on their memory efficiency.

\paragraph{Adjoint sensitivity method} Based on the adjoint state \citep{pontryagin2018mathematical}, and taking its root in control theory \citep{lions2012non}, the \emph{adjoint sensitivity method} \citep{BrysonHo69,chavent1974identification} provides an efficient approach for evaluating derivatives of $\gL\big(\vz(T);\vtheta\big)$, a function of $\vz(T)$ the solution of the ODE in \cref{eq:ode}. This method, popularized lately by the literature on Neural ODEs \citep{chen2018neuralode}, requires the integration of the adjoint equation
\begin{align}
    \frac{d\va}{dt} &= -\va(t)\frac{\partial g\big(\vz(t);\vtheta\big)}{\partial \vz(t)} & \va(T) &= \frac{d\gL\big(\vz(T);\vtheta\big)}{d\vz(T)},
    \label{eq:adjoint-equation}
\end{align}
backward in time. The adjoint sensitivity method can be viewed as a continuous-time counterpart to backpropagation, where the forward pass would correspond to integrating \cref{eq:ode} forward in time from $0$ to $T$, and the backward pass to integrating \cref{eq:adjoint-equation} backward in time from $T$ to $0$.

One possible implementation, reminiscent of backpropagation through time (BPTT), is to store the intermediate values of $\vz(t)$ during the forward pass, and reuse them to compute the adjoint state during the backward pass. While several sophisticated checkpointing schemes have been proposed \citep{serban2003cvodes,DBLP:journals/corr/abs-1902-10298}, with different compute/memory trade-offs, the memory requirements of this approach typically grow with $T$; this is similar to the memory limitations standard gradient-based meta-learning methods suffer from as the number of gradient steps increases. An alternative is to augment the adjoint state $\va(t)$ with the original state $\vz(t)$, and to solve this augmented dynamical system backward in time \citep{chen2018neuralode}. This has the notable advantage that the memory requirements are now independent of $T$, since $\vz(t)$ are no longer stored during the forward pass, but they are recomputed on the fly during the backward pass.

\paragraph{Forward sensitivity method} While the adjoint method is related to reverse-mode automatic differentiation (backpropagation), the \emph{forward sensitivity method} \citep{feehery1997efficient,leis1988simultaneous,maly1996numerical,caracotsios1985sensitivity}, on the other hand, can be viewed as the continuous-time counterpart to forward (tangent-linear) mode differentiation \citep{Griewank2008}. This method is based on the fact that the derivative $\gS(t) \triangleq d\vz(t)/d\vtheta$ is the solution of the so-called \emph{forward sensitivity equation}

\begin{align}
    \frac{d\gS}{dt} &= \frac{\partial g\big(\vz(t); \vtheta\big)}{\partial \vz(t)}\gS(t) + \frac{\partial g\big(\vz(t);\vtheta\big)}{\partial \vtheta} & \gS(0) &= \frac{\partial \vz_{0}}{\partial \vtheta}.
    \label{eq:forward-sensitivity-equations}
\end{align}

This equation can be found throughout the literature in optimal control and system identification \citep{betts2010optimalcontrol,Biegler2010}. Unlike the adjoint method, which requires an explicit forward \emph{and} backward pass, the forward sensitivity method only requires the integration forward in time of the original ODE in \cref{eq:ode}, augmented by the sensitivity state $\gS(t)$ with the dynamics above. The memory requirements of the forward sensitivity method do not scale with $T$ either, but it now requires storing $\gS(t)$, which may be very large; we will come back to this problem in \cref{sec:challenges-optimization-meta-learning-objective}. We will simply note here that in discrete-time, this is the same issue afflicting forward-mode training of RNNs with real-time recurrent learning \citep[RTRL;][]{Williams1989}, or other meta-learning algorithms \citep{Sutton1992,franceschi2017forward,Xu2018}.

\section{Continuous-time adaptation}
\label{sec:continuous-time-adaptation}
In the limit of infinitely small steps, some optimization algorithms can be viewed as the solution trajectory of a differential equation. This point of view has often been taken to analyze their behavior \citep{platt88constrained,wilson2016lyapunov,su2014differential,orvieto2019shadowing}. In fact, some optimization algorithms such as gradient descent with momentum have even been introduced initially from the perspective of dynamical systems \citep{polyak1964heavyball}. As the simplest example, gradient descent with a constant step size $\alpha \rightarrow 0^{+}$ (i.e. $\alpha$ tends to $0$ by positive values) corresponds to following the dynamics of an autonomous ODE called a \emph{gradient vector field}
\begin{align}
    \vz_{t+1} &= \vz_{t} - \alpha \nabla f(\vz_{t}) & \underset{\alpha \rightarrow 0^{+}}{\longrightarrow} && \frac{d\vz}{dt} &= -\nabla f\big(\vz(t)\big),
    \label{eq:gradient-vector-field}
\end{align}
where the iterate $\vz(t)$ is now a continuous function of time $t$. The solution of this dynamical system is uniquely defined by the choice of the initial condition $\vz(0) = \vz_{0}$.

\subsection{Continuous-time meta-learning}
\label{sec:continuous-time-meta-learning}
In gradient-based meta-learning, the task-specific adaptation with gradient descent may also be replaced by a gradient vector field in the limit of infinitely small steps. Inspired by prior work in meta-learning \citep{raghu2019anil,javed2019oml}, we assume that an embedding network $f_{\Phi}$ with meta-parameters $\Phi$ is shared across tasks, and only the parameters $\mW$ of a task-specific linear classifier are adapted, starting at some initialization $\mW_{0}$. Instead of being the result of a few steps of gradient descent though, the final parameters $\mW(T)$ now correspond to integrating an ODE similar to \cref{eq:gradient-vector-field} up to a certain horizon $T$, with the initial conditions $\mW(0) = \mW_{0}$. We call this new meta-learning algorithm \underline{Co}ntinuous-Time \underline{M}eta-\underline{L}earni\underline{n}g\footnote{COMLN is pronounced \emph{chameleon}.} (COMLN).

Treating the learning algorithm as a continuous-time process has the notable advantage that the adapted parameter $\mW(T)$ is now differentiable wrt. the time horizon $T$ \citep[][Chap. 7]{wiggins1990dynamicalsystems}, in addition to being differentiable wrt. the initial conditions $\mW_{0}$---which plays a central role in gradient-based meta-learning, as described in \cref{sec:gradient-based-meta-learning}. This allows us to view $T$ as a meta-parameter on par with $\Phi$ and $\mW_{0}$, and to effectively optimize the amount of adaptation using stochastic gradient descent (SGD). The meta-learning objective of COMLN can be written as
\begin{align}
    \min_{\Phi, \mW_{0}, T}\ \ & \E_{\tau}\big[\gL\big(\mW_{\tau}(T);f_{\Phi}(\Dtest)\big)\big] \label{eq:comln-meta-learning-objective}\\
    \textrm{s.t.}\ \ & \frac{d\mW_{\tau}}{dt} = -\nabla \gL\big(\mW_{\tau}(t);f_{\Phi}(\Dtrain)\big) &\mW_{\tau}(0) &= \mW_{0} & \forall \tau \sim p(\tau),\label{eq:comln-meta-learning-adaptation}
\end{align}
where $f_{\Phi}(\Dtrain) = \{(f_{\Phi}(\vx_{m}), \vy_{m}) \mid (\vx_{m}, \vy_{m}) \in \Dtrain\}$ is the embedded training dataset, and $f_{\Phi}(\Dtest)$ is defined similarly for $\Dtest$. In practice, adaptation is implemented using a numerical integration scheme based on an iterative discretization of the problem, such as Runge-Kutta methods. Although a complete discussion of numerical solvers is outside of the scope of this paper, we recommend \citep{butcher2008numerical} for a comprehensive overview of numerical methods for solving ODEs.

\subsection{The challenges of optimizing the meta-learning objective}
\label{sec:challenges-optimization-meta-learning-objective}
\begin{figure}[t]
    \centering
    \begin{minipage}[c]{0.35\linewidth}
        \begin{tikzpicture}[every node/.style={inner sep=0pt}, Wlabel/.style={font=\scriptsize, fill=white}]
            \node[] at (0, 0) {\includegraphics[width=\linewidth]{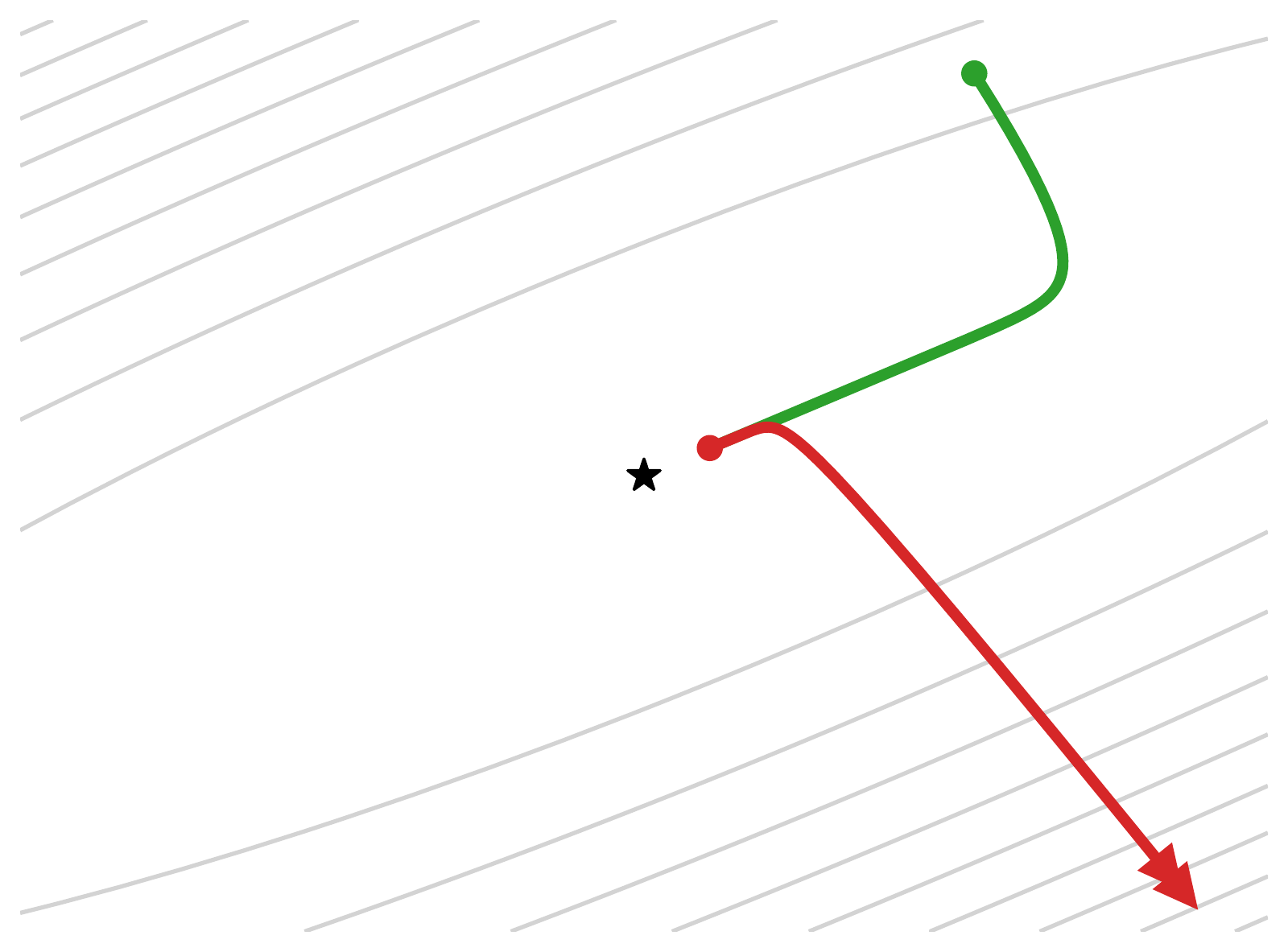}};
            \node[Wlabel, inner sep=2pt] at (0.8, 1.3) {$\mW(0)$};
            \node[Wlabel] at (-0.1, 0.4) {$\mW(T)$};
            \node[Wlabel] at (0, -0.3) {$\mW^{\star}$};
            \node[anchor=south west, draw=gray!20, fill=white, thick, inner sep=5pt, rounded corners=2pt] at (-2.2, -1.6) {\begin{tikzpicture}[y=10pt]
                \draw[-, draw=C2, very thick, font=\scriptsize] (0, 0) -- ++(0:1.5em) node[pos=1, anchor=west, xshift=5pt] {Forward};
                \draw[-, draw=C3, very thick, font=\scriptsize] (0, -1) -- ++(0:1.5em) node[pos=1, anchor=west, xshift=5pt] {Backward};
            \end{tikzpicture}};
        \end{tikzpicture}
    \end{minipage}\hspace{2ex}
    \begin{minipage}[c]{0.6\linewidth}
    \caption{Numerical instability of the adjoint method applied to the gradient vector field of a quadratic loss function. The trajectory in green starting at $\mW(0)$ corresponds to the integration of the dynamical system in \cref{eq:comln-meta-learning-adaptation} forward in time up to $T$, and the trajectory in red starting at $\mW(T)$ corresponds to its integration backward in time. Note that $T$ was chosen so that $\mW(T)$ does not reach the equilibrium/minimum of the loss $\mW^{\star}$.}
    \label{fig:adjoint_unstable}
    \end{minipage}
\end{figure}

In order to minimize the meta-learning objective of COMLN, it is common practice to use (stochastic) gradient methods; that requires computing its derivatives wrt. the meta-parameters, which we call \emph{meta-gradients}. Our primary goal is to devise an algorithm whose memory requirements do not scale with the amount of adaptation $T$; this would contrast with standard gradient-based meta-learning methods that backpropagate through a sequence of gradient steps (similar to BPTT), where the intermediate parameters are stored during adaptation (i.e. $\vtheta^{\tau}_{t}$ for all $t$ in \cref{eq:gradient-based-meta-learning-adaptation}). Since this objective involves the solution $\mW(T)$ of an ODE, we can use either the adjoint method, or the forward sensitivity method, in order to compute the derivatives wrt. $\Phi$ and $\mW_{0}$ (see \cref{sec:local-sensitivity-analysis-odes}).

Although the adjoint method has proven to be an effective strategy for learning Neural ODEs, in practice computing the state $\mW(t)$ backward in time is numerically unstable when applied to a gradient vector field like the one in \cref{eq:comln-meta-learning-adaptation}, even for convex loss functions. \cref{fig:adjoint_unstable} shows an example where the trajectory of $\mW(t)$ recomputed backward in time (in red) diverges significantly from the original trajectory (in green) on a quadratic loss function, even though the two should match exactly in theory since they follow the same dynamics. Intuitively, recomputing $\mW(t)$ backward in time for a gradient vector field requires doing gradient \emph{ascent} on the loss function, which is prone to compounding numerical errors; this is closely related to the loss of entropy observed by \citet{maclaurin2015gradient}. This divergence makes the backward trajectory of $\mW(t)$ unreliable to find the adjoint state, ruling out the adjoint sensitivity method for computing the meta-gradients in COMLN.

The forward sensitivity method addresses this shortcoming by avoiding the backward pass altogether. However, it can also be particularly expensive here in terms of memory requirements, since the sensitivity state $\gS(t)$ in \cref{sec:local-sensitivity-analysis-odes} now corresponds to Jacobian matrices, such as $d\mW(t)/d\mW_{0}$. As the size $d$ of the feature vectors returned by $f_{\Phi}$ may be very large, this $Nd \times Nd$ Jacobian matrix would be almost impossible to store in practice; for example in our experiments, it can be as large as $d = 16,\!000$ for a ResNet-12 backbone. In \cref{sec:meta-gradient}, we will show how to apply forward sensitivity in a memory-efficient way, by leveraging the structure of the loss function. This is achieved by carefully decomposing the Jacobian matrices into smaller pieces that follow specific dynamics. We show in \cref{app:proofs-stability} that unlike the adjoint method, this process is stable.

\subsection{Connection with Almost No Inner-Loop (ANIL)}
\label{sec:connection-anil}
Similarly to ANIL \citep{raghu2019anil}, COMLN only adapts the parameters $\mW$ of the last linear layer of the neural network. There is a deeper connection between both algorithms though: while our description of the adaptation in COMLN (Eq. \ref{eq:comln-meta-learning-adaptation}) was independent of the choice of the ODE solver used to find the solution $\mW(T)$ in practice, if we choose an explicit Euler scheme (\citealp{euler1913eulermethod}; roughly speaking, discretizing \cref{eq:gradient-vector-field} from right to left), then the adaptation of COMLN becomes functionally equivalent to ANIL. However, this equivalence can greatly benefit from the memory-efficient algorithm to compute the meta-gradients described in \cref{sec:memory-efficient-meta-gradients}, based on the forward sensitivity method. This means that using the methods devised here for COMLN, we can effectively compute the meta-gradients of ANIL with a constant memory cost wrt. the number of gradient steps of adaptation, instead of relying on backpropagation (see also \cref{sec:memory-efficiency}).

\section{Memory-efficient meta-gradients}
\label{sec:memory-efficient-meta-gradients}
For some fixed task $\tau$ and $(\vx_{m}, \vy_{m}) \in \Dtrain$, let $\vphi_{m} = f_{\Phi}(\vx_{m}) \in \sR^{d}$ be the embedding of input $\vx_{m}$ through the feature extractor $f_{\Phi}$. Since we are confronted with a classification problem, the loss function of choice $\gL(\mW)$ is typically the cross-entropy loss. \citet{bohning1992multinomiallogreg} showed that the gradient of the cross-entropy loss wrt. $\mW$ can be written as
\begin{align}
    \nabla\gL\big(\mW;f_{\Phi}(\Dtrain)\big) &= \frac{1}{M}\sum_{m=1}^{M} (\vp_{m} - \vy_{m})\vphi_{m}^{\top},
    \label{eq:decomposition-gradient}
\end{align}
where $\vp_{m} = \softmax(\mW\vphi_{m})$ is the vector of probabilities returned by the neural network. The key observation here is that the gradient can be decomposed as a sum of $M$ rank-one matrices, where the feature vectors $\vphi_{m}$ are independent of $\mW$. Therefore we can fully characterize the gradient of the cross-entropy loss with $M$ vectors $\vp_{m} - \vy_{m} \in \sR^{N}$, as opposed to the full $N\times d$ matrix. This is particularly useful in the context of few-shot classification, where the number of training examples $M$ is small, and typically significantly smaller than the embedding size $d$.

\subsection{Decomposition of the meta-gradients}
\label{sec:meta-gradient}
We saw in \cref{sec:challenges-optimization-meta-learning-objective} that the forward sensitivity method was the only stable option to compute the meta-gradients of COMLN. However, naively applying the forward sensitivity equation would involve quantities that typically scale with $d^{2}$, which can be too expensive in practice. Using the structure of \cref{eq:decomposition-gradient}, the Jacobian matrices appearing in the computation of the meta-gradients for COMLN can be decomposed in such a way that only small quantities will depend on time.

\paragraph{Meta-gradients wrt. $\mW_{0}$} By the chain rule of derivatives, it is sufficient to compute the Jacobian matrix $d\mW(T)/d\mW_{0}$ in order to obtain the meta-gradient wrt. $\mW_{0}$. We show in \cref{app:proof-W0} that the sensitivity state $d\mW(t)/d\mW_{0}$ can be decomposed as:
\begin{equation}
    \frac{d\mW(t)}{d\mW_{0}} = \mI - \sum_{i=1}^{M}\sum_{j=1}^{M}\mB_{t}[i,j]\otimes \vphi_{i}\vphi_{j}^{\top},
    \label{eq:jacobian-W0}
\end{equation}
where $\otimes$ is the Kronecker product, and each $\mB_{t}[i, j]$ is an $N \times N$ matrix, solution of the following system of ODEs\footnote{Here we used the notation $\mB_{t}[i, j]$ to make the dependence on $t$ explicit, without overloading the notation. A more precise notation would be $\mB[i,j](t)$.}
\begin{align}
    \frac{d\mB_{t}[i, j]}{dt} &= \mathbbm{1}(i = j)\mA_{i}(t) - \mA_{i}(t)\sum_{m=1}^{M}(\vphi_{i}^{\top}\vphi_{m})\mB_{t}[m,j] & \mB_{0}[i, j] &= \vzero,
    \label{eq:dynamical-system-B}
\end{align}
and $\mA_{i}(t)$, defined in \cref{app:proof-W0}, is also an $N \times N$ matrix that only depends on $\mW(t)$ and $\vphi_{i}$. The main consequence of this decomposition is that we can simply integrate the augmented ODE in $\big\{\mW(t), \mB_{t}[i, j]\big\}$ up to $T$ to obtain the desired Jacobian matrix, along with the adapted parameters $\mW(T)$. Furthermore, in contrast to naively applying the forward sensitivity method (see \cref{sec:challenges-optimization-meta-learning-objective}), the $M^{2}$ matrices $\mB_{t}[i, j]$ are significantly smaller than the full Jacobian matrix. In fact, we show in \cref{app:projection} that we can compute vector-Jacobian products---required for the chain rule---using only these smaller matrices, and without ever having to explicitly construct the full $Nd \times Nd$ Jacobian matrix $d\mW(t)/d\mW_{0}$ with \cref{eq:jacobian-W0}.

\paragraph{Meta-gradients wrt. $\Phi$} To backpropagate the error through the embedding network $f_{\Phi}$, we need to first compute the gradients of the outer-loss wrt. the feature vectors $\vphi_{m}$. Again, by the chain rule, we can get these gradients with the Jacobian matrices $d\mW(T)/d\vphi_{m}$. Similar to \cref{eq:jacobian-W0}, we can show that these Jacobian matrices can be decomposed as:
\begin{equation}
    \frac{d\mW(t)}{d\vphi_{m}} = -\bigg[\vs_{m}(t) \otimes \mI + \sum_{i=1}^{M}\mB_{t}[i, m]\mW_{0}\otimes \vphi_{i} + \sum_{i=1}^{M}\sum_{j=1}^{M}\vz_{t}[i, j, m]\vphi_{j}^{\top}\otimes \vphi_{i}\bigg],
    \label{eq:jacobian-phi}
\end{equation}
where $\vs_{m}(t)$ and $\vz_{t}[i, j, m]$ are vectors of size $N$, that follow some dynamics; the exact form of this system of ODEs, as well as the proof of this decomposition, are given in \cref{app:proof-phi}. Crucially, the only quantities that depend on time are small objects independent of the embedding size $d$. Following the same strategy as above, we can incorporate these vectors in the augmented ODE, and integrate it to get the necessary Jacobians. Once all the $d\mW(t)/d\vphi_{m}$ are known, for all the training datapoints, we can apply standard backpropagation through $f_{\Phi}$ to obtain the meta-gradients wrt. $\Phi$.

\paragraph{Meta-gradient wrt. $T$} One of the major novelties of COMLN is the capacity to meta-learn the amount of adaptation using stochastic gradient descent. To compute the meta-gradient wrt. the time horizon $T$, we can directly borrow the results derived by \citet{chen2018neuralode} in the context of Neural ODEs, and apply it to our gradient vector field in \cref{eq:comln-meta-learning-adaptation} responsible for adaptation:
\begin{equation}
    \frac{d\gL\big(\mW(T);f_{\Phi}(\Dtest)\big)}{dT} = -\bigg[\frac{\partial \gL\big(\mW(T);f_{\Phi}(\Dtest)\big)}{\partial \mW(T)}\bigg]^{\top}\frac{\partial \gL\big(\mW(T);f_{\Phi}(\Dtrain)\big)}{\partial \mW(T)}.
\end{equation}
The proof is available in \cref{app:proof-T}. Interestingly, we find that this involves the alignment between the vectors of partial derivatives of the inner-loss and the outer-loss at $\mW(T)$, which appeared in various contexts in the meta-learning and the multi-task learning literature \citep{li2018learning,rothfuss2019promp,yu2020gradientsrugery,von2021sparsemaml}.

\subsection{Memory efficiency}
\label{sec:memory-efficiency}
Although naively applying the forward sensitivity method would be memory intensive, we have shown in \cref{sec:meta-gradient} that the Jacobians can be carefully decomposed into smaller pieces. It turns out that even the parameters $\mW(t)$ can be expressed using the vectors $\vs_{m}(t)$ from the decomposition in \cref{eq:jacobian-phi}; see \cref{app:proof-W} for details. As a consequence, to compute the adapted parameters $\mW(T)$ as well as all the necessary meta-gradients, it is sufficient to integrate a dynamical system in $\big\{\mB_{t}[i, j], \vs_{m}(t), \vz_{t}[i, j, m]\big\}$ (see \cref{alg:comln-meta-training,alg:comln-dynamics-meta-training} in App. \ref{app:meta-training}), involving exclusively quantities that are independent of the embedding size $d$. Instead, the size of that system scales with $M$ the total number of training examples, which is typically much smaller than $d$ for few-shot classification.

\begin{table}[t]
    \caption{Memory required to compute meta-gradients for different algorithms. Exact: the method returns the exact meta-gradients. Full net.: the whole network is adapted, with a number of meta-parameters $|\vtheta|$. The requirements for checkpointing are taken from \citep{shaban2019truncated}. Note that typically $M \ll d$ in few-shot learning.}
    \label{tab:memory_usage}
    \centering
    \begin{tabular}{lccc}
        \toprule
        \textbf{Model} & \textbf{Exact} & \textbf{Full net.} & \textbf{Memory}\\
        \midrule
        MAML \citep{finn17maml} & \cmark & \cmark & $\gO(|\vtheta| \cdot T)$ \\
        ANIL \citep{raghu2019anil} & \cmark & \xmark & $\gO(Nd \cdot T)$ \\
        Checkpointing (every $\sqrt{T}$ steps) & \cmark & \cmark & $\gO(|\vtheta| \cdot \sqrt{T})$ \\
        iMAML \citep{rajeswaran2019imaml} & \xmark & \cmark & $\gO(|\vtheta|)$ \\
        \midrule
        Forward sensitivity (naive) & \cmark & \xmark & $\gO(N^{2}d^{2} + MNd^{2})$\\
        COMLN & \cmark & \xmark & $\gO(M^{2}N^{2} + M^{3}N)$ \\
        \bottomrule
    \end{tabular}
\end{table}

\cref{tab:memory_usage} shows a comparison of the memory cost for different algorithms. It is important to note that contrary to other standard gradient-based meta-learning methods, the memory requirements of COMLN do not scale with the amount of adaptation $T$ (i.e. the number of gradient steps in MAML \& ANIL), while still returning the exact meta-gradients---unlike iMAML \citep{rajeswaran2019imaml}, which only returns an approximation of the meta-gradients. We verified empirically this efficiency, both in terms of memory and computation costs, in \cref{sec:empirical-efficiency}.

\section{Experiments}
\label{sec:experiments}
For our embedding network $f_{\Phi}$, we consider two commonly used architectures in meta-learning: Conv-4, a convolutional neural network with 4 convolutional blocks, and ResNet-12, a 12-layer residual network \citep{he2016resnet}. Note that following \citet{lee2019metaoptnet}, ResNet-12 does not include a global pooling layer at the end of the network, leading to feature vectors with embedding dimension $d = 16,\!000$. Additional details about these architectures are given in \cref{app:experiment-details}. To compute the adapted parameters and the meta-gradients in COMLN, we integrate the dynamical system described in \cref{sec:memory-efficiency} with a 4th order Runge-Kutta method with a Dormand Prince adaptive step size \citep{runge1895numerische,dormand1980family}; we will come back to the choice of this numerical solver in \cref{sec:empirical-efficiency}. Furthermore to ensure that $T > 0$, we parametrized it with an exponential activation.

\subsection{Few-shot image classification}
\label{sec:few-shot-image-classification}
We evaluate COMLN on two standard few-shot image classification benchmarks: the \textit{mini}ImageNet \citep{vinyals2016matching} and the \textit{tiered}ImageNet datasets \citep{ren2018meta}, both datasets being derived from ILSVRC-2012 \citep{russakovsky2015imagenet}. The process for creating tasks follows the standard procedure from the few-shot classification literature \citep{santoro2016mann}, with distinct classes between the different splits. \textit{mini}Imagenet consists of 100 classes, split into 64 training classes, 16 validation classes, and 20 test classes. \textit{tiered}ImageNet consists of 608 classes grouped into 34 high-level categories from ILSVRC-2012, split into 20 training, 6 validation, and 8 testing categories---corresponding to 351/97/160 classes respectively; \citet{ren2018meta} argue that separating data according to high-level categories results in a more difficult and more realistic regime.

\begin{table}[t]
\centering
\caption{Few-shot classification on \textit{mini}ImageNet \& \textit{tiered}ImageNet. The average accuracy (\%) on 1,000 held-out meta-test tasks is reported with $95\%$ confidence interval. \cmark \ denotes gradient-based meta-learning algorithms. $\star$ denotes baseline results we executed using the official implementations.}
\label{tab:results-architecture}
\begin{adjustbox}{center}
\small
\begin{tabular}{lclcccc}
\toprule
\multirow{2}{*}{\textbf{Model}} & & \multirow{2}{*}{\textbf{Backbone}} & \multicolumn{2}{c}{\textbf{\textit{mini}ImageNet 5-way}} & \multicolumn{2}{c}{\textbf{\textit{tiered}ImageNet 5-way}} \\ \cmidrule(lr){4-5} \cmidrule(lr){6-7}
 & & & \textbf{1-shot} & \textbf{5-shot} & \textbf{1-shot} & \textbf{5-shot} \\ \midrule
MAML \citep{finn17maml} & \cmark & Conv-4 & $48.70 \pm 1.84^{\phantom{\star}}$ & $63.11 \pm 0.92^{\phantom{\star}}$ & $51.67 \pm 1.81^{\phantom{\star}}$ & $70.30 \pm 1.75^{\phantom{\star}}$ \\
ANIL \citep{raghu2019anil} & \cmark & Conv-4 & $46.30 \pm 0.40^{\phantom{\star}}$ & $61.00 \pm 0.60^{\phantom{\star}}$ & $49.35 \pm 0.26^{\phantom{\star}}$ & $65.82 \pm 0.12^{\phantom{\star}}$ \\
Meta-SGD \citep{li2017metasgd} & \cmark & Conv-4 & $50.47 \pm 1.87^{\phantom{\star}}$ & $64.03 \pm 0.94^{\phantom{\star}}$ & $52.80 \pm 0.44^{\phantom{\star}}$ & $62.35 \pm 0.26^{\phantom{\star}}$ \\
CAVIA \citep{zintgraf2019fast} & \cmark & Conv-4 & $51.82 \pm 0.65^{\phantom{\star}}$ & $65.85 \pm 0.55^{\phantom{\star}}$ & $52.41 \pm 2.64^{\star}$ & $67.55 \pm 2.05^{\star}$ \\
iMAML \citep{rajeswaran2019imaml} & \cmark & Conv-4 & $49.30 \pm 1.88^{\phantom{\star}}$ & $59.77 \pm 0.73^{\star}$ & $38.54 \pm 1.37^{\star}$ & $60.24 \pm 0.76^{\star}$ \\
MetaOptNet-RR \citep{lee2019metaoptnet} &  & Conv-4 & $\mathbf{53.23 \pm 0.59^{\phantom{\star}}}$ & $69.51 \pm 0.48^{\phantom{\star}}$ & $54.63 \pm 0.67^{\phantom{\star}}$ & $\mathbf{72.11 \pm 0.59^{\phantom{\star}}}$ \\
MetaOptNet-SVM \citep{lee2019metaoptnet} &  & Conv-4 & $52.87 \pm 0.57^{\phantom{\star}}$ & $68.76 \pm 0.48^{\phantom{\star}}$ & $\mathbf{54.71 \pm 0.67^{\phantom{\star}}}$ & $71.79 \pm 0.59^{\phantom{\star}}$ \\ \midrule
\textbf{COMLN (Ours)} & \cmark & Conv-4 & $53.01 \pm 0.62^{\phantom{\star}}$ & $\mathbf{70.54 \pm 0.54^{\phantom{\star}}}$ & $54.30 \pm 0.69^{\phantom{\star}}$ & $71.35 \pm 0.57^{\phantom{\star}}$\\ \midrule
MAML \citep{finn17maml} & \cmark & ResNet-12 & $49.92 \pm 0.65^{\phantom{\star}}$ & $63.93 \pm 0.59^{\phantom{\star}}$ & $55.37 \pm 0.74^{\phantom{\star}}$ & $72.93 \pm 0.60^{\phantom{\star}}$\\
ANIL \citep{raghu2019anil} & \cmark & ResNet-12 & $49.65 \pm 0.65^{\phantom{\star}}$ & $59.51 \pm 0.56^{\phantom{\star}}$ & $54.77 \pm 0.76^{\phantom{\star}}$ & $69.28 \pm 0.67^{\phantom{\star}}$\\
MetaOptNet-RR \citep{lee2019metaoptnet} &  & ResNet-12 & $61.41 \pm 0.61^{\phantom{\star}}$ & $77.88 \pm 0.46^{\phantom{\star}}$ & $65.36 \pm 0.71^{\phantom{\star}}$ & $81.34 \pm 0.52^{\phantom{\star}}$ \\
MetaOptNet-SVM \citep{lee2019metaoptnet} &  & ResNet-12 & $\mathbf{62.64 \pm 0.61^{\phantom{\star}}}$ & $\mathbf{78.63 \pm 0.46^{\phantom{\star}}}$ & $\mathbf{65.99 \pm 0.72^{\phantom{\star}}}$ & $\mathbf{81.56 \pm 0.53^{\phantom{\star}}}$\\ \midrule
\textbf{COMLN (Ours)} & \cmark & ResNet-12 & $59.26 \pm
0.65^{\phantom{\star}}$ & $77.26 \pm 0.49^{\phantom{\star}}$ & $62.93 \pm 0.71^{\phantom{\star}}$ & $81.13 \pm 0.53^{\phantom{\star}}$ \\ \bottomrule
\end{tabular}
\end{adjustbox}
\end{table}

\cref{tab:results-architecture} shows the average accuracies of COMLN compared to various meta-learning methods, be it gradient-based or not. For both backbones, COMLN decisively outperforms all other gradient-based meta-learning methods. Compared to methods that explicitly backpropagate through the learning process, such as MAML or ANIL, the performance gain shown by COMLN could be credited to the longer adaptation $T$ it learns, as opposed to a small number of gradient steps---usually about $10$ steps; this was fully enabled by our memory-efficient method to compute meta-gradients, which does not scale with the length of adaptation anymore (see \cref{sec:memory-efficiency}). We analyse the evolution of $T$ during meta-training for these different settings in \cref{app:analysis-learned-horizon}. In almost all settings, COMLN is even closing the gap with a strong non-gradient-based method like MetaOptNet; the remainder may be explained in part by the training choices made by \citet{lee2019metaoptnet} (see \cref{app:experiment-details} for details).

\subsection{Empirical efficiency of COMLN}
\label{sec:empirical-efficiency}
In \cref{sec:memory-efficiency}, we showed that our algorithm to compute the meta-gradients, based on forward differentiation, had a memory cost independent of the length of adaptation $T$. We verify this empirically in \cref{fig:conv4_memory_runtime}, where we compare the memory required by COMLN and other methods to compute the meta-gradients on a single task, with a Conv-4 backbone (\cref{fig:resnet12_memory_runtime} in \cref{app:details-measuring-efficiency} shows similar results for ResNet-12). To ensure an aligned comparison between discrete and continuous time, we use a conversion corresponding to a learning rate $\alpha = 0.01$ in \cref{eq:gradient-based-meta-learning-adaptation}; see \cref{app:details-measuring-efficiency} for a justification. As expected, the memory cost increases for both MAML and ANIL as the number of gradient steps increases, while it remains constant for iMAML and COMLN. Interestingly, we observe that the cost of COMLN is equivalent to the cost of running ANIL for a small number of steps, showing that the additional cost of integrating the augmented ODE in \cref{sec:memory-efficiency} to compute the meta-gradients is minimal.

Increasing the length of adaptation also has an impact on the time it takes to compute the adapted parameters, and the meta-gradients. \cref{fig:conv4_memory_runtime} (right) shows how the runtime increases with the amount of adaptation for different algorithms. We see that the efficiency of COMLN depends on the numerical solver used. When we use a simple explicit-Euler scheme, the time taken to compute the meta-gradients matches the one of ANIL; this behavior empirically confirms our observation in \cref{sec:connection-anil}. When we use an adaptive numerical solver, such as Runge-Kutta (RK) with a Dormand Prince step size, this computation can be significantly accelerated, thanks to the smaller number of function evaluations. In practice, we show in \cref{app:choice-numerical-solver} that the choice of the ODE solver has a very minimal impact on the accuracy.

\begin{figure}[t]
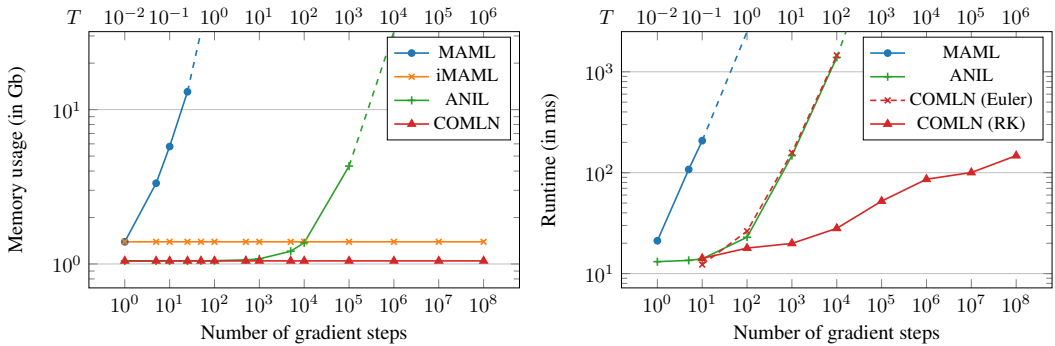

    \centering
    \includestandalone[width=\linewidth]{figures/pgfplots/conv4_memory_runtime}
    \caption{Empirical efficiency of COMLN on a single 5-shot 5-way task, with a Conv-4 backbone. (Left) Memory usage for computing the meta-gradients as a function of the number of inner-gradient steps. The extrapolated dashed lines correspond to the method reaching the memory capacity of a Tesla V100 GPU with 32Gb of memory. (Right) Average time taken (in ms) to compute the exact meta-gradients. The extrapolated dashed lines correspond to the method taking over 2 seconds.}
    \label{fig:conv4_memory_runtime}
    \vspace*{-1em}
\end{figure}

\section{Related work}
\label{sec:related-work}
We are interested in meta-learning \citep{bengio1991,schmidhuber1987evolutionary,thrun2012learningtolearn}, and in particular we focus on gradient-based meta-learning methods \citep{finn2018phd}, where the learning rule is based on gradient descent. While in MAML \citep{finn17maml} the whole network was updated during this process, follow-up works have shown that it is generally sufficient to share most parts of the neural network, and to only adapt a few layers \citep{raghu2019anil,chen2019modular,tian2020rethinking}. Even though this hypothesis has been challenged recently \citep{arnold2021embedding}, COMLN also updates only the last layer of a neural network, and therefore can be viewed as a continuous-time extension of ANIL \citep{raghu2019anil}; see also \cref{sec:connection-anil}. With its shared embedding network across tasks, COMLN is also connected to metric-based meta-learning methods \citep{vinyals2016matching,snell2017protonet, sung2018learning,bertinetto2018r2d2,lee2019metaoptnet}.

\citet{zhang2021metanode} introduced a formulation where the adaptation of prototypes follows a gradient vector field, but finally opted for modeling it as a Neural ODE \citep{chen2018neuralode}. Concurrent to our work, \citet{li2021metalearningadjoint} also propose a formulation of adaptation based on a gradient vector field, and use the adjoint method to compute the meta-gradients, despite the challenges we identified in Sec.~\ref{sec:challenges-optimization-meta-learning-objective}; \citet{li2021metalearningadjoint} also acknowledge these challenges, and they limit their analysis to relatively small values of $T$ (in comparison to the ones learned by COMLN), hence further from the equilibrium, to circumvent this issue altogether. \citet{zhou2021metantk} also uses a gradient vector field to motivate a novel method with a closed-form adaptation; COMLN still explicitly updates the parameters following the gradient vector field, since there is no closed-form solution of \cref{eq:comln-meta-learning-adaptation}. As mentioned in \cref{sec:continuous-time-adaptation}, treating optimization as a continuous-time process has been used to analyze the convergence of different optimization algorithms, including the meta-optimization of MAML \citep{xu2021metalearningcontinuous}, or to introduce new meta-optimizers based on different integration schemes \citep{im2019mamlrk}. \citet{guo2021personalized} also uses meta-learning to learn new integration schemes for ODEs. Although this is a growing literature at the intersection of meta-learning and dynamical systems, our work is the first algorithm that uses a gradient vector field for adaptation in meta-learning (see also \citet{li2021metalearningadjoint}).

Beyond the memory efficiency of our method, one of the main benefits of the continuous-time perspective is that COMLN is capable of learning when to stop the adaptation, as opposed to taking a number of gradient steps fixed ahead of time. However unlike \citet{chen2020learning2stop}, where the number of gradient steps are optimized (up to a maximal number) with variational methods, we incorporate the amount of adaptation as a (continuous) meta-parameter that can be learned using SGD. To compute the meta-gradients, which is known to be challenging for long sequences in gradient-based meta-learning, we use forward-mode differentiation as an alternative to backpropagation through the learning process, similar to prior work in meta-learning \citep{franceschi2017forward,jiwoong2021online} and hyperparameter optimization over long horizons \citep{micaelli2021gradient}. This yields the exact meta-gradients in constant memory, without any assumption on the optimality of the inner optimization problem, which is necessary when using the normal equations \citep{bertinetto2018r2d2}, or to apply implicit differentiation \citep{rajeswaran2019imaml}.

\section{Conclusion and Future Work}
\label{sec:conclusion}
In this paper, we have introduced \emph{Continuous-Time Meta-Learning} (COMLN), a novel algorithm that treats the adaptation in meta-learning as a continuous-time process, by following the dynamics of a gradient vector field up to a certain time horizon $T$. One of the major novelties of treating adaptation in continuous time is that the amount of adaptation $T$ is now a continuous quantity, that can be viewed as a meta-parameter and can be learned using SGD, alongside the initial conditions and the parameters of the embedding network. In order to learn these meta-parameters, we have also introduced a novel practical algorithm based on forward mode automatic differentiation, capable of efficiently computing the exact meta-gradients using an augmented dynamical system. We have verified empirically that this algorithm was able to compute the meta-gradients in constant memory, making it the first gradient-based meta-learning approach capable of computing the exact meta-gradients with long sequences of adaptation using gradient methods. In practice, we have shown that COMLN significantly outperforms other standard gradient-based meta-learning algorithms.

In addition to having a single meta-parameter $T$ that drives the adaptation of all possible tasks, the fact that the time horizon can be learned with SGD opens up new possibilities for gradient-based methods. For example, we could imagine treating $T$ not as a shared meta-parameters, but as a task-specific parameter. This would allow the learning process to be more \emph{adaptive}, possibly with different behaviors depending on the difficulty of the task. This is left as a future direction of research.

\section*{Acknowledgements}
The authors are grateful to Samsung Electronics Co., Ldt., CIFAR, and IVADO for their funding and Calcul Qu\'{e}bec and Compute Canada for providing us with the computing resources.

\newpage
\section*{Reproducibility statement}
\label{sec:reproducibility-statement}

We provide in \cref{app:meta-training} a full description in pseudo-code of the meta-training procedure (\cref{alg:comln-meta-training}), along with the exact dynamics of the ODE (\cref{alg:comln-dynamics-meta-training}) and the projection operations (\cref{alg:comln-projection-W0,alg:comln-projection-phi}) to avoid explicitly building the Jacobian matrices to compute Jacobian-vector products (see \cref{sec:meta-gradient}).

We also provide in \cref{app:source-code} a snippet of code in JAX \citep{jax2018github} to compute the adapted parameters $\mW(T)$, as well as all the necessary objects $\big\{\mB_{t}[i, j], \vs_{m}(t), \vz_{t}[i, j, m]\big\}$ to compute all the meta-gradients (see \cref{sec:memory-efficiency}). We also give in Code Snippet \ref{code:jax-snippet-meta-gradients} the code to compute the meta-gradients wrt. the initialization $\mW_{0}$ and the integration time $T$. Computing the meta-gradients wrt. $\Phi$ involves non-minimal dependencies on Haiku \citep{haiku2020github}, and therefore is omitted here. The full code is available at \href{https://github.com/tristandeleu/jax-comln}{\texttt{https://github.com/tristandeleu/jax-comln}}.

\paragraph{Data generation \& hyperparameters} We used the \textit{mini}ImageNet and \textit{tiered}ImageNet datasets provided by \citet{lee2019metaoptnet} in order to create the 1-shot 5-way and 5-shot 5-way tasks for both datasets. During evaluation, a fixed set of $1,\!000$ tasks was sampled for each setting; this means that both architectures for COMLN have been evaluated using exactly the same data, to ensure direct comparison across backbones. A full description of all the hyperparameters used in COMLN is given in \cref{app:experiment-details}.

\paragraph{Reproducibility of baseline results} To the best of our ability, we have tried to report baseline results from existing work, to limit as much as possible the bias induced by running our own baseline experiments. The references of those works are given in \cref{tab:results-architecture-references}. We still had to run CAVIA and iMAML on the remaining settings, since these results have not been reported in the literature. For both methods, we used the data generation described above.
\begin{itemize}
    \item \emph{CAVIA}: We used the official implementation\footnote{\href{https://github.com/lmzintgraf/cavia/}{\texttt{https://github.com/lmzintgraf/cavia/}}}. We used the hyperparameters reported in \citep{zintgraf2019fast} for \textit{mini}ImageNet, and an architecture with $64$ filters.
    \item \emph{iMAML}: We used the official implementation\footnote{\href{https://github.com/aravindr93/imaml_dev}{\texttt{https://github.com/aravindr93/imaml\_dev}}}. We used the hyperparameters reported in \citep{rajeswaran2019imaml} for \textit{mini}ImageNet 1-shot 5-way.
\end{itemize}

\begin{table}[h]
\centering
\caption{References for the results provided in \cref{tab:results-architecture}: \protect\tikz[baseline=-0.6ex]{\protect\draw[fill=flat-blue!20] circle (0.8ex);} \citep{liu2018tpn}, \protect\tikz[baseline=-0.6ex]{\protect\draw[fill=flat-red!20] circle (0.8ex);} \citep{oh2020boil}, \protect\tikz[baseline=-0.6ex]{\protect\draw[fill=flat-green!20] circle (0.8ex);} \citep{aimen2021taskattended}, \protect\tikz[baseline=-0.6ex]{\protect\draw[fill=flat-purple!20] circle (0.8ex);} \citep{arnold2021uniform}, and \protect\tikz[baseline=-0.6ex]{\protect\draw[fill=flat-yellow!20] circle (0.8ex);} are reported in their respective references (under \emph{Model}). Recall that $\star$ denotes baseline results we executed using the official implementations.}
\label{tab:results-architecture-references}
\begin{adjustbox}{center}
\small
\begin{tabular}{lclcccc}
\toprule
\multirow{2}{*}{\textbf{Model}} & & \multirow{2}{*}{\textbf{Backbone}} & \multicolumn{2}{c}{\textbf{\textit{mini}ImageNet 5-way}} & \multicolumn{2}{c}{\textbf{\textit{tiered}ImageNet 5-way}} \\ \cmidrule(lr){4-5} \cmidrule(lr){6-7}
 & & & \textbf{1-shot} & \textbf{5-shot} & \textbf{1-shot} & \textbf{5-shot} \\ \midrule
MAML \citep{finn17maml} & \cmark & Conv-4 & \cellcolor{flat-yellow!20}$48.70 \pm 1.84^{\phantom{\star}}$ & \cellcolor{flat-yellow!20}$63.11 \pm 0.92^{\phantom{\star}}$ & \cellcolor{flat-blue!20}$51.67 \pm 1.81^{\phantom{\star}}$ & \cellcolor{flat-blue!20}$70.30 \pm 1.75^{\phantom{\star}}$ \\
ANIL \citep{raghu2019anil} & \cmark & Conv-4 & \cellcolor{flat-yellow!20}$46.30 \pm 0.40^{\phantom{\star}}$ & \cellcolor{flat-yellow!20}$61.00 \pm 0.60^{\phantom{\star}}$ & \cellcolor{flat-red!20}$49.35 \pm 0.26^{\phantom{\star}}$ & \cellcolor{flat-red!20}$65.82 \pm 0.12^{\phantom{\star}}$ \\
Meta-SGD \citep{li2017metasgd} & \cmark & Conv-4 & \cellcolor{flat-yellow!20}$50.47 \pm 1.87^{\phantom{\star}}$ & \cellcolor{flat-yellow!20}$64.03 \pm 0.94^{\phantom{\star}}$ & \cellcolor{flat-green!20}$52.80 \pm 0.44^{\phantom{\star}}$ & \cellcolor{flat-green!20}$62.35 \pm 0.26^{\phantom{\star}}$ \\
CAVIA \citep{zintgraf2019fast} & \cmark & Conv-4 & \cellcolor{flat-yellow!20}$51.82 \pm 0.65^{\phantom{\star}}$ & \cellcolor{flat-yellow!20}$65.85 \pm 0.55^{\phantom{\star}}$ & $52.41 \pm 2.64^{\star}$ & $67.55 \pm 2.05^{\star}$ \\
iMAML \citep{rajeswaran2019imaml} & \cmark & Conv-4 & \cellcolor{flat-yellow!20}$49.30 \pm 1.88^{\phantom{\star}}$ & $59.77 \pm 0.73^{\star}$ & $38.54 \pm 1.37^{\star}$ & $60.24 \pm 0.76^{\star}$ \\
MetaOptNet-RR \citep{lee2019metaoptnet} &  & Conv-4 & \cellcolor{flat-yellow!20}$\mathbf{53.23 \pm 0.59^{\phantom{\star}}}$ & \cellcolor{flat-yellow!20}$69.51 \pm 0.48^{\phantom{\star}}$ & \cellcolor{flat-yellow!20}$54.63 \pm 0.67^{\phantom{\star}}$ & \cellcolor{flat-yellow!20}$\mathbf{72.11 \pm 0.59^{\phantom{\star}}}$ \\
MetaOptNet-SVM \citep{lee2019metaoptnet} &  & Conv-4 & \cellcolor{flat-yellow!20}$52.87 \pm 0.57^{\phantom{\star}}$ & \cellcolor{flat-yellow!20}$68.76 \pm 0.48^{\phantom{\star}}$ & \cellcolor{flat-yellow!20}$\mathbf{54.71 \pm 0.67^{\phantom{\star}}}$ & \cellcolor{flat-yellow!20}$71.79 \pm 0.59^{\phantom{\star}}$ \\ \midrule
\textbf{COMLN (Ours)} & \cmark & Conv-4 & $53.01 \pm 0.62^{\phantom{\star}}$ & $\mathbf{70.54 \pm 0.54^{\phantom{\star}}}$ & $54.30 \pm 0.69^{\phantom{\star}}$ & $71.35 \pm 0.57^{\phantom{\star}}$\\ \midrule
MAML \citep{finn17maml} & \cmark & ResNet-12 & \cellcolor{flat-purple!20}$49.92 \pm 0.65^{\phantom{\star}}$ & \cellcolor{flat-purple!20}$63.93 \pm 0.59^{\phantom{\star}}$ & \cellcolor{flat-purple!20}$55.37 \pm 0.74^{\phantom{\star}}$ & \cellcolor{flat-purple!20}$72.93 \pm 0.60^{\phantom{\star}}$\\
ANIL \citep{raghu2019anil} & \cmark & ResNet-12 & \cellcolor{flat-purple!20}$49.65 \pm 0.65^{\phantom{\star}}$ & \cellcolor{flat-purple!20}$59.51 \pm 0.56^{\phantom{\star}}$ & \cellcolor{flat-purple!20}$54.77 \pm 0.76^{\phantom{\star}}$ & \cellcolor{flat-purple!20}$69.28 \pm 0.67^{\phantom{\star}}$\\
MetaOptNet-RR \citep{lee2019metaoptnet} &  & ResNet-12 & \cellcolor{flat-yellow!20}$61.41 \pm 0.61^{\phantom{\star}}$ & \cellcolor{flat-yellow!20}$77.88 \pm 0.46^{\phantom{\star}}$ & \cellcolor{flat-yellow!20}$65.36 \pm 0.71^{\phantom{\star}}$ & \cellcolor{flat-yellow!20}$81.34 \pm 0.52^{\phantom{\star}}$ \\
MetaOptNet-SVM \citep{lee2019metaoptnet} &  & ResNet-12 & \cellcolor{flat-yellow!20}$\mathbf{62.64 \pm 0.61^{\phantom{\star}}}$ & \cellcolor{flat-yellow!20}$\mathbf{78.63 \pm 0.46^{\phantom{\star}}}$ & \cellcolor{flat-yellow!20}$\mathbf{65.99 \pm 0.72^{\phantom{\star}}}$ & \cellcolor{flat-yellow!20}$\mathbf{81.56 \pm 0.53^{\phantom{\star}}}$\\ \midrule
\textbf{COMLN (Ours)} & \cmark & ResNet-12 & $59.26 \pm
0.65^{\phantom{\star}}$ & $77.26 \pm 0.49^{\phantom{\star}}$ & $62.93 \pm 0.71^{\phantom{\star}}$ & $81.13 \pm 0.53^{\phantom{\star}}$ \\ \bottomrule
\end{tabular}
\end{adjustbox}
\end{table}

\newpage
\bibliography{references}

\begin{thebibliography}{78}
\providecommand{\natexlab}[1]{#1}
\providecommand{\url}[1]{\texttt{#1}}
\expandafter\ifx\csname urlstyle\endcsname\relax
  \providecommand{\doi}[1]{doi: #1}\else
  \providecommand{\doi}{doi: \begingroup \urlstyle{rm}\Url}\fi

\bibitem[Aimen et~al.(2021)Aimen, Sidheekh, and
  Krishnan]{aimen2021taskattended}
Aroof Aimen, Sahil Sidheekh, and Narayanan~C Krishnan.
\newblock {Task Attended Meta-Learning for Few-Shot Learning}.
\newblock \emph{arXiv preprint}, 2021.

\bibitem[Arnold \& Sha(2021)Arnold and Sha]{arnold2021embedding}
S{\'e}bastien~MR Arnold and Fei Sha.
\newblock {Embedding Adaptation is Still Needed for Few-Shot Learning}.
\newblock \emph{arXiv preprint}, 2021.

\bibitem[Arnold et~al.(2021)Arnold, Dhillon, Ravichandran, and
  Soatto]{arnold2021uniform}
S{\'e}bastien~MR Arnold, Guneet~S Dhillon, Avinash Ravichandran, and Stefano
  Soatto.
\newblock {Uniform Sampling over Episode Difficulty}.
\newblock \emph{arXiv preprint}, 2021.

\bibitem[Baranchuk(2019)]{baranchuk2019memoryefficientmaml}
Dmitry Baranchuk.
\newblock {Memory Efficient MAML}, 2019.
\newblock URL \url{https://github.com/dbaranchuk/memory-efficient-maml}.

\bibitem[Bengio et~al.(1991)Bengio, Bengio, Cloutier, and Gecsei]{bengio1991}
Yoshua Bengio, Samy Bengio, Jocelyn Cloutier, and Jan Gecsei.
\newblock {Learning a Synaptic Learning Rule}.
\newblock \emph{International Joint Conference on Neural Networks}, 1991.

\bibitem[Bertinetto et~al.(2018)Bertinetto, Henriques, Torr, and
  Vedaldi]{bertinetto2018r2d2}
Luca Bertinetto, Joao~F Henriques, Philip~HS Torr, and Andrea Vedaldi.
\newblock {Meta-learning with Differentiable Closed-Form Solvers}.
\newblock \emph{arXiv preprint}, 2018.

\bibitem[Betts(2010)]{betts2010optimalcontrol}
John~T Betts.
\newblock \emph{{Practical Methods for Optimal Control and Estimation Using
  Nonlinear Programming}}.
\newblock SIAM, 2010.

\bibitem[Biegler(2010)]{Biegler2010}
Lorenz~T. Biegler.
\newblock \emph{Nonlinear Programming}.
\newblock Society for Industrial and Applied Mathematics, January 2010.

\bibitem[B{\"o}hning(1992)]{bohning1992multinomiallogreg}
Dankmar B{\"o}hning.
\newblock {Multinomial Logistic Regression Algorithm}.
\newblock \emph{Annals of the Institute of Statistical Mathematics}, 1992.

\bibitem[Bradbury et~al.(2018)Bradbury, Frostig, Hawkins, Johnson, Leary,
  Maclaurin, and Wanderman-Milne]{jax2018github}
James Bradbury, Roy Frostig, Peter Hawkins, Matthew~James Johnson, Chris Leary,
  Dougal Maclaurin, and Skye Wanderman-Milne.
\newblock {JAX}: composable transformations of {P}ython+{N}um{P}y programs,
  2018.
\newblock URL \url{http://github.com/google/jax}.

\bibitem[Bryson \& Ho(1969)Bryson and Ho]{BrysonHo69}
A.~E. Bryson and Y.~C. Ho.
\newblock \emph{Applied Optimal Control}.
\newblock Blaisdell, New York, 1969.

\bibitem[Butcher(2008)]{butcher2008numerical}
John~Charles Butcher.
\newblock \emph{{Numerical Methods for Ordinary Differential Equations}}.
\newblock Wiley, 2008.

\bibitem[Caracotsios \& Stewart(1985)Caracotsios and
  Stewart]{caracotsios1985sensitivity}
Makis Caracotsios and Warren~E Stewart.
\newblock Sensitivity analysis of initial value problems with mixed odes and
  algebraic equations.
\newblock \emph{Computers \& Chemical Engineering}, 9\penalty0 (4):\penalty0
  359--365, 1985.

\bibitem[Chavent et~al.(1974)Chavent, Goodson, and
  Polis]{chavent1974identification}
G~Chavent, RE~Goodson, and M~Polis.
\newblock Identification of parameter distributed systems.
\newblock \emph{Identification of function parameters in partial differential
  equations}, pp.\  31--48, 1974.

\bibitem[Chen et~al.(2018)Chen, Rubanova, Bettencourt, and
  Duvenaud]{chen2018neuralode}
Ricky T.~Q. Chen, Yulia Rubanova, Jesse Bettencourt, and David Duvenaud.
\newblock {Neural Ordinary Differential Equations}.
\newblock \emph{Advances in Neural Information Processing Systems}, 2018.

\bibitem[Chen et~al.(2020{\natexlab{a}})Chen, Dai, Li, Gao, and
  Song]{chen2020learning2stop}
Xinshi Chen, Hanjun Dai, Yu~Li, Xin Gao, and Le~Song.
\newblock {Learning To Stop While Learning To Predict}.
\newblock In \emph{International Conference on Machine Learning},
  2020{\natexlab{a}}.

\bibitem[Chen et~al.(2020{\natexlab{b}})Chen, Friesen, Behbahani, Doucet,
  Budden, Hoffman, and de~Freitas]{chen2019modular}
Yutian Chen, Abram~L Friesen, Feryal Behbahani, Arnaud Doucet, David Budden,
  Matthew~W Hoffman, and Nando de~Freitas.
\newblock {Modular Meta-Learning with Shrinkage}.
\newblock \emph{Neural Information Processing Systems}, 2020{\natexlab{b}}.

\bibitem[Dormand \& Prince(1980)Dormand and Prince]{dormand1980family}
John~R Dormand and Peter~J Prince.
\newblock {A family of embedded Runge-Kutta formulae}.
\newblock \emph{Journal of computational and applied mathematics}, 1980.

\bibitem[Euler(1913)]{euler1913eulermethod}
Leonhard Euler.
\newblock {De integratione aequationum differentialium per approximationem}.
\newblock \emph{Opera Omnia}, 1913.

\bibitem[Feehery et~al.(1997)Feehery, Tolsma, and Barton]{feehery1997efficient}
William~F Feehery, John~E Tolsma, and Paul~I Barton.
\newblock Efficient sensitivity analysis of large-scale differential-algebraic
  systems.
\newblock \emph{Applied Numerical Mathematics}, 25\penalty0 (1):\penalty0
  41--54, 1997.

\bibitem[Finn(2018)]{finn2018phd}
Chelsea Finn.
\newblock \emph{{Learning to Learn with Gradients}}.
\newblock PhD thesis, UC Berkeley, 2018.

\bibitem[Finn et~al.(2017)Finn, Abbeel, and Levine]{finn17maml}
Chelsea Finn, Pieter Abbeel, and Sergey Levine.
\newblock {Model-Agnostic Meta-Learning for Fast Adaptation of Deep Networks}.
\newblock \emph{International Conference on Machine Learning (ICML)}, 2017.

\bibitem[Flennerhag et~al.(2018)Flennerhag, Moreno, Lawrence, and
  Damianou]{flennerhag2018leap}
Sebastian Flennerhag, Pablo~G Moreno, Neil~D Lawrence, and Andreas Damianou.
\newblock Transferring knowledge across learning processes.
\newblock \emph{arXiv preprint}, 2018.

\bibitem[Flennerhag et~al.(2020)Flennerhag, Rusu, Pascanu, Visin, Yin, and
  Hadsell]{flennerhag2020warpgrad}
Sebastian Flennerhag, Andrei~A Rusu, Razvan Pascanu, Francesco Visin, Hujun
  Yin, and Raia Hadsell.
\newblock {Meta-Learning with Warped Gradient Descent}.
\newblock \emph{International Conference on Learning Representations}, 2020.

\bibitem[Franceschi et~al.(2017)Franceschi, Donini, Frasconi, and
  Pontil]{franceschi2017forward}
Luca Franceschi, Michele Donini, Paolo Frasconi, and Massimiliano Pontil.
\newblock Forward and reverse gradient-based hyperparameter optimization.
\newblock In \emph{International Conference on Machine Learning}, 2017.

\bibitem[Ghiasi et~al.(2018)Ghiasi, Lin, and Le]{ghiasi2018dropblock}
Golnaz Ghiasi, Tsung-Yi Lin, and Quoc~V Le.
\newblock Dropblock: A regularization method for convolutional networks.
\newblock In \emph{Neural Information Processing Systems}, 2018.

\bibitem[Gholami et~al.(2019)Gholami, Keutzer, and
  Biros]{DBLP:journals/corr/abs-1902-10298}
Amir Gholami, Kurt Keutzer, and George Biros.
\newblock {ANODE:} unconditionally accurate memory-efficient gradients for
  neural odes.
\newblock \emph{CoRR}, abs/1902.10298, 2019.
\newblock URL \url{http://arxiv.org/abs/1902.10298}.

\bibitem[Griewank \& Walther(2008)Griewank and Walther]{Griewank2008}
Andreas Griewank and Andrea Walther.
\newblock \emph{Evaluating Derivatives}.
\newblock Society for Industrial and Applied Mathematics, January 2008.

\bibitem[Guo et~al.(2021)Guo, Dietrich, Bertalan, Doncevic, Dahmen, Kevrekidis,
  and Li]{guo2021personalized}
Yue Guo, Felix Dietrich, Tom Bertalan, Danimir~T Doncevic, Manuel Dahmen,
  Ioannis~G Kevrekidis, and Qianxiao Li.
\newblock {Personalized Algorithm Generation: A Case Study in Meta-Learning ODE
  Integrators}.
\newblock \emph{arXiv preprint}, 2021.

\bibitem[He et~al.(2016)He, Zhang, Ren, and Sun]{he2016resnet}
Kaiming He, Xiangyu Zhang, Shaoqing Ren, and Jian Sun.
\newblock {Deep Residual Learning for Image Recognition}.
\newblock In \emph{{Proceedings of the IEEE Conference on Computer Vision and
  Pattern Recognition}}, 2016.

\bibitem[Hennigan et~al.(2020)Hennigan, Cai, Norman, and
  Babuschkin]{haiku2020github}
Tom Hennigan, Trevor Cai, Tamara Norman, and Igor Babuschkin.
\newblock {H}aiku: {S}onnet for {JAX}, 2020.
\newblock URL \url{http://github.com/deepmind/dm-haiku}.

\bibitem[Im et~al.(2019)Im, Jiang, and Verma]{im2019mamlrk}
Daniel~Jiwoong Im, Yibo Jiang, and Nakul Verma.
\newblock {Model-Agnostic Meta-Learning using Runge-Kutta Methods}.
\newblock \emph{arXiv preprint}, 2019.

\bibitem[Javed \& White(2019)Javed and White]{javed2019oml}
Khurram Javed and Martha White.
\newblock {Meta-Learning Representations for Continual Learning}.
\newblock In \emph{Advances in Neural Information Processing Systems}, 2019.

\bibitem[Jiwoong~Im et~al.(2021)Jiwoong~Im, Savin, and Cho]{jiwoong2021online}
Daniel Jiwoong~Im, Cristina Savin, and Kyunghyun Cho.
\newblock {Online hyperparameter optimization by Real-Time Recurrent Learning}.
\newblock \emph{arXiv preprint}, 2021.

\bibitem[Lee et~al.(2019)Lee, Maji, Ravichandran, and
  Soatto]{lee2019metaoptnet}
Kwonjoon Lee, Subhransu Maji, Avinash Ravichandran, and Stefano Soatto.
\newblock {Meta-learning with Differentiable Convex Optimization}.
\newblock In \emph{Proceedings of the IEEE/CVF Conference on Computer Vision
  and Pattern Recognition}, 2019.

\bibitem[Leis \& Kramer(1988)Leis and Kramer]{leis1988simultaneous}
Jorge~R Leis and Mark~A Kramer.
\newblock The simultaneous solution and sensitivity analysis of systems
  described by ordinary differential equations.
\newblock \emph{ACM Transactions on Mathematical Software (TOMS)}, 14\penalty0
  (1):\penalty0 45--60, 1988.

\bibitem[Li et~al.(2018)Li, Yang, Song, and Hospedales]{li2018learning}
Da~Li, Yongxin Yang, Yi-Zhe Song, and Timothy Hospedales.
\newblock {Learning to Generalize: Meta-Learning for Domain Generalization}.
\newblock In \emph{Proceedings of the AAAI Conference on Artificial
  Intelligence}, 2018.

\bibitem[Li et~al.(2021)Li, Wang, Narayan, Kirby, and
  Zhe]{li2021metalearningadjoint}
Shibo Li, Zheng Wang, Akil Narayan, Robert Kirby, and Shandian Zhe.
\newblock {Meta-Learning with Adjoint Methods}.
\newblock \emph{arXiv preprint}, 2021.

\bibitem[Li et~al.(2017)Li, Zhou, Chen, and Li]{li2017metasgd}
Zhenguo Li, Fengwei Zhou, Fei Chen, and Hang Li.
\newblock {Meta-SGD: Learning to learn quickly for few-shot learning}.
\newblock \emph{arXiv preprint}, 2017.

\bibitem[Lions \& Magenes(2012)Lions and Magenes]{lions2012non}
Jacques~Louis Lions and Enrico Magenes.
\newblock \emph{Non-homogeneous boundary value problems and applications: Vol.
  1}, volume 181.
\newblock Springer Science \& Business Media, 2012.

\bibitem[Liu et~al.(2019)Liu, Lee, Park, Kim, Yang, Hwang, and
  Yang]{liu2018tpn}
Yanbin Liu, Juho Lee, Minseop Park, Saehoon Kim, Eunho Yang, Sung~Ju Hwang, and
  Yi~Yang.
\newblock {Learning to Propagate Labels: Transductive Propagation Network for
  Few-Shot Learning}.
\newblock \emph{International Conference on Learning Representations}, 2019.

\bibitem[Maclaurin et~al.(2015)Maclaurin, Duvenaud, and
  Adams]{maclaurin2015gradient}
Dougal Maclaurin, David Duvenaud, and Ryan Adams.
\newblock Gradient-based hyperparameter optimization through reversible
  learning.
\newblock In \emph{International conference on machine learning}, pp.\
  2113--2122. PMLR, 2015.

\bibitem[Maly \& Petzold(1996)Maly and Petzold]{maly1996numerical}
Timothy Maly and Linda~R Petzold.
\newblock Numerical methods and software for sensitivity analysis of
  differential-algebraic systems.
\newblock \emph{Applied Numerical Mathematics}, 20\penalty0 (1-2):\penalty0
  57--79, 1996.

\bibitem[Micaelli \& Storkey(2021)Micaelli and Storkey]{micaelli2021gradient}
Paul Micaelli and Amos Storkey.
\newblock {Gradient-based Hyperparameter Optimization Over Long Horizons}.
\newblock In \emph{{Neural Information Processing Systems}}, 2021.

\bibitem[Nichol et~al.(2018)Nichol, Achiam, and Schulman]{nichol2018reptile}
Alex Nichol, Joshua Achiam, and John Schulman.
\newblock {On First-Order Meta-Learning Algorithms}.
\newblock \emph{arXiv preprint}, 2018.

\bibitem[Oh et~al.(2021)Oh, Yoo, Kim, and Yun]{oh2020boil}
Jaehoon Oh, Hyungjun Yoo, ChangHwan Kim, and Se-Young Yun.
\newblock {BOIL: Towards Representation Change for Few-Shot Learning}.
\newblock \emph{International Conference on Learning Representations}, 2021.

\bibitem[Orvieto \& Lucchi(2019)Orvieto and Lucchi]{orvieto2019shadowing}
Antonio Orvieto and Aurelien Lucchi.
\newblock {Shadowing Properties of Optimization Algorithms}.
\newblock In \emph{Advances in Neural Information Processing Systems}, 2019.

\bibitem[Platt \& Barr(1988)Platt and Barr]{platt88constrained}
John Platt and Alan Barr.
\newblock {Constrained Differential Optimization}.
\newblock In \emph{{Neural Information Processing Systems}}, 1988.

\bibitem[Polyak(1964)]{polyak1964heavyball}
Boris~T Polyak.
\newblock Some methods of speeding up the convergence of iteration methods.
\newblock \emph{{USSR computational mathematics and mathematical physics}},
  1964.

\bibitem[Pontryagin(2018)]{pontryagin2018mathematical}
Lev~Semenovich Pontryagin.
\newblock \emph{Mathematical theory of optimal processes}.
\newblock Routledge, 2018.

\bibitem[Raghu et~al.(2019)Raghu, Raghu, Bengio, and Vinyals]{raghu2019anil}
Aniruddh Raghu, Maithra Raghu, Samy Bengio, and Oriol Vinyals.
\newblock Rapid learning or feature reuse? towards understanding the
  effectiveness of maml.
\newblock \emph{arXiv preprint}, 2019.

\bibitem[Rajeswaran et~al.(2019)Rajeswaran, Finn, Kakade, and
  Levine]{rajeswaran2019imaml}
Aravind Rajeswaran, Chelsea Finn, Sham~M Kakade, and Sergey Levine.
\newblock {Meta-Learning with Implicit Gradients}.
\newblock In \emph{Advances in Neural Information Processing Systems}, 2019.

\bibitem[Ren et~al.(2018)Ren, Triantafillou, Ravi, Snell, Swersky, Tenenbaum,
  Larochelle, and Zemel]{ren2018meta}
Mengye Ren, Eleni Triantafillou, Sachin Ravi, Jake Snell, Kevin Swersky,
  Joshua~B Tenenbaum, Hugo Larochelle, and Richard~S Zemel.
\newblock Meta-learning for semi-supervised few-shot classification.
\newblock In \emph{International Conference on Learning Representations}, 2018.

\bibitem[Rothfuss et~al.(2019)Rothfuss, Lee, Clavera, Asfour, and
  Abbeel]{rothfuss2019promp}
Jonas Rothfuss, Dennis Lee, Ignasi Clavera, Tamim Asfour, and Pieter Abbeel.
\newblock {ProMP: Proximal Meta-Policy Search}.
\newblock \emph{International Conference on Learning Representations}, 2019.

\bibitem[Runge(1895)]{runge1895numerische}
Carl Runge.
\newblock {\"U}ber die numerische aufl{\"o}sung von differentialgleichungen.
\newblock \emph{Mathematische Annalen}, 1895.

\bibitem[Russakovsky et~al.(2015)Russakovsky, Deng, Su, Krause, Satheesh, Ma,
  Huang, Karpathy, Khosla, Bernstein, et~al.]{russakovsky2015imagenet}
Olga Russakovsky, Jia Deng, Hao Su, Jonathan Krause, Sanjeev Satheesh, Sean Ma,
  Zhiheng Huang, Andrej Karpathy, Aditya Khosla, Michael Bernstein, et~al.
\newblock Imagenet large scale visual recognition challenge.
\newblock \emph{International journal of computer vision}, 115\penalty0
  (3):\penalty0 211--252, 2015.

\bibitem[Rusu et~al.(2018)Rusu, Rao, Sygnowski, Vinyals, Pascanu, Osindero, and
  Hadsell]{rusu2018leo}
Andrei~A Rusu, Dushyant Rao, Jakub Sygnowski, Oriol Vinyals, Razvan Pascanu,
  Simon Osindero, and Raia Hadsell.
\newblock {Meta-learning with Latent Embedding Optimization}.
\newblock \emph{arXiv preprint}, 2018.

\bibitem[Santoro et~al.(2016)Santoro, Bartunov, Botvinick, Wierstra, and
  Lillicrap]{santoro2016mann}
Adam Santoro, Sergey Bartunov, Matthew Botvinick, Daan Wierstra, and Timothy
  Lillicrap.
\newblock One-shot learning with memory-augmented neural networks.
\newblock \emph{arXiv preprint}, 2016.

\bibitem[Schmidhuber(1987)]{schmidhuber1987evolutionary}
J{\"u}rgen Schmidhuber.
\newblock \emph{Evolutionary principles in self-referential learning, or on
  learning how to learn: the meta-meta-... hook}.
\newblock PhD thesis, Technische Universit{\"a}t M{\"u}nchen, 1987.

\bibitem[Serban \& Hindmarsh(2003)Serban and Hindmarsh]{serban2003cvodes}
Radu Serban and Alan~C Hindmarsh.
\newblock Cvodes: An ode solver with sensitivity analysis capabilities.
\newblock Technical report, Technical Report UCRL-JP-200039, Lawrence Livermore
  National Laboratory, 2003.

\bibitem[Shaban et~al.(2019)Shaban, Cheng, Hatch, and
  Boots]{shaban2019truncated}
Amirreza Shaban, Ching-An Cheng, Nathan Hatch, and Byron Boots.
\newblock {Truncated Back-propagation for Bilevel Optimization}.
\newblock In \emph{International Conference on Artificial Intelligence and
  Statistics}, 2019.

\bibitem[Snell et~al.(2017)Snell, Swersky, and Zemel]{snell2017protonet}
Jake Snell, Kevin Swersky, and Richard Zemel.
\newblock {Prototypical Networks for Few-shot Learning}.
\newblock In \emph{Advances in Neural Information Processing Systems}, 2017.

\bibitem[Su et~al.(2014)Su, Boyd, and Candes]{su2014differential}
Weijie Su, Stephen Boyd, and Emmanuel Candes.
\newblock {A Differential Equation for Modeling Nesterov’s Accelerated
  Gradient Method: Theory and Insights}.
\newblock \emph{Advances in Neural Information Processing Systems}, 2014.

\bibitem[Sung et~al.(2018)Sung, Yang, Zhang, Xiang, Torr, and
  Hospedales]{sung2018learning}
Flood Sung, Yongxin Yang, Li~Zhang, Tao Xiang, Philip~HS Torr, and Timothy~M
  Hospedales.
\newblock Learning to compare: Relation network for few-shot learning.
\newblock In \emph{Proceedings of the IEEE conference on computer vision and
  pattern recognition}, 2018.

\bibitem[Sutton(1992)]{Sutton1992}
Richard~S. Sutton.
\newblock {Adapting Bias by Gradient Descent: An Incremental Version of
  Delta-Bar-Delta}.
\newblock In \emph{Proceedings of the 10th National Conference on Artificial
  Intelligence}, 1992.

\bibitem[Thrun \& Pratt(2012)Thrun and Pratt]{thrun2012learningtolearn}
Sebastian Thrun and Lorien Pratt.
\newblock \emph{Learning to learn}.
\newblock Springer Science \& Business Media, 2012.

\bibitem[Tian et~al.(2020)Tian, Wang, Krishnan, Tenenbaum, and
  Isola]{tian2020rethinking}
Yonglong Tian, Yue Wang, Dilip Krishnan, Joshua~B Tenenbaum, and Phillip Isola.
\newblock {Rethinking Few-Shot Image Classification: a Good Embedding Is All
  You Need?}
\newblock 2020.

\bibitem[Vinyals et~al.(2016)Vinyals, Blundell, Lillicrap, Wierstra,
  et~al.]{vinyals2016matching}
Oriol Vinyals, Charles Blundell, Timothy Lillicrap, Daan Wierstra, et~al.
\newblock Matching networks for one shot learning.
\newblock \emph{Neural Information Processing Systems}, 29:\penalty0
  3630--3638, 2016.

\bibitem[Von~Oswald et~al.(2021)Von~Oswald, Zhao, Kobayashi, Schug, Caccia,
  Zucchet, and Sacramento]{von2021sparsemaml}
Johannes Von~Oswald, Dominic Zhao, Seijin Kobayashi, Simon Schug, Massimo
  Caccia, Nicolas Zucchet, and Jo{\~a}o Sacramento.
\newblock {Learning where to learn: Gradient sparsity in meta and continual
  learning}.
\newblock \emph{Advances in Neural Information Processing Systems}, 2021.

\bibitem[Wiggins(2003)]{wiggins1990dynamicalsystems}
Stephen Wiggins.
\newblock \emph{{Introduction to Applied Nonlinear Dynamical Systems and
  Chaos}}, volume~2.
\newblock Springer, 2003.

\bibitem[Williams \& Zipser(1989)Williams and Zipser]{Williams1989}
Ronald~J. Williams and David Zipser.
\newblock {A Learning Algorithm for Continually Running Fully Recurrent Neural
  Networks}.
\newblock \emph{Neural Computation}, 1989.

\bibitem[Wilson et~al.(2016)Wilson, Recht, and Jordan]{wilson2016lyapunov}
Ashia~C Wilson, Benjamin Recht, and Michael~I Jordan.
\newblock {A Lyapunov Analysis of Momentum Methods in Optimization}.
\newblock \emph{arXiv preprint}, 2016.

\bibitem[Xu et~al.(2021)Xu, Chen, and Karbasi]{xu2021metalearningcontinuous}
Ruitu Xu, Lin Chen, and Amin Karbasi.
\newblock { Meta Learning in the Continuous Time Limit }.
\newblock In \emph{Proceedings of The 24th International Conference on
  Artificial Intelligence and Statistics}. PMLR, 2021.

\bibitem[Xu et~al.(2018)Xu, van Hasselt, and Silver]{Xu2018}
Zhongwen Xu, Hado van Hasselt, and David Silver.
\newblock {Meta-Gradient Reinforcement Learning}.
\newblock In \emph{Advances in Neural Information Processing Systems}, 2018.

\bibitem[Yu et~al.(2020)Yu, Kumar, Gupta, Levine, Hausman, and
  Finn]{yu2020gradientsrugery}
Tianhe Yu, Saurabh Kumar, Abhishek Gupta, Sergey Levine, Karol Hausman, and
  Chelsea Finn.
\newblock {Gradient Surgery for Multi-Task Learning}.
\newblock \emph{Neural Information Processing Systems}, 2020.

\bibitem[Zhang et~al.(2021)Zhang, Li, Ye, Feng, and Ye]{zhang2021metanode}
Baoquan Zhang, Xutao Li, Yunming Ye, Shanshan Feng, and Rui Ye.
\newblock {MetaNODE: Prototype Optimization as a Neural ODE for Few-Shot
  Learning}.
\newblock \emph{arXiv preprint}, 2021.

\bibitem[Zhou et~al.(2021)Zhou, Wang, Xian, Chen, and Xu]{zhou2021metantk}
Yufan Zhou, Zhenyi Wang, Jiayi Xian, Changyou Chen, and Jinhui Xu.
\newblock {Meta-Learning with Neural Tangent Kernels}.
\newblock \emph{International Conference on Learning Representations}, 2021.

\bibitem[Zintgraf et~al.(2019)Zintgraf, Shiarli, Kurin, Hofmann, and
  Whiteson]{zintgraf2019fast}
Luisa Zintgraf, Kyriacos Shiarli, Vitaly Kurin, Katja Hofmann, and Shimon
  Whiteson.
\newblock Fast context adaptation via meta-learning.
\newblock In \emph{International Conference on Machine Learning}, pp.\
  7693--7702. PMLR, 2019.

\end{thebibliography}
\bibliographystyle{iclr2022_conference}

\newpage
\appendix

{\centering
{\LARGE\bf Appendix\par}}

\DeclareRobustCommand{\protectbm}[1]{\bm{#1}}
\pdfstringdefDisableCommands{%
  \renewcommand{\protectbm}[1]{#1}%
}
\vspace{2em}
The Appendix is organized as follows: in \cref{app:algorithmic-details} we give the pseudo-code for meta-training and meta-testing COMLN, along with minimal code in JAX \citep{jax2018github}. In \cref{app:proofs-meta-gradients} we prove the decomposition of the Jacobians introduced in \cref{sec:memory-efficient-meta-gradients}, and we give the exact dynamics of $\vs_{m}(t)$ and $\vz_{t}[i, j, m]$ in the decomposition of $d\mW(t)/d\vphi_{m}$, omitted from the main text for concision. In \cref{app:projection} we show how the total derivatives may be computed from the Jacobian matrices without ever having to form them explicitly, hence maintaining the memory-efficiency described in \cref{sec:memory-efficiency}. In \cref{app:proofs-stability}, we show that unlike the adjoint method described in \cref{sec:local-sensitivity-analysis-odes}, the algorithm used in COMLN to compute the meta-gradients based on the forward sensitivity method is stable and guaranteed not to diverge. Finally in \cref{app:experiment-details} we give additional details regarding the experiments reported in \cref{sec:experiments}, together with additional analyses and results on the dataset introduced in \citet{rusu2018leo}.

\section{Algorithmic details}
\label{app:algorithmic-details}

\subsection{Meta-training pseudo-code}
\label{app:meta-training}
We give in \cref{alg:comln-meta-training} the pseudo-code for meta-training COMLN, based on a distribution of tasks $p(\tau)$, with references to the relevant propositions developed in \cref{app:proofs-meta-gradients,app:projection}. Note that to simplify the presentation in \cref{alg:comln-meta-training}, we introduce the notations $\nabla_{\tau}$ and $\partial\gL_{\tau}$ to denote the total derivative of the outer-loss, and the partial derivative of the loss $\gL$ respectively, both computed at the adapted parameters $\mW_{\tau}(T)$. For example:
\begin{align*}
    \nabla_{\tau}\mW_{0} &= \frac{d\gL\big(\mW_{\tau}(T);f_{\Phi}(\Dtest)\big)}{d\mW_{0}} & \partial_{\mW(T)}\gL_{\tau}^{\mathrm{train}} &= \frac{\partial \gL\big(\mW_{\tau}(T);f_{\Phi}(\Dtrain)\big)}{\partial \mW(T)}
\end{align*}

\begin{algorithm}[H]
\caption{COMLN -- Meta-training}
\label{alg:comln-meta-training}
\begin{algorithmic}
\Require A task distribution $p(\tau)$
\State Initialize randomly $\Phi$ and $\mW_{0}$. Initialize $T$ to a small value $\varepsilon > 0$.
\Loop
\State Sample a batch of tasks $\gB \sim p(\tau)$
\ForAll{$\tau \in \gB$}
\State Embed the training set: $f_{\Phi}(\Dtrain) = \{(\vphi_{m}, \vy_{m})\}_{m=1}^{M}$
\State $\vs(T), \mB_{T}, \vz_{T} \leftarrow \textrm{ODESolve}([\vzero, \vzero, \vzero], \textrm{\textsc{Dynamics}}, 0, T)$ \Comment{\cref{alg:comln-dynamics-meta-training}}
\State $\mW_{\tau}(T) \leftarrow \mW_{0} - \sum_{m}\vs_{m}(T)\vphi_{m}^{\top}$ \Comment{\cref{prop:decomposition-W}}
\State
\State Embed the test set: $f_{\Phi}(\Dtest)$
\State Compute the partial derivatives: $\partial_{\mW(T)}\gL_{\tau}^{\mathrm{train}}$ \& $\partial_{\mW(T)}\gL_{\tau}^{\mathrm{test}}$
\State $\nabla_{\tau}\mW_{0} \leftarrow \textrm{\textsc{Project}}_{\mW_{0}}\big(\partial_{\mW(T)}\gL_{\tau}^{\mathrm{test}}, \mB_{T}, \vphi\big)$\Comment{\cref{prop:projection-W0}}
\State $\nabla_{\tau}\vphi_{m} \leftarrow \textrm{\textsc{Project}}_{\vphi_{m}}\big(\partial_{\mW(T)}\gL_{\tau}^{\mathrm{test}}, \vs(T), \mB_{T}, \vz_{T}, \vphi\big) + \partial_{\vphi_{m}}\gL_{\tau}^{\mathrm{test}}$ \Comment{\cref{prop:projection-phi}}
\State $\vphantom{\big(}\nabla_{\tau}\Phi \leftarrow$ Backpropagation through $f_{\Phi}$, starting with $\nabla_{\tau}\vphi_{m}$ for all $m$
\State $\nabla_{\tau}T \leftarrow \big(\partial_{\mW(T)}\gL_{\tau}^{\mathrm{test}}\big)^{\top}\big(\partial_{\mW(T)}\gL_{\tau}^{\mathrm{train}}\big)$\Comment{\cref{prop:proof-meta-gradient-T}}
\EndFor
\State Update the meta-parameters:\vphantom{$\frac{\alpha}{|\gB|}$}
\State $\qquad \mW_{0} \leftarrow \mW_{0} - \frac{\alpha}{|\gB|} \sum_{\tau \in \gB}\nabla_{\tau}\mW_{0}$
\State $\qquad \Phi \leftarrow \Phi - \frac{\alpha}{|\gB|} \sum_{\tau\in\gB}\nabla_{\tau}\Phi$
\State $\qquad T \leftarrow T - \frac{\alpha}{|\gB|} \sum_{\tau \in \gB}\nabla_{\tau}T$
\EndLoop
\end{algorithmic}
\end{algorithm}

The dynamical system followed during adaptation is given in \cref{prop:decomposition-W,prop:decomposition-W0,prop:decomposition-phi} and is summarized in \cref{alg:comln-dynamics-meta-training}. The solution of this dynamical system is used not only to find the adapted parameters $\mW_{\tau}(T)$, but also to get the quantities necessary to compute the meta-gradients for the updates of the meta-parameters $\mW_{0}$ \& $\Phi$. The integration of this dynamical system, named \textsc{ODESolve} in \cref{alg:comln-meta-training}, is done in practice using a numerical solver such as Runge-Kutta methods.
\begin{algorithm}[H]
\caption{Dynamical system}
\label{alg:comln-dynamics-meta-training}
\begin{algorithmic}
\Function{Dynamics}{$\mW_{0}, f_{\Phi}(\Dtrain), \vs(t)$}
\State $\mW_{\tau}(t) \leftarrow \mW_{0} - \sum_{m}\vs_{m}(t)\vphi_{m}^{\top}$ \Comment{\cref{prop:decomposition-W}}
\State $\vp_{m}(t) \leftarrow \softmax(\mW_{\tau}(t)\vphi_{m})$
\State $\mA_{m}(t) \leftarrow \big(\diag(\vp_{m}(t)) - \vp_{m}(t)\vp_{m}(t)^{\top}\big) / M$
\State
\State $\dfrac{d\vs_{m}}{dt} \leftarrow \dfrac{1}{M}\big(\vp_{m}(t) - \vy_{m}\big)$ \Comment{\cref{prop:decomposition-W}}
\State $\displaystyle\dfrac{d\mB_{t}[i, j]}{dt} \leftarrow \mathbbm{1}(i=j)\mA_{i}(t) - \mA_{i}(t)\sum_{m=1}^{M}\big(\vphi_{i}^{\top}\vphi_{m}\big)\mB_{t}[m, j]$ \Comment{\cref{prop:decomposition-W0,prop:decomposition-phi} ($\hookleftarrow$)}
\State $\displaystyle \dfrac{d\vz_{t}[i,j,m]}{dt} \leftarrow -\mA_{i}(t)\bigg[\mathbbm{1}(i=j)\vs_{m}(t) + \mathbbm{1}(i=m)\vs_{j}(t) + \sum_{k=1}^{M}\big(\vphi_{i}^{\top}\vphi_{k}\big)\vz_{t}[k,j,m]\bigg]$
\State \textbf{return} $d\vs/dt, d\mB_{t}/dt, d\vz_{t}/dt$
\EndFunction
\end{algorithmic}
\end{algorithm}

In \cref{alg:comln-projection-W0,alg:comln-projection-phi}, we give the procedures responsible for the projection of the partial derivatives $\partial_{\mW(T)}\gL_{\tau}^{\mathrm{test}}$ onto the Jacobian matrices $d\mW(T)/d\mW_{0}$ and $d\mW(T)/d\vphi_{m}$ respectively, based on \cref{prop:projection-W0,prop:projection-phi}. Note that \cref{alg:comln-projection-phi} also depends on the initial conditions $\mW_{0}$, which is implicitly assumed here for clarity of presentation. It is interesting to see that both functions use the same matrix $\mC$, and therefore this can be computed only once and reused for both projections.

\begin{figure}[h]
\begin{minipage}{0.45\linewidth}
\begin{algorithm}[H]
\caption{Projection onto $d\mW(T)/d\mW_{0}$}
\label{alg:comln-projection-W0}
\begin{algorithmic}
\Function{Project${}_{\mW_{0}}$}{$\mV(T), \mB_{T}, \vphi$}
\State $\displaystyle \mC_{j} \leftarrow \sum_{i=1}^{M}\vphi_{i}^{\top}\mV(T)^{\top}\mB_{T}[i, j]$
\State \textbf{return} $\mV(T) - \mC^{\top}\vphi$
\EndFunction
\State \vspace{2.2em}
\end{algorithmic}
\end{algorithm}
\end{minipage}\hfill
\begin{minipage}{0.53\linewidth}
\begin{algorithm}[H]
\caption{Projection onto $d\mW(T)/d\vphi_{m}$}
\label{alg:comln-projection-phi}
\begin{algorithmic}
\Function{Project${}_{\vphi_{m}}$}{$\mV(T), \vs(T), \mB_{T}, \vz_{T}, \vphi$}
\State $\displaystyle \mC_{j} \leftarrow \sum_{i=1}^{M}\vphi_{i}^{\top}\mV(T)^{\top}\mB_{T}[i, j]$
\State $\displaystyle \mD_{m,j} \leftarrow \sum_{i=1}^{M}\vz_{T}[i,j,m]^{\top}\mV(T)\vphi_{i}$
\State \textbf{return} $-\big[\vs_{m}(T)^{\top}\mV(T) + \mC_{m}\mW_{0} + \mD_{m}\vphi\big]$
\EndFunction
\end{algorithmic}
\end{algorithm}
\end{minipage}
\end{figure}

Finally in \cref{alg:comln-meta-test,alg:comln-dynamics-meta-test}, we show how to use COMLN at meta-test time on a novel task, based on the learned meta-parameters $\mW_{0}$, $\Phi$ and $T$. This simply corresponds to integrating a dynamical system in $\mW(t)$ (equivalently, in $\vs_{m}(t)$, see \cref{prop:decomposition-W}). Note that for adaptation during meta-testing, there is no need to compute either $\mB_{t}[i, j]$ or $\vz_{t}[i, j, m]$, since these are only necessary to compute the meta-gradients during meta-training.

\begin{figure}[h]
\begin{minipage}{0.5\linewidth}
\begin{algorithm}[H]
\caption{COMLN -- Meta-test}
\label{alg:comln-meta-test}
\begin{algorithmic}
\Require A task $\tau$ with a dataset $\Dtrain$
\Require Meta-parameters $\Phi$, $\mW_{0}$ \& $T$
\State Embed the training dataset: $f_{\Phi}(\Dtrain)$
\State $\vs(T) \leftarrow \mathrm{ODESolve}(\vzero, \mathrm{\textsc{Dynamics}}, 0, T)$
\State $\mW_{\tau}(T) \leftarrow \mW_{0} - \sum_{m}\vs_{m}(T)\vphi_{m}^{\top}$
\State \textbf{return} $\mW_{\tau}(T) \circ f_{\Phi}$
\end{algorithmic}
\end{algorithm}
\end{minipage}\hfill
\begin{minipage}{0.48\linewidth}
\begin{algorithm}[H]
\caption{Dynamical system for adaptation}
\label{alg:comln-dynamics-meta-test}
\begin{algorithmic}
\Function{Dynamics}{$\mW_{0}, f_{\Phi}(\Dtrain), \vs(t)$}
\State $\mW_{\tau}(t) \leftarrow \mW_{0} - \sum_{m}\vs_{m}(t)\vphi_{m}^{\top}$
\State $\vp_{m}(t) \leftarrow \softmax(\mW_{\tau}(t)\vphi_{m})$
\State $d\vs_{m}/dt \leftarrow \big(\vp_{m}(t) - \vy_{m}\big) / M$
\State \textbf{return} $d\vs/dt$
\EndFunction
\end{algorithmic}
\end{algorithm}
\end{minipage}
\end{figure}

\newpage
\subsection{Source code}
\label{app:source-code}
We provide a snippet of code written in JAX \citep{jax2018github} in order to compute the adapted parameters, based on \cref{alg:comln-meta-training,alg:comln-dynamics-meta-training}. The \texttt{dynamics} function not only computes the vectors $\vs_{m}(t)$ necessary to compute $\mW(t)$ (see \cref{app:proof-W}), but also $\mB_{t}[i,j]$ and $\vz_{t}[i,j,m]$ jointly in order to compute the meta-gradients. This snippet shows that various necessary quantities can be precomputed ahead of adaptation, such as the Gram matrix \texttt{gram}.

\begin{code}
  \inputminted[linenos, frame=lines, framesep=5pt, baselinestretch=1.1, firstnumber=last, fontsize=\small]{python}{assets/gradient_flow.py}
  \caption{Snippet of code in JAX to compute the adapted parameter \texttt{W\_T} for a given task specified by the embedded training set \texttt{train\_inputs} \& \texttt{train\_labels} (note that here \texttt{train\_inputs} are the embedding vectors returned by $f_{\Phi}$) and an initialization \texttt{W\_0}.}
  \label{code:jax-snippet-gradient-flow}
\end{code}

We want to emphasize that all the information required to compute the meta-gradients are available in \texttt{state\_T} found after integration of the \texttt{dynamics} function forward in time, and does not require any additional backward pass (apart from backpropagation through the embedding network $f_{\Phi}$). In particular, the meta-gradients wrt. the initial conditions $\mW_{0}$ and the wrt. the time horizon $T$ can be computed using the snippet of code available in Code Snippet \ref{code:jax-snippet-meta-gradients}.

\begin{code}
  \inputminted[linenos, frame=lines, framesep=5pt, baselinestretch=1.1, firstnumber=last, fontsize=\small]{python}{assets/meta_gradients.py}
  \caption{Snippet of code in JAX to compute the meta-gradients wrt. the meta-parameters \texttt{W\_0} and \texttt{T}, based on \texttt{state\_T} found above. See \cref{app:proof-T} \& \cref{app:projection} for details.}
  \label{code:jax-snippet-meta-gradients}
\end{code}

The code to compute the meta-gradients wrt. the meta-parameters of the embedding network $\Phi$ is not included here for clarity, as it involves non-minimal dependencies on Haiku \citep{haiku2020github} in order to backpropagate the error through the backbone network. The full code is available at \href{https://github.com/tristandeleu/jax-comln}{\texttt{https://github.com/tristandeleu/jax-comln}}.

\section{Proofs of memory efficient meta-gradients}
\label{app:proofs-meta-gradients}
In this section, we will prove the results presented in \cref{sec:memory-efficient-meta-gradients} in a slightly more general case, where the loss function $\gL$ is the cross-entropy loss regularized with a proximal term around the initialization \citep{rajeswaran2019imaml}:
\begin{align}
    \gL\big(\mW;f_{\Phi}(\Dtrain)\big) &= -\frac{1}{M}\sum_{m=1}^{M}\vy_{m}^{\top}\log \vp_{m} + \frac{\lambda}{2}\|\mW - \mW_{0}\|_{F}^{2},
    \label{eq:regularized-cross-entropy}
\end{align}
where $\vp_{m} = \softmax(\mW\vphi_{m})$, $\|\cdot\|_{F}$ is the Frobenius norm, and $\lambda$ is a regularization constant. Note that we can recover the setting presented in the main paper by setting $\lambda = 0$. The core idea of computing the meta-gradient efficiently is based on the decomposition of the gradient (as well as the Hessian matrix) of the cross-entropy loss \citep{bohning1992multinomiallogreg}. We recall this decomposition as the following lemma, extended to include the regularization term:
\begin{lemma}[\citealp{bohning1992multinomiallogreg}]
Let $f_{\Phi}(\Dtrain) = \{(\vphi_{m}, \vy_{m})\}_{m=1}^{M}$ be the embedded training set through the embedding network $f_{\Phi}$, and $\gL$ the regularized cross-entropy loss defined in \cref{eq:regularized-cross-entropy}. The gradient $\nabla\gL$ and the Hessian matrix $\nabla^{2}\gL$ of the regularized cross-entropy loss can be written as
\begin{align*}
    \nabla\gL\big(\mW;f_{\Phi}(\Dtrain)\big) &= \frac{1}{M}\sum_{m=1}^{M}\big(\vp_{m} - \vy_{m}\big)\vphi_{m}^{\top} + \lambda (\mW - \mW_{0})\\
    \nabla^{2}\gL\big(\mW;f_{\Phi}(\Dtrain)\big) &= \sum_{m=1}^{M}\mA_{m} \otimes \vphi_{m}\vphi_{m}^{\top} + \lambda \mI,
\end{align*}
where $\mA_{m} = \big(\diag(\vp_{m}) - \vp_{m}\vp_{m}^{\top}\big) / M$ are $N \times N$ matrices, and $\otimes$ is the Kronecker product.
\label{lem:decomposition-gradient-hessian}
\end{lemma}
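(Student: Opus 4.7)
The plan is to split the loss into its cross-entropy part and the proximal regularization $\tfrac{\lambda}{2}\|\mW - \mW_0\|_F^2$, and dispatch the two pieces separately. The regularization term is trivial: its gradient is $\lambda(\mW - \mW_0)$ and its Hessian (on the vectorization of $\mW$) is $\lambda \mI$, which are exactly the $\lambda$-terms in the claim. The content therefore lies entirely in the per-sample cross-entropy $\ell_m(\mW) = -\vy_m^\top \log \softmax(\mW \vphi_m)$; by linearity, it suffices to compute $\nabla \ell_m$ and $\nabla^2 \ell_m$ once and average with the $1/M$ factor.

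For the gradient, I would introduce the logit vector $\vu_m = \mW \vphi_m \in \sR^N$ and invoke the standard softmax-cross-entropy identity $\nabla_{\vu_m} \ell_m = \vp_m - \vy_m$, which uses only that $\vy_m$ is one-hot so that $\vone^\top \vy_m = 1$. Chain rule through the linear map $\mW \mapsto \mW\vphi_m$ then produces the outer product $\nabla_{\mW} \ell_m = (\vp_m - \vy_m)\vphi_m^\top$. Summing over $m$ with the prefactor $1/M$ and tacking on the $\lambda(\mW-\mW_0)$ term from the proximal part gives the claimed gradient formula.

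For the Hessian, the key preliminary is the classical Jacobian identity $\partial \vp_m / \partial \vu_m = \diag(\vp_m) - \vp_m\vp_m^\top$, so that $\nabla^2_{\vu_m} \ell_m = \diag(\vp_m) - \vp_m \vp_m^\top$, which is precisely $M\cdot \mA_m$. Vectorizing $\mW$, the map $\vu_m = \mW\vphi_m$ is linear in $\mathrm{vec}(\mW)$ with Jacobian of Kronecker form $(\vphi_m^\top \otimes \mI)$. Applying the chain rule for the Hessian of a composition through a linear map yields a congruence by this Jacobian, and the mixed-product property of $\otimes$ collapses the result to $\mA_m \otimes \vphi_m\vphi_m^\top$. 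Summing over $m$ and adding the $\lambda \mI$ from regularization completes the decomposition.

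The main obstacle I anticipate is purely bookkeeping: pinning down a consistent vectorization convention (row- vs.\ column-major) so that the two Kronecker factors come out in the stated order $\mA_m \otimes \vphi_m\vphi_m^\top$ rather than the transposed ordering $\vphi_m\vphi_m^\top \otimes \mA_m$. Once a convention is fixed and used uniformly in the chain rule, no analytic subtlety remains; everything reduces to the two elementary softmax derivatives above together with the mixed-product identity for Kronecker products.
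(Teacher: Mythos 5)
Your derivation is correct, and in fact the paper offers no proof of this lemma at all: it is recalled verbatim (with the $\lambda$-terms added) from \citet{bohning1992multinomiallogreg}, so the standard argument you give---splitting off the proximal term, using $\nabla_{\vu_{m}}\ell_{m} = \vp_{m} - \vy_{m}$ and $\partial\vp_{m}/\partial\vu_{m} = \diag(\vp_{m}) - \vp_{m}\vp_{m}^{\top}$ for the logits $\vu_{m} = \mW\vphi_{m}$, then pushing through the linear map by a Kronecker congruence---is exactly what a self-contained proof should look like. The one subtlety you flag, the vectorization convention, does resolve in the stated order: the paper's index arithmetic in Appendix~D (e.g.\ $\mF[dk + x,\, dl + y]$ with $k,l$ ranging over the $N$ rows and $x,y$ over the $d$ columns) reveals a row-major flattening of $\mW$, under which the Jacobian of $\mathrm{vec}(\mW) \mapsto \mW\vphi_{m}$ is $\mI_{N} \otimes \vphi_{m}^{\top}$, and the congruence
\begin{equation*}
    \big(\mI_{N} \otimes \vphi_{m}\big)\big(\diag(\vp_{m}) - \vp_{m}\vp_{m}^{\top}\big)\big(\mI_{N} \otimes \vphi_{m}^{\top}\big) = \big(\diag(\vp_{m}) - \vp_{m}\vp_{m}^{\top}\big) \otimes \vphi_{m}\vphi_{m}^{\top}
\end{equation*}
gives precisely $M\mA_{m} \otimes \vphi_{m}\vphi_{m}^{\top}$ per sample, hence $\sum_{m}\mA_{m}\otimes\vphi_{m}\vphi_{m}^{\top}$ after averaging; a column-major convention would instead produce $\vphi_{m}\vphi_{m}^{\top}\otimes\mA_{m}$, which is the same operator up to a permutation similarity and changes nothing downstream. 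No gap remains.
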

This lemma is particularly useful in the context of few-shot learning since it reduces the characterization of the gradient and the Hessian from quantities of size $N\times d$ and $Nd \times Nd$ respectively (where $d$ is the dimension of the embedding vectors $\vphi$) to $M$ objects whose size is independent of $d$---the embedding vectors $\vphi$ being independent of $\mW$. Typically in few-shot learning, $M \ll d$.

In order to avoid higher-order tensors when defining the different Jacobians and Hessians, we always consider them as matrices, with possibly an implicit ``flattening'' operation. For example here even though $\mW$ is a $N \times d$ matrix, we treat the Hessian $\nabla^{2}\gL$ as a $Nd \times Nd$ matrix, as opposed to a 4D tensor. When the context is required, we will make this transformation from a higher-order tensor to a matrix explicit with an encoding of indices. Moreover throughout this section, we will only consider the computation of the meta-gradients for a single task $\tau$, and therefore we will often drop the explicit dependence of the different objects on $\tau$ (e.g. we will write $\mW(t)$ instead of $\mW_{\tau}(t)$); the meta-gradients are eventually averaged over a batch of tasks for the update of the outer-loop, see \cref{app:algorithmic-details} for details in the pseudo-code. Finally, since we only consider adaptation of the last layer of the neural network, the presentation here is always made in the context of an embedded training set $f_{\Phi}(\Dtrain) = \{(\vphi_{m}, \vy_{m})\}_{m=1}^{M}$ that went through the backbone $f_{\Phi}$.

\subsection{Decomposition of $\protectbm{W}(t)$ for parameter adaptation}
\label{app:proof-W}
As a direct application of \cref{lem:decomposition-gradient-hessian}, we first decompose the parameters $\mW(t)$ into smaller quantities $\vs_{m}(t)$ that follow the dynamics defined in Prop.~\ref{prop:decomposition-W}. Although this decomposition is equivalent to $\mW(t)$, in practice solving a smaller dynamical system in $\vs_{m}(t)$ improves the efficiency of our method.
\begin{proposition}
    Let $f_{\Phi}(\Dtrain) = \{(\vphi_{m}, \vy_{m})\}_{m=1}^{M}$ be the embedded training set through the embedding network $f_{\Phi}$, and $\mW(t)$ be the solution of the following dynamical system
    \begin{align*}
        \frac{d\mW}{dt} &= -\nabla \gL\big(\mW(t);f_{\Phi}(\Dtrain)\big) & \mW(0) &= \mW_{0},
    \end{align*}
    where the loss function is the regularized cross-entropy loss defined in \cref{eq:regularized-cross-entropy}. The solution $\mW(t)$ of this dynamical system can be written as
    \begin{equation*}
        \mW(t) = \mW_{0} - \sum_{m=1}^{M}\vs_{m}(t)\vphi_{m}^{\top},
    \end{equation*}
    where for all $m$, $\vs_{m}(t)$ is the solution of the following dynamical system:
    \begin{align*}
        \frac{d\vs_{m}}{dt} &= \frac{1}{M}\big(\vp_{m}(t) - \vy_{m}\big) - \lambda \vs_{m}(t) & \vs_{m}(0) &= \vzero,
    \end{align*}
    and $\vp_{m}(t) = \softmax(\mW(t)\vphi_{m})$ is the vector of predictions returned by the network at time $t$.
    \label{prop:decomposition-W}
\end{proposition}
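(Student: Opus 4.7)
The plan is to take the proposed form $\mW(t) = \mW_{0} - \sum_{m}\vs_{m}(t)\vphi_{m}^{\top}$ as an ansatz, verify that it satisfies the original ODE together with the initial condition, and then invoke uniqueness of solutions for the gradient flow (Picard--Lindelöf, using that the cross-entropy loss has a Lipschitz gradient on bounded sets) to conclude that this expression must equal $\mW(t)$. This reduces the proposition to a direct algebraic matching of coefficients.

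First I would differentiate the ansatz wrt. $t$, keeping $\mW_{0}$ and the $\vphi_{m}$ fixed (they do not depend on $t$), which gives
\begin{equation*}
    \frac{d\mW}{dt} \;=\; -\sum_{m=1}^{M}\frac{d\vs_{m}}{dt}\,\vphi_{m}^{\top}.
\end{equation*}
Next I would rewrite the right-hand side of the ODE using the gradient decomposition from \cref{lem:decomposition-gradient-hessian}. Substituting the ansatz into the regularization term $\lambda(\mW(t) - \mW_{0})$ gives $-\lambda\sum_{m}\vs_{m}(t)\vphi_{m}^{\top}$, so
\begin{equation*}
    -\nabla\gL\big(\mW(t); f_{\Phi}(\Dtrain)\big) \;=\; -\sum_{m=1}^{M}\Bigl[\tfrac{1}{M}\bigl(\vp_{m}(t) - \vy_{m}\bigr) - \lambda\vs_{m}(t)\Bigr]\vphi_{m}^{\top}.
\end{equation*}
Matching this with the expression for $d\mW/dt$ obtained from the ansatz, it suffices to choose $\vs_{m}(t)$ satisfying
$d\vs_{m}/dt = (1/M)\bigl(\vp_{m}(t) - \vy_{m}\bigr) - \lambda\vs_{m}(t)$,
which is exactly the dynamical system stated in the proposition. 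The initial condition $\mW(0) = \mW_{0}$ is equivalent to $\sum_{m}\vs_{m}(0)\vphi_{m}^{\top} = \vzero$, for which $\vs_{m}(0) = \vzero$ is a natural (and sufficient) choice.

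The subtle point --- and arguably the one spot that needs care rather than calculation --- is that the decomposition of a matrix as $\sum_{m}\vs_{m}\vphi_{m}^{\top}$ is not unique when the $\vphi_{m}$ are linearly dependent, so matching coefficients of $\vphi_{m}^{\top}$ on both sides is a sufficient but not necessary condition. This is not a real obstacle, however: I only need one solution, and once I have constructed $\vs_{m}(t)$ satisfying its own ODE, the corresponding $\mW(t)$ built from the ansatz satisfies the gradient flow ODE and the initial condition. Uniqueness of the solution of the gradient flow (the gradient of \cref{eq:regularized-cross-entropy} is locally Lipschitz in $\mW$, and the bounded-softmax plus strong convexity of the regularizer keeps the trajectory bounded) then forces this constructed $\mW(t)$ to coincide with the actual solution, completing the proof.
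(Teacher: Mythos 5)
Your proposal is correct and follows essentially the same route as the paper: verify that the ansatz $\mW_{0} - \sum_{m}\vs_{m}(t)\vphi_{m}^{\top}$ satisfies the gradient-flow ODE and its initial condition using the gradient decomposition of \cref{lem:decomposition-gradient-hessian}, then conclude by uniqueness of solutions (the paper cites \citealp{wiggins1990dynamicalsystems} where you invoke Picard--Lindel\"{o}f). Your added remark that the rank-one decomposition need not be unique when the $\vphi_{m}$ are linearly dependent---and that this is harmless since only one valid representative is needed---is a correct refinement of the same argument, not a different one.
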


\begin{proof}
    Using \cref{lem:decomposition-gradient-hessian}, the function $\mW(t)$ is the solution of the following differential equation:
    \begin{align*}
        \frac{d\mW}{dt} &= -\nabla \gL\big(\mW(t);f_{\Phi}(\Dtrain)\big)\\
        &= -\frac{1}{M}\sum_{m=1}^{M}\big(\vp_{m}(t) - \vy_{m}\big)\vphi_{m}^{\top} - \lambda \big(\mW(t) - \mW_{0}\big) & \mW(0) &= \mW_{0}.
    \end{align*}
    The proof relies on the unicity of the solution of a given autonomous differential equation given a particular choice of initial conditions \citep[][Prop. 7.4.2]{wiggins1990dynamicalsystems}. In other words, if we can find another function $\widetilde{\mW}(t)$ that also satisfies the differential equation above with the initial conditions $\widetilde{\mW}(0) = \mW_{0}$, then it means that for all $t$ we have $\widetilde{\mW}(t) = \mW(t)$. Suppose that this function $\widetilde{\mW}(t)$ can be written as
    \begin{equation*}
        \widetilde{\mW}(t) = \mW_{0} - \sum_{m=1}^{M}\vs_{m}(t)\vphi_{m}^{\top},
    \end{equation*}
    where $\vs_{m}$ satisfies the dynamical system defined in the statement of \cref{prop:decomposition-W}. Then we have
    \begin{align*}
        -\nabla \gL\big(\widetilde{\mW}(t);f_{\Phi}(\Dtrain)\big) &= -\frac{1}{M}\sum_{m=1}^{M}\big(\vp_{m}(t) - \vy_{m}\big)\vphi_{m}^{\top} - \lambda\big(\widetilde{\mW}(t) - \mW_{0}\big)\\
        &= -\sum_{m=1}^{M}\frac{d\vs_{m}}{dt}\vphi_{m}^{\top} - \underbrace{\lambda \sum_{m=1}^{M}\vs_{m}(t)\vphi_{m}^{\top} - \lambda\big(\widetilde{\mW}(t) - \mW_{0}\big)}_{=\,\vzero}\\
        &= -\sum_{m=1}^{M}\frac{d\vs_{m}}{dt}\vphi_{m}^{\top} = \frac{d\widetilde{\mW}}{dt}
    \end{align*}
    We have shown that $\widetilde{\mW}(t)$ follows the same dynamics as $\mW(t)$. Moreover, using the initial conditions $\vs_{m}(0) = \vzero$, it is clear that $\widetilde{\mW}(0) = \mW_{0}$, which are the same initial conditions as the equation satisfied by $\mW(t)$. Therefore we have $\widetilde{\mW}(t) = \mW(t)$ for all $t$, showing the expected decomposition of $\mW(t)$.
\end{proof}

\subsection{Decomposition of the Jacobian matrix $d\protectbm{W}(t)/d\protectbm{W}_{0}$}
\label{app:proof-W0}
The core objective of gradient-based meta-learning methods is the capacity to compute the meta-gradients wrt. the initial conditions $\mW_{0}$. In order to compute this meta-gradient using forward-mode automatic differentiation, we want to first compute the Jacobian matrix $d\mW(t)/d\mW_{0}$. We show that this Jacobian can be decomposed into smaller quantities that follow the dynamics in \cref{prop:decomposition-W0}.
\begin{proposition}
    Let $f_{\Phi}(\Dtrain) = \{(\vphi_{m}, \vy_{m})\}_{m=1}^{M}$ be the embedded training set through the embedding network $f_{\Phi}$. The Jacobian matrix $d\mW(t)/d\mW_{0}$ can be written as
    \begin{equation*}
        \frac{d\mW(t)}{d\mW_{0}} = \mI - \sum_{i=1}^{M}\sum_{j=1}^{M}\mB_{t}[i, j]\otimes \vphi_{i}\vphi_{j}^{\top},
    \end{equation*}
    where $\otimes$ is the Kronecker product, and for all $i, j$, $\mB_{t}[i,j]$ is a $N \times N$ matrix which the solution of the following dynamical system
    \begin{align*}
        \frac{d\mB_{t}[i, j]}{dt} &= \mathbbm{1}(i = j)\mA_{i}(t) - \lambda \mB_{t}[i, j] - \mA_{i}(t)\sum_{m=1}^{M}\big(\vphi_{i}^{\top}\vphi_{m}\big)\mB_{t}[m, j] & \mB_{0}[i, j] &= \vzero,
    \end{align*}
    and $\mA_{i}(t) = \big(\diag(\vp_{i}(t)) - \vp_{i}(t)\vp_{i}(t)^{\top}\big)/M$ are defined using the vectors of predictions at time $t$ $\vp_{i}(t) = \softmax\big(\mW(t)\vphi_{i}\big)$.
    \label{prop:decomposition-W0}
\end{proposition}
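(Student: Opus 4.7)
The plan is to mirror the strategy used in the proof of Proposition 1: write down the forward sensitivity equation that $d\mathbf{W}(t)/d\mathbf{W}_0$ must satisfy, then guess the ansatz $\widetilde{\mathcal{S}}(t) = \mathbf{I} - \sum_{i,j}\mathbf{B}_t[i,j]\otimes\boldsymbol{\phi}_i\boldsymbol{\phi}_j^\top$ and verify that it solves the same initial value problem. Uniqueness of solutions of an autonomous ODE with Lipschitz right-hand side (invoked in Proposition 1 via Wiggins) then forces $\mathcal{S}(t) = \widetilde{\mathcal{S}}(t)$.

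First, I would set up the sensitivity equation. Since the dynamics $g(\mathbf{W}; \mathbf{W}_0) = -\nabla\mathcal{L}(\mathbf{W}; f_\Phi(\mathcal{D}^{\mathrm{train}}))$ depends on $\mathbf{W}_0$ only through the proximal term $\frac{\lambda}{2}\|\mathbf{W}-\mathbf{W}_0\|_F^2$, Lemma~1 immediately yields $\partial g/\partial \mathbf{W} = -\nabla^2\mathcal{L} = -\sum_m \mathbf{A}_m(t)\otimes\boldsymbol{\phi}_m\boldsymbol{\phi}_m^\top - \lambda\mathbf{I}$ and $\partial g/\partial\mathbf{W}_0 = \lambda\mathbf{I}$. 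Substituting into Equation~\ref{eq:forward-sensitivity-equations} with initial condition $\mathcal{S}(0) = \mathbf{I}$ (since $\mathbf{W}(0)=\mathbf{W}_0$) gives the ODE that any candidate must satisfy. Note that in the unregularized case ($\lambda = 0$) of the main text the $\partial g/\partial\mathbf{W}_0$ term vanishes, which explains why the statement of Proposition~2 contains no $\lambda\mathbf{I}$ forcing term.

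Next, I would differentiate the ansatz. Using the $\mathbf{B}_t[i,j]$ dynamics, $d\widetilde{\mathcal{S}}/dt$ expands into three terms: one diagonal contribution $-\sum_i \mathbf{A}_i(t)\otimes\boldsymbol{\phi}_i\boldsymbol{\phi}_i^\top$ coming from the indicator, a regularization contribution $+\lambda\sum_{i,j}\mathbf{B}_t[i,j]\otimes\boldsymbol{\phi}_i\boldsymbol{\phi}_j^\top$, and a cross term $+\sum_{i,j,m}(\boldsymbol{\phi}_i^\top\boldsymbol{\phi}_m)\mathbf{A}_i(t)\mathbf{B}_t[m,j]\otimes\boldsymbol{\phi}_i\boldsymbol{\phi}_j^\top$. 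I would then compute the RHS of the sensitivity equation evaluated at $\widetilde{\mathcal{S}}$ and verify term-by-term agreement. The key algebraic identity here is the mixed product rule $(A\otimes B)(C\otimes D) = AC\otimes BD$ combined with $\boldsymbol{\phi}_k\boldsymbol{\phi}_k^\top\boldsymbol{\phi}_i = (\boldsymbol{\phi}_k^\top\boldsymbol{\phi}_i)\boldsymbol{\phi}_k$; after a relabeling of the summation index ($k \leftrightarrow i$, $i\to m$) the cross terms match exactly, the $-\lambda\mathbf{I}$ from $\partial g/\partial\mathbf{W}$ acting on $\mathbf{I}$ cancels against the $+\lambda\mathbf{I}$ forcing from $\partial g/\partial\mathbf{W}_0$, and the remaining $+\lambda$ pieces line up. The initial condition is immediate: $\mathbf{B}_0[i,j]=\mathbf{0}$ gives $\widetilde{\mathcal{S}}(0) = \mathbf{I}$, matching $\mathcal{S}(0)$.

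The main obstacle I anticipate is purely bookkeeping: keeping the Kronecker indices straight while expanding $\bigl[\sum_k \mathbf{A}_k\otimes\boldsymbol{\phi}_k\boldsymbol{\phi}_k^\top\bigr]\bigl[\sum_{i,j}\mathbf{B}_t[i,j]\otimes\boldsymbol{\phi}_i\boldsymbol{\phi}_j^\top\bigr]$ and then re-indexing to make it cosmetically identical to the derivative of the ansatz. There is no deep analytical difficulty—uniqueness of the sensitivity ODE does all the work once the ansatz is verified—but a careful choice of dummy indices ($i$ for the outer ``left'' embedding, $j$ for the ``right'' embedding, $m$ for the contracted sum against $\mathbf{B}$) is essential so the reader can see the match without ambiguity. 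A secondary point worth flagging in the proof is that the Lipschitz condition needed for uniqueness holds because $\mathbf{A}_i(t)$ depends on $\mathbf{W}(t)$ only through the softmax $\mathbf{p}_i(t)$, which is smooth, so both $\mathcal{S}$ and $\widetilde{\mathcal{S}}$ are legitimate solutions of a well-posed linear time-varying ODE on $[0,T]$.
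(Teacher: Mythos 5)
Your proposal matches the paper's proof essentially step for step: the same forward-sensitivity setup with $\partial g/\partial \mW_{0} = \lambda\mI$ and $\partial g/\partial \mW = -\nabla^{2}\gL$, the same ansatz $\widetilde{\gS}(t) = \mI - \sum_{i,j}\mB_{t}[i,j]\otimes\vphi_{i}\vphi_{j}^{\top}$, the same term-by-term verification via the mixed Kronecker product rule (including the cancellation of the $-\lambda\mI$ and $+\lambda\mI$ terms), and the same appeal to uniqueness of the initial value problem. The only cosmetic difference is in how uniqueness is justified---the paper augments the state with $\mW(t)$ to make the system autonomous before citing Wiggins, whereas you invoke well-posedness of the linear time-varying ODE directly---and both are fine.
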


\begin{proof}
    We will use the forward sensitivity equation from \cref{sec:local-sensitivity-analysis-odes} in order to derive this new dynamical system over $\mB_{t}[i, j]$. Recall that to simplify the notations we can write the gradient vector field followed during adaptation as
    \begin{equation*}
        \frac{d\mW}{dt} = g\big(\mW(t);f_{\Phi}(\Dtrain)\big) \triangleq -\nabla\gL\big(\mW(t);f_{\Phi}(\Dtrain)\big).
    \end{equation*}
    Introducing the matrix-valued function $\gS(t) = d\mW(t)/d\mW_{0}$ as a sensitivity state, we can use the forward sensitivity equations and see that $\gS(t)$ satisfies the following equation
    \begin{align*}
        \frac{d\gS}{dt} &= \frac{\partial g\big(\mW(t)\big)}{\partial \mW(t)}\gS(t) + \frac{\partial g\big(\mW(t)\big)}{\partial \mW_{0}} & \gS(0) &= \mI\\
        &= -\nabla^{2}\gL\big(\mW(t);f_{\Phi}(\Dtrain)\big)\gS(t) + \lambda \mI.
    \end{align*}
    The rest of the proof is based on the unicity of the solution of a given autonomous\footnote{While the dynamical system defined here for the sensitivity state $\gS(t)$ alone is not exactly autonomous due to the dependence of the Hessian matrix on $\mW(t)$, we could augment $\gS(t)$ with $\mW(t)$ to obtain an autonomous system on the augmented state. We come back to this distinction in \cref{app:proofs-stability}. The unicity argument still holds here (the augmented solution would be unique).} differential equation given a particular choice of initial conditions \citep[][Prop. 7.4.2]{wiggins1990dynamicalsystems}. In other words, if we can find another function $\widetilde{\gS}(t)$ that also satisfies the above differential equation with the initial conditions $\widetilde{\gS}(0) = \mI$, then it means that for all $t$ we have $\widetilde{\gS}(t) = \gS(t)$. Suppose that this function can be written as
    \begin{equation*}
        \widetilde{\gS}(t) = \mI - \sum_{i=1}^{M}\sum_{j=1}^{M}\mB_{t}[i,j] \otimes \vphi_{i}\vphi_{j}^{\top},
    \end{equation*}
    where $\mB_{t}[i, j]$ satisfies the dynamical system defined in the statement of \cref{prop:decomposition-W0}. Then we have, using \cref{lem:decomposition-gradient-hessian}:
    \begingroup
    \allowdisplaybreaks
    \begin{align*}
        &-\nabla^{2}\gL\big(\mW(t);f_{\Phi}(\Dtrain)\big)\widetilde{\gS}(t) + \lambda \mI\\
        &\qquad = -\bigg[\sum_{i=1}^{M}\mA_{i}(t)\otimes \vphi_{i}\vphi_{i}^{\top} + \lambda \mI\bigg]\bigg[\mI - \sum_{m,j}\mB_{t}[m,j]\otimes \vphi_{m}\vphi_{j}^{\top}\bigg] + \lambda \mI\\
        &\qquad = -\sum_{i=1}^{M}\mA_{i}(t)\otimes \vphi_{i}\vphi_{i}^{\top} + \sum_{i,j,m}\big(\mA_{i}(t)\mB_{t}[m,j]\big)\otimes \big(\vphi_{i}\underbrace{\vphi_{i}^{\top}\vphi_{m}}_{\in\,\sR}\vphi_{j}^{\top}\big) + \lambda \sum_{i, j}\mB_{t}[i, j] \otimes \vphi_{i}\vphi_{j}^{\top}\\
        &\qquad = -\sum_{i,j}\bigg[\mathbbm{1}(i=j)\mA_{i}(t) - \lambda \mB_{t}[i, j] - \mA_{i}(t)\sum_{m=1}^{M}\big(\vphi_{i}^{\top}\vphi_{m}\big)\mB_{t}[m,j]\bigg]\otimes \vphi_{i}\vphi_{j}^{\top}\\
        &\qquad = -\sum_{i,j}\frac{d\mB_{t}[i,j]}{dt}\otimes \vphi_{i}\vphi_{j}^{\top} = \frac{d\widetilde{\gS}}{dt}
    \end{align*}
    \endgroup
    We have shown that $\widetilde{\gS}(t)$ follows the same dynamics as $\gS(t)$. Moreover, using the initial conditions $\mB_{0}[i, j] = \vzero$, it is clear that $\widetilde{\gS}(0) = \mI$, which are the same initial conditions as the equation satisfied by $\gS(t)$. Therefore, we have $\widetilde{\gS}(t) = \gS(t)$ for all $t$, showing the expected decomposition of $\gS(t) = d\mW(t)/d\mW_{0}$.
\end{proof}

\subsection{Decomposition of the Jacobian matrix $d\protectbm{W}(t)/d\protectbm{\phi}_{m}$}
\label{app:proof-phi}
Similar to \cref{prop:decomposition-W0}, there exists a decomposition of the Jacobian matrix $d\mW(T)/d\vphi_{m}$. Recall that this Jacobian matrix appears in the computation of the gradient of the outer-loss wrt. the embedding vectors $\vphi_{m}$, which is necessary in order to compute the meta-gradients wrt. the meta-parameters of the embedding network $f_{\Phi}$ using backpropagation from the last layer of $f_{\Phi}$.
\begin{proposition}
    Let $f_{\Phi}(\Dtrain) = \{(\vphi_{m}, \vy_{m})\}_{m=1}^{M}$ be the embedded training set through the embedding network $f_{\Phi}$. For all $m$, the Jacobian matrix $d\mW(t)/d\vphi_{m}$ can be written as
    \begin{equation*}
        \frac{d\mW(t)}{d\vphi_{m}} = -\bigg[\vs_{m}(t)\otimes \mI_{d} + \sum_{i=1}^{M}\mB_{t}[i,m]\mW_{0}\otimes \vphi_{i} + \sum_{i=1}^{M}\sum_{j=1}^{M}\vz_{t}[i, j, m]\vphi_{j}^{\top}\otimes \vphi_{i}\bigg],
    \end{equation*}
    where $\vs_{m}(t)$ and $\mB_{t}[i, j]$ are the solutions of the ODEs defined in \cref{prop:decomposition-W,prop:decomposition-W0}, and for all $i,j,m$, $\vz_{t}[i,j,m]$ is a vector of length $N$ solution of the following dynamical system:
    \begin{equation*}
        \frac{d\vz_{t}[i,j,m]}{dt} = -\mA_{i}(t)\bigg[\mathbbm{1}(i=j)\vs_{m}(t) + \mathbbm{1}(i=m)\vs_{j}(t) + \lambda \vz_{t}[i, j, m] + \sum_{k=1}^{M}\big(\vphi_{i}^{\top}\vphi_{k}\big)\vz_{t}[k,j,m]\bigg],
    \end{equation*}
    with the initial conditions $\vz_{0}[i, j, m] = \vzero$.
    \label{prop:decomposition-phi}
\end{proposition}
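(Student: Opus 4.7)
The plan is to mirror the strategy used in the proof of \cref{prop:decomposition-W0}, namely set up the forward sensitivity equation for the state $\gS_m(t) \triangleq d\mW(t)/d\vphi_{m}$, write down a candidate decomposition matching the claimed form, and verify by direct differentiation that it satisfies the same ODE with the same initial conditions, invoking the uniqueness of solutions to an autonomous ODE \citep[][Prop. 7.4.2]{wiggins1990dynamicalsystems} to conclude. Writing $g(\mW(t);f_{\Phi}(\Dtrain)) = -\nabla\gL(\mW(t);f_{\Phi}(\Dtrain))$, the forward sensitivity equation reads
\begin{align*}
\frac{d\gS_m}{dt} &= -\nabla^{2}\gL\big(\mW(t);f_{\Phi}(\Dtrain)\big)\gS_m(t) + \frac{\partial g\big(\mW(t);f_{\Phi}(\Dtrain)\big)}{\partial \vphi_{m}} & \gS_m(0) &= \vzero,
\end{align*}
where the zero initial condition follows because $\mW(0) = \mW_{0}$ is independent of $\vphi_m$.

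First I would compute the inhomogeneous term $\partial g/\partial \vphi_{m}$ in closed form. Starting from $\nabla\gL = \frac{1}{M}\sum_{i} (\vp_{i} - \vy_{i})\vphi_{i}^{\top} + \lambda(\mW - \mW_{0})$, only the $i = m$ summand depends on $\vphi_{m}$, and there through two channels: (i) the explicit factor $\vphi_m^\top$, contributing a piece proportional to $(\vp_{m} - \vy_{m}) \otimes \mI_d$ after appropriate flattening, and (ii) the softmax dependence $\vp_{m} = \softmax(\mW \vphi_{m})$, whose Jacobian with respect to $\vphi_m$ is $M\mA_m(t)\mW(t)$. Paired with the Hessian decomposition from \cref{lem:decomposition-gradient-hessian}, this produces all the driving terms for the sensitivity equation. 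The presence of $\mW(t)$ here will be crucial: substituting the decomposition $\mW(t) = \mW_0 - \sum_j \vs_j(t)\vphi_j^\top$ from \cref{prop:decomposition-W} is precisely what produces the $\mB_t[i,m]\mW_0$ and $\vs_j(t)$ contributions in the claimed form.

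Next, I would plug in the ansatz
\begin{equation*}
\widetilde{\gS}_m(t) = -\bigg[\vs_{m}(t)\otimes \mI_d + \sum_{i=1}^{M}\mB_{t}[i,m]\mW_{0}\otimes \vphi_{i} + \sum_{i=1}^{M}\sum_{j=1}^{M}\vz_{t}[i,j,m]\vphi_{j}^{\top}\otimes \vphi_{i}\bigg]
\end{equation*}
and differentiate term by term in $t$ using the dynamics already established for $\vs_m$ (\cref{prop:decomposition-W}) and $\mB_t[i,j]$ (\cref{prop:decomposition-W0}), together with the claimed dynamics for $\vz_t[i,j,m]$. On the other side, I would expand $-\nabla^{2}\gL \cdot \widetilde{\gS}_m + \partial g/\partial \vphi_m$, using the Hessian decomposition and the identity $(\mA\otimes\vphi\vphi^\top)(\mC\otimes\vphi_k) = \mA\mC\otimes (\vphi_i^\top\vphi_k)\vphi_i$ to collapse Kronecker products against the outer-product structure. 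Matching coefficients of the three independent tensor types $\mI_d$, $\otimes\vphi_i$, and $\vphi_j^\top\otimes\vphi_i$ will pin down exactly the dynamics claimed for $\vz_t[i,j,m]$, with the $\mathbbm{1}(i=j)\vs_m(t)$ term arising from the $\nabla^{2}\gL$ acting on the $\vs_m(t)\otimes\mI_d$ piece, and the $\mathbbm{1}(i=m)\vs_j(t)$ term arising from the $\partial g/\partial \vphi_m$ contribution through the softmax channel.

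The main obstacle will be the bookkeeping: keeping the Kronecker-product conventions consistent when indexing a flattened Jacobian of shape $Nd \times d$, and correctly tracing how the two distinct appearances of $\vphi_m$ in the gradient (explicit outer product versus inside the softmax) produce the two asymmetric indicator terms $\mathbbm{1}(i=j)$ and $\mathbbm{1}(i=m)$ in the dynamics of $\vz_t[i,j,m]$. Once the matching is done, the initial conditions $\vs_m(0) = \vzero$, $\mB_0[i,j] = \vzero$, and $\vz_0[i,j,m] = \vzero$ immediately yield $\widetilde{\gS}_m(0) = \vzero = \gS_m(0)$, and uniqueness of the ODE solution (applied to the joint autonomous system in $\{\mW(t), \vs_m(t), \mB_t[i,j], \vz_t[i,j,m]\}$) concludes $\widetilde{\gS}_m(t) = \gS_m(t)$ for all $t$.
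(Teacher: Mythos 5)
Your proposal is correct and follows essentially the same route as the paper's proof: set up the forward sensitivity equation for $\gS(t)=d\mW(t)/d\vphi_{m}$ with zero initial condition, compute the inhomogeneous term $\partial g/\partial\vphi_{m}$ through both the explicit $\vphi_{m}^{\top}$ factor and the softmax channel (substituting $\mW(t)=\mW_{0}-\sum_{j}\vs_{j}(t)\vphi_{j}^{\top}$), verify the ansatz term by term against the Hessian decomposition of \cref{lem:decomposition-gradient-hessian}, and conclude by uniqueness of the solution. Your attribution of the $\mathbbm{1}(i=j)\vs_{m}(t)$ and $\mathbbm{1}(i=m)\vs_{j}(t)$ terms to the Hessian acting on $\vs_{m}(t)\otimes\mI_{d}$ and to the softmax channel of $\partial g/\partial\vphi_{m}$, respectively, matches the paper's computation exactly.
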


\begin{proof}
    The outline of this proof follows the proof of \cref{prop:decomposition-W0}: we first use the forward sensitivity equations to get the dynamical system followed by the Jacobian matrix $d\mW(t)/d\vphi_{m}$, and then use a unicity argument given that the new decomposition satisfies the same ODE. Recall that we use the following notation to write the gradient vector field followed during adaptation:
    \begin{equation*}
        \frac{d\mW}{dt} = g\big(\mW(t);f_{\Phi}(\Dtrain)\big)\triangleq -\nabla \gL\big(\mW(t);f_{\Phi}(\Dtrain)\big).
    \end{equation*}
    For a fixed $m$, we introduce the matrix valued function $\gS(t) = d\mW(t)/d\vphi_{m}$ as a sensitivity state. We can use the forward sensitivity equations, and see that $\gS(t)$ satisfies the following equation
    \begin{align*}
        \frac{d\gS}{dt} &= \frac{\partial g\big(\mW(t), \vphi_{m}\big)}{\partial \mW(t)}\gS(t) + \frac{\partial g\big(\mW(t), \vphi_{m}\big)}{\partial \vphi_{m}} & \gS(0) &= \frac{d\mW(0)}{d\vphi_{m}} = \vzero\\
        &= -\nabla^{2}\gL\big(\mW(t);f_{\Phi}(\Dtrain)\big)\gS(t) - \mA_{m}(t)\mW_{0}\otimes \vphi_{m} \\
        & \qquad + \mA_{m}(t)\sum_{i=1}^{M}\vs_{i}(t)\vphi_{i}^{\top}\otimes \vphi_{m} - \frac{1}{M}\big(\vp_{m}(t) - \vy_{m}\big)\otimes \mI_{d}
    \end{align*}
    where we make the direct dependence of $g$ on $\vphi_{m}$ explicit, and we used the decomposition of $\mW(t)$ from \cref{prop:decomposition-W}. Suppose that we define the function $\widetilde{\gS}(t)$ as
    \begin{equation*}
        \widetilde{\gS}(t) = -\bigg[\vs_{m}(t) \otimes \mI_{d} + \sum_{i=1}^{M}\mB_{t}[i,m]\mW_{0}\otimes\vphi_{i} + \sum_{i,j}\vz_{t}[i,j,m]\vphi_{j}^{\top}\otimes \vphi_{i}\bigg],
    \end{equation*}
    where $\vz_{t}[i,j,m]$ satisfies the dynamical system defined in the statement of \cref{prop:decomposition-phi}. Then we have, using \cref{lem:decomposition-gradient-hessian}:
    \begin{align*}
        & -\!\nabla^{2}\gL\big(\mW(t);f_{\Phi}(\Dtrain)\big)\widetilde{\gS}(t) - \mA_{m}(t)\bigg[\mW_{0} -  \sum_{i=1}^{M}\vs_{i}(t)\vphi_{i}^{\top}\bigg]\otimes \vphi_{m} - \frac{1}{M}\big(\vp_{m}(t) - \vy_{m}\big)\otimes \mI_{d}\\
        =\ & \bigg[\sum_{i=1}^{M}\mA_{i}(t)\otimes \vphi_{i}\vphi_{i}^{\top} + \lambda \mI\bigg]\bigg[\vs_{m}(t)\otimes \mI_{d} + \sum_{i=1}^{M}\mB_{t}[i,m]\mW_{0}\otimes \vphi_{i} + \sum_{i,j}\vz_{t}[i,j,m]\vphi_{j}^{\top}\otimes \vphi_{i}\bigg]\\
        & \qquad -\mA_{m}(t)\mW_{0}\otimes \vphi_{m} + \mA_{m}(t)\sum_{i=1}^{M}\vs_{i}(t)\vphi_{i}^{\top}\otimes \vphi_{m} - \frac{1}{M}\big(\vp_{m}(t) - \vy_{m}\big)\otimes \mI_{d}\\
        =\ & -\bigg[\underbrace{\vphantom{\sum_{i=1}^{M}}\frac{1}{M}\big(\vp_{m}(t) - \vy_{m}\big) - \lambda \vs_{m}(t)}_{=\,d\vs_{m}/dt}\bigg]\otimes \mI_{d}\\
        & \qquad - \sum_{i=1}^{M}\bigg[\underbrace{\mathbbm{1}(i = m)\mA_{i}(t) - \lambda \mB_{t}[i, m] - \mA_{i}(t)\sum_{k=1}^{M}\big(\vphi_{i}^{\top}\vphi_{k}\big)\mB_{t}[k,m]}_{=\,d\mB_{t}[i,m]/dt}\bigg]\mW_{0}\otimes \vphi_{i}\\
        & \qquad + \sum_{i,j}\underbrace{\mA_{i}(t)\bigg[\mathbbm{1}(i=j)\vs_{m}(t) + \mathbbm{1}(i=m)\vs_{j}(t) + \lambda \vz_{t}[i, j, m] + \sum_{k=1}^{M}\big(\vphi_{i}^{\top}\vphi_{k}\big)\vz_{t}[k,j,m]\bigg]}_{=\,-d\vz_{t}[i,j,m]/dt}\vphi_{j}^{\top}\otimes \vphi_{i}\\
        =\ & \frac{d\widetilde{\gS}}{dt}
    \end{align*}
    We have shown that $\widetilde{\gS}(t)$ follows the same dynamics as $\gS(t)$. Moreover using the initial conditions $\vs_{m}(0) = \vzero$, $\mB_{0}[i, j] = \vzero$, and $\vz_{0}[i, j, m] = \vzero$, we have $\widetilde{\gS}(0) = \vzero$, which are the same initial conditions as the equation satisfied by $\gS(t)$. Therefore we have $\widetilde{\gS}(t) = \gS(t)$ for all $t$, showing the expected decomposition of the Jacobian matrix $\gS(t) = d\mW(t) / d\vphi_{m}$.
\end{proof}

\subsection{Proof of the meta-gradient wrt. $T$}
\label{app:proof-T}
The novelty of COMLN over prior work on gradient-based meta-learning is the ability to meta-learn the amount of adaptation $T$ using SGD. To compute the meta-gradient wrt. the time horizon $T$, we can apply the result from \citet{chen2018neuralode}.

\begin{proposition}
    Let $f_{\Phi}(\Dtrain)$ and $f_{\Phi}(\Dtest)$ be the embedding through the network $f_{\Phi}$ of the training and test set respectively. The gradient of the outer-loss wrt. the time horizon $T$ is given by:
    \begin{equation*}
        \frac{d\gL\big(\mW(T);f_{\Phi}(\Dtest)}{dT} = -\bigg[\frac{\partial \gL\big(\mW(T);f_{\Phi}(\Dtest)\big)}{\partial \mW(T)}\bigg]^{\top}\frac{\partial \gL\big(\mW(T);f_{\Phi}(\Dtrain)\big)}{\partial \mW(T)}.
    \end{equation*}
    \label{prop:proof-meta-gradient-T}
\end{proposition}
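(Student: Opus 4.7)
The plan is to apply the chain rule to the composition $T \mapsto \mW(T) \mapsto \gL\big(\mW(T); f_{\Phi}(\Dtest)\big)$, and then use the fact that $\mW(T)$ solves an autonomous ODE to obtain a closed-form expression for $d\mW(T)/dT$. Since the outer-loss depends on $T$ only through $\mW(T)$ (the test set embedding $f_{\Phi}(\Dtest)$ is independent of $T$), the chain rule gives
\begin{equation*}
    \frac{d\gL\big(\mW(T);f_{\Phi}(\Dtest)\big)}{dT} = \bigg[\frac{\partial \gL\big(\mW(T);f_{\Phi}(\Dtest)\big)}{\partial \mW(T)}\bigg]^{\top}\frac{d\mW(T)}{dT}.
\end{equation*}

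Next, I would observe that by definition $\mW(t)$ is the solution of the gradient-vector-field ODE from \cref{eq:comln-meta-learning-adaptation},
\begin{equation*}
    \frac{d\mW}{dt} = -\nabla \gL\big(\mW(t);f_{\Phi}(\Dtrain)\big),
\end{equation*}
and this identity, being a statement about the time derivative of the trajectory, holds at $t = T$ in particular. Identifying the gradient $\nabla \gL$ with the partial derivative $\partial \gL / \partial \mW(T)$, this yields $d\mW(T)/dT = -\partial \gL\big(\mW(T);f_{\Phi}(\Dtrain)\big) / \partial \mW(T)$, and plugging this into the chain-rule expression above gives exactly the claimed formula.

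The one subtle point to be careful about is distinguishing the \emph{total} derivative $d\mW(T)/dT$ from a partial derivative with respect to an endpoint: since the trajectory $\mW(\cdot)$ itself does not depend on $T$ (only where we sample it does), $d\mW(T)/dT$ really is just the velocity of the trajectory at time $T$. This is essentially the statement that for an autonomous ODE $d\vz/dt = g(\vz)$, the map $T \mapsto \vz(T)$ has derivative $g(\vz(T))$. I would cite this identity as the analogue of the result used by \citet{chen2018neuralode} for Neural ODEs, which the proposition statement already points to. There is no real obstacle here — the proof is a one-line application of the chain rule combined with the defining ODE, and no sensitivity analysis (neither adjoint nor forward) is needed for this particular meta-gradient.
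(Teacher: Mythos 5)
Your proof is correct and is essentially the same as the paper's: the paper simply cites the result of \citet{chen2018neuralode} (their Appendix B.2) that $d\widetilde{\gL}/dT = \va(T)^{\top}g(\vz(T))$ and identifies $g$ with $-\nabla\gL(\cdot;f_{\Phi}(\Dtrain))$ and $\va(T)$ with the test-loss partial derivative, which is exactly the chain-rule-plus-velocity argument you spell out from first principles. Your additional remark that the trajectory of an autonomous ODE does not itself depend on $T$, so $d\mW(T)/dT$ is just the velocity at the endpoint, is the right thing to be careful about and matches the cited identity.
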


\begin{proof}
    We can directly apply the result from \citep[][App. B.2]{chen2018neuralode}, which we recall here for completeness. Given the ODE $d\vz/dt = g(\vz(t))$, the gradient of the loss $\widetilde{\gL}\big(\vz(T)\big)$ wrt. $T$ is given by
    \begin{align*}
        \frac{d\widetilde{\gL}}{dT} &= \va(T)^{\top}g(\vz(T)) & \textrm{where} \quad \va(T) &= \frac{\partial \widetilde{\gL}}{\partial \vz(T)}
    \end{align*}
    is the \emph{initial adjoint state} (see \cref{sec:local-sensitivity-analysis-odes}). In our case we have
    \begin{align*}
        g(\vz(T)) &\equiv -\nabla\gL\big(\mW(T);f_{\Phi}(\Dtrain)\big) & \va(T) &\equiv \frac{\partial\gL\big(\mW(T);f_{\Phi}(\Dtest)\big)}{\partial \mW(T)}.
    \end{align*}
\end{proof}

\section{Projection onto the Jacobian matrices}
\label{app:projection}
Once we have computed the Jacobian $d\mW(T)/d\vtheta$ wrt. some meta-parameter $\vtheta$ (in our case, either the initial conditions $\mW_{0}$ or the embedding vectors $\vphi_{m}$), we only have to project the vector of partial derivatives onto it to obtain the meta-gradients (vector-Jacobian product), using the chain rule:
\begin{equation*}
    \frac{d\gL\big(\mW(T);f_{\Phi}(\Dtest)\big)}{d\vtheta} = \frac{\partial \gL\big(\mW(T);f_{\Phi}(\Dtest)\big)}{\partial \mW(T)}\frac{d\mW(T)}{d\vtheta} + \frac{\partial \gL\big(\mW(T);f_{\Phi}(\Dtest)\big)}{\partial \vtheta}.
\end{equation*}
While we have shown how to decompose the Jacobian matrices in such a way that it only involves quantities that are independent of $d$ the dimension of the embedding vectors $\vphi$, this final projection a priori requires us to form the Jacobian explicitly. Even though this operation is done only once at the end of the integration, this may be overly expensive since the Jacobian matrices scale quadratically with $d$, which can be as high as $d=16,\!000$ in our experiments.

Fortunately, we can perform this projection as a function of $\vs_{m}(T)$, $\mB_{T}[i, j]$, and $\vz_{T}[i, j, m]$, without having to explicitly form the full Jacobian matrix, by exchanging the order of operations.

\begin{proposition}
    Let $f_{\Phi}(\Dtrain) = \{(\vphi_{m}, \vy_{m})\}_{m=1}^{M}$ and $f_{\Phi}(\Dtest)$ be the embedded training set and test set through the embedding network $f_{\Phi}$ respectively. Let $\mB_{T}[i, j]$ be the solution at time $T$ of the differential equation defined in \cref{prop:decomposition-W0}, and $\mV(T) \in \sR^{N\times d}$ the partial derivative of the outer-loss wrt. the adapted parameters $\mW(T)$:
    \begin{align*}
        \mV(T) &= \frac{\partial \gL\big(\mW(T);f_{\Phi}(\Dtest)\big)}{\partial \mW(T)} & \textrm{so that} \quad \frac{d\gL\big(\mW(T);f_{\Phi}(\Dtest)\big)}{d\mW_{0}} &= \mathrm{Vec}\big(\mV(T)\big)^{T}\frac{d\mW(T)}{d\mW_{0}}.
    \end{align*}
    Let $\vphi = [\vphi_{1}, \ldots, \vphi_{M}]^{\top} \in \sR^{M \times d}$ be the design matrix. The gradient of the outer-loss wrt. the initial conditions $\mW_{0}$ can be computed as
    \begin{align*}
        \frac{d\gL\big(\mW(T);f_{\Phi}(\Dtest)\big)}{d\mW_{0}} &= \mV(T) - \mC^{\top}\vphi,
    \end{align*}
    where the rows of $\mC \in \sR^{M \times N}$ are defined by
    \begin{equation*}
        \mC_{j} = \sum_{i=1}^{M}\vphi_{i}^{\top}\mV(T)^{\top}\mB_{T}[i, j].
    \end{equation*}
    \label{prop:projection-W0}
\end{proposition}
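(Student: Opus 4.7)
The plan is to start from the chain rule and the decomposition established in \cref{prop:decomposition-W0}, and to rearrange a double sum of rank-one outer products so that the full $Nd\times Nd$ Jacobian is never materialized. Since $\gL(\mW(T); f_\Phi(\Dtest))$ depends on $\mW_0$ only through $\mW(T)$, the chain rule gives
\begin{equation*}
\mathrm{Vec}\!\left(\frac{d\gL}{d\mW_{0}}\right)^{\!\top} \!= \mathrm{Vec}(\mV(T))^{\top}\frac{d\mW(T)}{d\mW_0} = \mathrm{Vec}(\mV(T))^{\top}\!\left[\mI - \sum_{i=1}^{M}\sum_{j=1}^{M}\mB_T[i,j]\otimes \vphi_i\vphi_j^{\top}\right].
\end{equation*}
The identity term contributes exactly $\mV(T)$, so the task reduces to simplifying the Kronecker-product piece.

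For each pair $(i,j)$, I would apply the standard identity $(\mB\otimes \mC)^{\top}\mathrm{Vec}(\mV) = \mathrm{Vec}(\mB^{\top}\mV\mC^{\top})$ in the vectorization convention used by the decomposition of \cref{prop:decomposition-W0}. This rewrites the vec-Kronecker product as a pure matrix expression, yielding that the double sum contributes, after reshaping back to an $N\times d$ matrix,
\begin{equation*}
\sum_{i=1}^{M}\sum_{j=1}^{M}\mB_T[i,j]^{\top}\mV(T)\,\vphi_i\vphi_j^{\top}.
\end{equation*}
Crucially, this expression contains only multiplications of the small $N\times N$ matrices $\mB_T[i,j]$ with $\mV(T)$ and with the feature vectors $\vphi_i$, so no $Nd \times Nd$ object is ever formed.

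Next, I would isolate the $j$-index by introducing the column vectors $\vc_j \triangleq \sum_{i=1}^{M}\mB_T[i,j]^{\top}\mV(T)\vphi_i \in \sR^{N}$, which collapses the double sum into $\sum_{j=1}^{M}\vc_j \vphi_j^{\top}$. Transposing, $\vc_j^{\top} = \sum_{i}\vphi_i^{\top}\mV(T)^{\top}\mB_T[i,j]$ is exactly the $j$th row $\mC_j$ of the $M\times N$ matrix $\mC$ in the statement. Stacking the $\vphi_j^{\top}$ as rows of the design matrix $\vphi \in \sR^{M\times d}$, the sum rewrites compactly as $\mC^{\top}\vphi$, and combining with the identity contribution gives the claimed formula $\mV(T) - \mC^{\top}\vphi$.

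The main subtlety, rather than a genuine obstacle, is keeping the vectorization convention consistent with the one implicitly used by \cref{prop:decomposition-W0}: the Kronecker-product identity invoked above must match that convention, otherwise a transpose in $\mB_T[i,j]$ or a swap of $\vphi_i$ and $\vphi_j$ creeps in. Once that is pinned down, the remainder is purely bookkeeping on a double sum of rank-one outer products, which is precisely what makes the projection implementable with cost independent of $d$.
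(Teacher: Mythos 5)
Your proof is correct and follows essentially the same route as the paper's: the paper carries out the vector--Jacobian product by explicit index bookkeeping on $\mF[dk+x,\,dl+y]$ and arrives at exactly your intermediate expression $\sum_{i,j}\mB_{T}[i,j]^{\top}\mV(T)\vphi_{i}\vphi_{j}^{\top}$, which it then identifies with $\mC^{\top}\vphi$ via the same $\vc_{j}$ you introduce. The only nit is that the Kronecker identity as you state it, $(\mB\otimes\mC)^{\top}\mathrm{Vec}(\mV)=\mathrm{Vec}(\mB^{\top}\mV\mC^{\top})$, carries a spurious transpose on the second factor under the row-major vectorization implied by the paper's index encoding---it should read $\mathrm{Vec}(\mB^{\top}\mV\mC)$, which with $\mC=\vphi_{i}\vphi_{j}^{\top}$ yields precisely the expression you wrote down, whereas the literal form you quoted would swap $\vphi_{i}$ and $\vphi_{j}$, the very convention pitfall you correctly flagged.
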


\begin{proof}
In this proof, we will make the encoding of the indices for the Jacobian $d\mW(T)/d\mW_{0}$ more explicit, as mentioned in \cref{app:proofs-meta-gradients}. Recall from \cref{prop:decomposition-W0} that this Jacobian matrix can be decomposed as
\begin{equation*}
    \frac{d\mW(T)}{d\mW_{0}} = \mI - \sum_{i, j}\mB_{T}[i, j] \otimes \vphi_{i}\vphi_{j}^{\top}.
\end{equation*}
Introducing the following notations
\begin{align*}
    \mF &\triangleq \sum_{i,j}\mB_{T}[i, j]\otimes \vphi_{i}\vphi_{j}^{\top} \in \sR^{Nd\times Nd} & \mG &\triangleq \mathrm{Vec}\big(\mV(T)\big)^{\top}\mF \in \sR^{Nd},
\end{align*}
for all $l \in \{0, \ldots, N - 1\}$ and $y \in \{0, \ldots, d - 1\}$:
\begingroup
\allowdisplaybreaks
\begin{align*}
    \mG[dl + y] &= \sum_{k=0}^{N-1}\sum_{x=0}^{d-1}\mV_{T}[k, x]\mF[dk + x, dl + y]\\
    &= \sum_{i,j}\sum_{k=0}^{N-1}\bigg[\underbrace{\sum_{x=0}^{d-1}\mV_{T}[k, x]\vphi_{i}[x]}_{=\,[\mV(T)\vphi_{i}]_{k}}\bigg]\mB_{T}[i, j, k, l]\vphi_{j}[y]\\
    &= \sum_{i,j}\bigg[\underbrace{\sum_{k=0}^{N-1}[\mV(T)\vphi_{i}]_{k}\mB_{T}[i,j,k,l]}_{=\,[\vphi_{i}^{\top}\mV(T)^{\top}\mB_{T}[i, j]]_{l}}\bigg]\vphi_{j}[y]\\
    &= \sum_{j=1}^{M}\bigg[\underbrace{\sum_{i=1}^{M}\big[\vphi_{i}^{\top}\mV(T)^{\top}\mB_{T}[i, j]\big]_{l}}_{=\,\mC_{j,l}}\bigg]\vphi_{j}[y] = \big[\mC^{\top}\vphi\big]_{l, y}
\end{align*}
\endgroup
This shows that $\mG$ is equal to $\mC^{\top}\vphi$ up to reshaping. Using the full form of the Jacobian, including $\mI$, concludes the proof.
\end{proof}
An interesting observation is that if the term $\mC^{\top}\vphi$ is ignored in the computation of the meta-gradient, we recover an equivalent of the first-order approximation introduced in \citet{finn17maml}. Similarly, we can show that we can perform the projection onto the Jacobian matrix $d\mW(T)/d\vphi_{m}$ without having to form the explicit $Nd\times d$ matrix.

\begin{proposition}
Let $f_{\Phi}(\Dtrain) = \{(\vphi_{m}, \vy_{m})\}_{m=1}^{M}$ and $f_{\Phi}(\Dtest)$ be the embedded training set and test set through the embedding network $f_{\Phi}$ respectively. Let $\vs_{m}(T)$ be the solution at time $T$ of the differential equation defined in \cref{prop:decomposition-W}, $\mB_{T}[i, j]$ the solution of the one defined in \cref{prop:decomposition-W0}, and $\vz_{T}[i, j, m]$ the solution of the one defined in \cref{prop:decomposition-phi}. Let $\mV(T)\in \sR^{N \times d}$ be the partial derivative of the outer-loss wrt. the adapted parameters $\mW(T)$:
\begin{equation*}
    \mV(T) = \frac{\partial \gL\big(\mW(T);f_{\Phi}(\Dtest)\big)}{\partial \mW(T)}.
\end{equation*}
Let $\vphi = [\vphi_{1}, \ldots, \vphi_{M}]^{\top} \in \sR^{M \times d}$ be the design matrix. The projection of $\mV(T)$ onto the Jacobian matrix $d\mW(T)/d\vphi_{m}$ can be computed as
\begin{equation*}
    \mathrm{Vec}\big(\mV(T)\big)^{\top}\frac{d\mW(T)}{d\vphi_{m}} = -\big[\vs_{m}(T)^{\top}\mV(T) + \mC_{m}\mW_{0} + \mD_{m}\vphi\big],
\end{equation*}
where $\mC_{m} \in \sR^{N}$ and $\mD_{m} \in \sR^{M}$ are defined by
\begin{align*}
    \mC_{m} &= \sum_{i=1}^{M}\vphi_{i}^{\top}\mV(T)^{\top}\mB_{T}[i, m] & \mD_{m,j} &= \sum_{i=1}^{M}\vz_{T}[i,j,m]^{\top}\mV(T)\vphi_{i}
\end{align*}
\label{prop:projection-phi}
\end{proposition}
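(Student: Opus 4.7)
The plan is to mimic the strategy used for Proposition~\ref{prop:projection-W0}: substitute the three-term Kronecker decomposition of $d\mW(T)/d\vphi_{m}$ from Proposition~\ref{prop:decomposition-phi} into the vector-Jacobian product, and then evaluate each resulting term entrywise using the same row-major indexing convention $\mathrm{Vec}\big(\mV(T)\big)[dk + x] = \mV(T)[k, x]$ that was fixed in the earlier proof. Since $d\mW(T)/d\vphi_{m}$ is an $Nd \times d$ matrix (the Jacobian of $\mW(T) \in \sR^{N \times d}$ with respect to $\vphi_{m} \in \sR^{d}$), the product $\mathrm{Vec}\big(\mV(T)\big)^{\top}\,d\mW(T)/d\vphi_{m}$ is a row vector of length $d$ whose $y$-th entry I will compute piece by piece.

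First I would split $\mathrm{Vec}\big(\mV(T)\big)^{\top}\,d\mW(T)/d\vphi_{m}$ into $-[T_{1} + T_{2} + T_{3}]$ corresponding respectively to the three Kronecker summands $\vs_{m}(T) \otimes \mI_{d}$, $\sum_{i}\mB_{T}[i,m]\mW_{0} \otimes \vphi_{i}$, and $\sum_{i,j}\vz_{T}[i,j,m]\vphi_{j}^{\top} \otimes \vphi_{i}$. For $T_{1}$, the factor $\mI_{d}$ in the Kronecker product forces $x = y$ in the inner summation, collapsing the $d$-dimensional index and leaving $[T_{1}]_{y} = \sum_{k}\vs_{m}(T)[k]\,\mV(T)[k,y] = [\vs_{m}(T)^{\top}\mV(T)]_{y}$. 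For $T_{2}$, the second tensor factor $\vphi_{i}$ depends only on the $x$-index while the first factor $\mB_{T}[i,m]\mW_{0}$ depends only on $(k,y)$, so the sum factorizes as in Proposition~\ref{prop:projection-W0}, yielding $[T_{2}]_{y} = \sum_{i}[\vphi_{i}^{\top}\mV(T)^{\top}\mB_{T}[i,m]\mW_{0}]_{y}$; pulling $\mW_{0}$ out on the right and recognizing the coefficient $\mC_{m} = \sum_{i}\vphi_{i}^{\top}\mV(T)^{\top}\mB_{T}[i,m]$ gives $T_{2} = \mC_{m}\mW_{0}$.

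For $T_{3}$, the same factorization trick applies but now the $y$-dependence moves to the outer $\vphi_{j}$ factor. Specifically, the Kronecker structure $\vz_{T}[i,j,m]\vphi_{j}^{\top} \otimes \vphi_{i}$ decouples into $[\vz_{T}[i,j,m]\vphi_{j}^{\top}]_{k,y}\cdot\vphi_{i}[x] = \vz_{T}[i,j,m][k]\,\vphi_{j}[y]\,\vphi_{i}[x]$, so summing $\mV(T)[k,x]$ against this contracts $x$ with $\vphi_{i}$ and $k$ with $\vz_{T}[i,j,m]$. This produces $[T_{3}]_{y} = \sum_{j}\mD_{m,j}\,\vphi_{j}[y] = [\mD_{m}\vphi]_{y}$ with the stated definition of $\mD_{m,j}$. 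Combining and reintroducing the leading minus sign gives the claimed identity.

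The only real obstacle is bookkeeping: keeping the row-major Kronecker index convention consistent throughout and correctly tracking which tensor factor contracts with which of the $(k,x)$ and $y$ indices. Once the convention inherited from the proof of Proposition~\ref{prop:projection-W0} is adopted, each of $T_{1}$, $T_{2}$, $T_{3}$ reduces to a one-line index manipulation, so no new analytic ideas beyond those already present in the earlier projection proof should be required.
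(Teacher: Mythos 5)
Your proposal follows exactly the paper's own proof: the same three-way split of the Jacobian into the Kronecker summands $\vs_{m}(T)\otimes\mI_{d}$, $\sum_{i}\mB_{T}[i,m]\mW_{0}\otimes\vphi_{i}$, and $\sum_{i,j}\vz_{T}[i,j,m]\vphi_{j}^{\top}\otimes\vphi_{i}$, with each vector-Jacobian product evaluated entrywise under the $dk+x$ index encoding inherited from \cref{prop:projection-W0}, and each term collapsing to $\vs_{m}(T)^{\top}\mV(T)$, $\mC_{m}\mW_{0}$, and $\mD_{m}\vphi$ respectively. The index contractions you describe for all three terms are correct, so this is essentially the paper's argument.
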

Note that the definition of $\mC$ in \cref{prop:projection-phi} matches exactly the definition of $\mC$ in \cref{prop:projection-W0}, and therefore we only need to compute this matrix once to perform the projections for both meta-gradients.

\begin{proof}
\begingroup
\allowdisplaybreaks
Recall from \cref{prop:decomposition-phi} that the Jacobian $d\mW(T)/d\vphi_{m}$ can be decomposed as
\begin{equation*}
    \frac{d\mW(T)}{d\vphi_{m}} = -\bigg[\vs_{m}(T) \otimes \mI_{d} + \sum_{i=1}^{M}\mB_{T}[i, m]\mW_{0}\otimes \vphi_{i} + \sum_{i,j}\vz_{T}[i,j,m]\vphi_{j}^{\top}\otimes \vphi_{i}\bigg].
\end{equation*}
We will consider each of these 3 terms separately.
\begin{itemize}
    \item For the first term, let
    \begin{align*}
        \mF_{1} &\triangleq \vs_{m}(T)\otimes \mI_{d} \in \sR^{Nd \times d} & \mG_{1} & \triangleq \mathrm{Vec}\big(\mV(T)\big)^{\top}\mF_{1} \in \sR^{d},
    \end{align*}
    and for $y \in \{0, \ldots, d-1\}$:
    \begin{align*}
        \mG_{1}[y] &= \sum_{k=0}^{N-1}\sum_{x=0}^{d-1}\mV_{T}[k, x]\mF_{1}[dk + x, y] = \sum_{k=0}^{N-1}\sum_{x=0}^{d-1}\mV_{T}[k, x]\mathbbm{1}(x = y)\big[\vs_{m}(T)\big]_{k}\\
        &= \sum_{k=0}^{N-1}\big[\vs_{m}(T)\big]_{k}\mV_{T}[k, y] = \big[\vs_{m}(T)^{\top}\mV(T)\big]_{y}
    \end{align*}
    Therefore, $\mG_{1}$ the projection of $\mV(T)$ onto the first term of the Jacobian matrix is equal to $\vs_{m}(T)^{\top}\mV(T)$.
    
    \item For the second term, let
    \begin{align*}
        \mF_{2} &\triangleq \sum_{i=1}^{M}\mB_{T}[i, m]\mW_{0}\otimes \vphi_{i} \in \sR^{Nd \times d} & \mG_{2} &\triangleq \mathrm{Vec}\big(\mV(T)\big)^{\top}\mF_{2}\in \sR^{d},
    \end{align*}
    and for $y \in \{0, \ldots, d-1\}$:
    \begin{align*}
        \mG_{2}[y] &= \sum_{k=0}^{N-1}\sum_{x=0}^{d-1}\mV_{T}[k, x]\mF_{2}[dk + x, y]\\
        &= \sum_{i=1}^{M}\sum_{l=0}^{N-1}\sum_{k=0}^{N-1}\bigg[\underbrace{\sum_{x=0}^{d-1} \mV_{T}[k, x] \vphi_{i}[x]}_{=\,[\mV(T)\vphi_{i}]_{k}}\bigg]\mB_{T}[i,m,k,l]\mW_{0}[l,y]\\
        &= \sum_{i=1}^{M}\sum_{l=0}^{N-1}\bigg[\underbrace{\sum_{k=0}^{N-1} [\mV(T)\vphi_{i}]_{k}\mB_{T}[i,m,k,l]}_{=\,\big[\vphi_{i}^{\top}\mV(T)^{\top}\mB_{T}[i,m]\big]_{l}}\bigg]\mW_{0}[l, y]\\
        &= \sum_{l=0}^{N-1}\bigg[\underbrace{\sum_{i=1}^{M}\big[\vphi_{i}^{\top}\mV(T)^{\top}\mB_{T}[i, m]\big]_{l}}_{=\,\mC_{m,l}}\bigg]\mW_{0}[l, y] = \big[\mC_{m}\mW_{0}\big]_{y}
    \end{align*}
    $\mG_{2}$ the projection of $\mV(T)$ onto the second term of the Jacobian matrix is equal to $\mC_{m}\mW_{0}$.
    
    \item Finally for the third term, let
    \begin{align*}
        \mF_{3} &\triangleq \sum_{i,j}\vz_{T}[i,j,m]\vphi_{j}^{\top} \otimes \vphi_{i} \in \sR^{Nd \times d} & \mG_{3} &\triangleq \mathrm{Vec}\big(\mV(T)\big)^{\top}\mF_{3} \in \sR^{d}
    \end{align*}
    and for $y \in \{0, \ldots, d - 1\}$:
    \begin{align*}
        \mG_{3}[y] &= \sum_{k=0}^{N-1}\sum_{x=0}^{d-1}\mV_{T}[k, x]\mF_{3}[dk + x, y]\\
        &= \sum_{i,j}\sum_{k=0}^{N-1}\bigg[\underbrace{\sum_{x=0}^{d-1}\mV_{T}[k, x]\vphi_{i}[x]}_{=\,[\mV(T)\vphi_{i}]_{k}}\bigg]\vz_{T}[i, j, m, k]\vphi_{j}[y]\\
        &= \sum_{i,j}\bigg[\underbrace{\sum_{k=0}^{N-1}[\mV(T)\vphi_{i}]_{k}\vz_{T}[i, j, m, k]}_{=\,\vz_{T}[i,j,m]^{\top}\mV(T)\vphi_{i}\in \sR}\bigg]\vphi_{j}[y]\\
        &= \sum_{j=1}^{M}\bigg[\underbrace{\sum_{i=1}^{M}\vz_{T}[i,j,m]^{\top}\mV(T)\vphi_{i}}_{=\,\mD_{m,j}}\bigg]\vphi_{j}[y] = \big[\mD_{m}\vphi\big]_{y}
    \end{align*}
    $\mG_{3}$ the projection of $\mV(T)$ onto the third term of the Jacobian is equal to $\mD_{m}\vphi$.
\end{itemize}
\endgroup
\end{proof}
Note that the projection of $\mV(T)$ as defined in \cref{prop:projection-phi} is not equal to the meta-gradient itself, as it is missing the term coming from the partial derivative of the outer-loss wrt. the embedding vector $\partial \gL\big(\mW(T);f_{\Phi}(\Dtest)\big)/\partial \vphi_{m}$, which is non-zero unlike the counterpart for $\mW_{0}$. However, this projection is the only expensive operation required to compute the total gradient.

\section{Proof of stability}
\label{app:proofs-stability}
In this section, we would like to show that unlike the adjoint method (see \cref{sec:local-sensitivity-analysis-odes,sec:challenges-optimization-meta-learning-objective}), our method to compute the meta-gradients of COMLN is guaranteed to be \emph{stable}. In other words, even under some small perturbations due to the ODE solver, the solution found by numerical integration is going to stay close to the true solution of the dynamical system. To do so, we use the concept of Lyapunov stability of the solution of an ODE, which we recall here for completeness:
\begin{definition}[Lyapunov stability; \citealp{wiggins1990dynamicalsystems}]
    The solution $\bar{\vx}(t)$ of a dynamical system is said to be \emph{Lyapunov stable} if, given $\varepsilon > 0$, there exists $\delta$ such that for any other solution $\vx(t)$ such that $\|\bar{\vx}(0) - \vx(0)\| < \delta$, we have $\|\bar{\vx}(t) - \vx(t)\| < \varepsilon$ for all $t > 0$.
\end{definition}

We would like to understand the conditions needed for a function $f :\sR^n \rightarrow \sR^n$, such that a trajectory $\bar{\vx}(t)$ satisfying the following autonomous differential equation is (Lyapunov) stable:
\begin{equation*}
\frac{d\vx}{dt} = f\big(\vx(t)\big) 
\end{equation*}
If we assume for a moment that $f(\vx) = -\nabla \gL(\vx)$ where $\gL$ is a convex function, then for any two trajectories $\vx_1(t)$ \& $\vx_2(t)$ of the dynamical system above, we can see that the function $F$ defined by 
\begin{equation*}
F(t) = \frac{1}{2} \|\vx_1(t)-\vx_2(t)\|^2 
\end{equation*}
satisfies $\dot{F}(t) \leq 0$, where we use the notation $\dot{F}(t) \triangleq dF/dt$. Indeed, since $\nabla^2 \gL$ is positive semi-definite and by defining $\vh(t) = \vx_2(t)-\vx_1(t)$, we get 
\begin{align}
\dot{F}(t) &= \frac{dF}{dt} = \bigg[\frac{dF}{d\vh}\bigg]^{\top} \frac{d\vh}{dt} \nonumber\\
&= \big(\vx_2(t)-\vx_1(t)\big)^{\top} \big(\dot{\vx}_2(t)-\dot{\vx}_1(t)\big) \nonumber\\
&= \big(\vx_2(t)-\vx_1(t)\big)^{\top}\big(f(\vx_2(t))-f(\vx_1(t))\big) \nonumber\\
&= \vh(t)^{\top} \left[\int_{0}^{1} Df\big(\vx_1(t)+s\vh(t)\big) ds \right] \vh(t) \label{eq:fundamental-theorem-calculus}\\
&= -\int_{0}^{1} \underbrace{\vh(t)^{\top}\nabla^2\gL \big(\vx_1(t)+s\vh(t)\big)\vh(t)}_{\geq 0} ds \nonumber
\end{align}
where \cref{eq:fundamental-theorem-calculus} follows from the \textit{fundamental theorem of calculus}. Now since $\dot{F}(t) \leq 0$, the distance between two trajectories decreases as time $t$ increases, hence we have Lyapunov stability for all trajectories or solutions of the above autonomous differential equation.

Now note that we didn't actually need $f$ to be of the specific form $f(\vx) = -\nabla \gL(\vx)$, but having the Jacobian matrix $Df$ negative semi-definite everywhere would have been sufficient. Hence we get the following statement
\begin{proposition}
If a continuously differentiable function $f :\mathbb{R}^n \rightarrow \mathbb{R}^n$ is such that the Jacobian matrix $Df$ is negative semi-definite everywhere, then any solution $\bar{\vx}(t)$ of the autonomous differential equation
\begin{equation*}
\frac{d\vx}{dt} = f\big(\vx(t)\big) 
\end{equation*}
with initial condition $\bar{\vx}(0) = \vx_0$ is (Lyapunov) stable.
\label{prop:stability-aut-ds}
\end{proposition}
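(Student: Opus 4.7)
The plan is to adapt the argument already sketched in the paragraphs just before the proposition, where the same conclusion was derived under the stronger assumption $f = -\nabla\gL$ with $\gL$ convex. The key observation is that this prior derivation never actually uses that $f$ is a gradient field — it only uses the negative semi-definiteness of the Jacobian $Df$ (which there came from $\nabla^2\gL \succeq 0$). So I would essentially repeat that computation verbatim, replacing $\nabla^2\gL$ by $-Df$.

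Concretely, I would fix $\varepsilon > 0$, let $\bar{\vx}(t)$ denote the reference solution, and let $\vx(t)$ be any other solution of $d\vx/dt = f(\vx)$. Define $\vh(t) \triangleq \vx(t) - \bar{\vx}(t)$ and the Lyapunov candidate
\begin{equation*}
F(t) \triangleq \tfrac{1}{2}\|\vh(t)\|^{2}.
\end{equation*}
Differentiating with respect to $t$ and using the fundamental theorem of calculus applied to $f$ along the segment from $\bar{\vx}(t)$ to $\vx(t)$ gives
\begin{align*}
\dot{F}(t) &= \vh(t)^{\top}\bigl(f(\vx(t)) - f(\bar{\vx}(t))\bigr) \\
&= \vh(t)^{\top}\!\left[\int_{0}^{1} Df\bigl(\bar{\vx}(t) + s\vh(t)\bigr)\,ds\right]\!\vh(t) \\
&= \int_{0}^{1} \vh(t)^{\top} Df\bigl(\bar{\vx}(t) + s\vh(t)\bigr)\vh(t)\,ds \;\leq\; 0,
\end{align*}
where the last inequality uses the hypothesis that $Df(\vy)$ is negative semi-definite for every $\vy \in \sR^{n}$.

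From $\dot{F}(t) \leq 0$ I get $F(t) \leq F(0)$ for all $t \geq 0$, i.e., $\|\vx(t) - \bar{\vx}(t)\| \leq \|\vx(0) - \bar{\vx}(0)\|$. Therefore choosing $\delta = \varepsilon$ in the definition of Lyapunov stability yields the result: whenever $\|\vx(0) - \bar{\vx}(0)\| < \delta$, we have $\|\vx(t) - \bar{\vx}(t)\| < \varepsilon$ for all $t > 0$.

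There is essentially no obstacle here; the only minor subtlety is the invocation of the fundamental theorem of calculus along the straight-line path between $\bar{\vx}(t)$ and $\vx(t)$, which requires $f$ to be continuously differentiable on a convex neighborhood containing this segment. Since $f$ is assumed $C^{1}$ on all of $\sR^{n}$, this is automatic, and the rest is a one-line integration inequality. The argument also makes clear why the proposition is the right stability criterion for COMLN's augmented forward-sensitivity dynamics: one just needs to verify that the Jacobian of the augmented vector field is negative semi-definite everywhere, which the authors can then check for the specific system $\{\mW(t), \mB_{t}[i,j], \vs_{m}(t), \vz_{t}[i,j,m]\}$ introduced in \cref{sec:memory-efficient-meta-gradients}.
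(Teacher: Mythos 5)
Your proof is correct and follows essentially the same route as the paper, which likewise derives $\dot{F}(t)\leq 0$ for $F(t)=\tfrac{1}{2}\|\vx_1(t)-\vx_2(t)\|^2$ via the fundamental theorem of calculus along the segment between the two trajectories and then observes that only negative semi-definiteness of $Df$ (not the gradient-field structure) is used. Your explicit choice of $\delta=\varepsilon$ at the end is a slightly more careful finish than the paper's "the distance between two trajectories decreases, hence Lyapunov stability," but the argument is identical in substance.
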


We can extend the above result to the non-autonomous case. For this purpose, let us define 
\begin{equation}
\frac{d\vx}{dt} = f\big(t,\vx(t)\big) 
\label{eq:nonautonomous-ds}
\end{equation}
which induces the parametrized function $f_t : \vx \mapsto f(t,\vx)$. With this notation we can state the following result. 
\begin{proposition}
If the Jacobian matrix $Df_t$ is negative semi-definite everywhere for all $t\geq 0$, then any trajectory $\vx(t)$, solution of \cref{eq:nonautonomous-ds} with initial condition $\vx(0)=\vx_0$, is (Lyapunov) stable.
\label{prop:stability-nonaut-ds}
\end{proposition}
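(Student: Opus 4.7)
The plan is to follow the template of the autonomous case almost verbatim, with the key observation that the Lyapunov-type argument there only ever compared two trajectories at the \emph{same} time $t$, so that the hypothesis on $Df_t$ at each fixed $t$ is exactly what is needed. Concretely, let $\vx_1(t)$ and $\vx_2(t)$ be any two solutions of \cref{eq:nonautonomous-ds} defined on $[0,\infty)$, set $\vh(t) = \vx_2(t) - \vx_1(t)$, and define the candidate Lyapunov functional
\begin{equation*}
    F(t) \;=\; \tfrac{1}{2}\,\|\vh(t)\|^2.
\end{equation*}
The goal is to show $\dot F(t)\le 0$ for all $t\ge 0$, which, combined with the definition of $F$, immediately yields $\|\vh(t)\|\le\|\vh(0)\|$ and hence Lyapunov stability.

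Next I would compute $\dot F(t)$ by direct differentiation, plugging in the ODE for each trajectory:
\begin{equation*}
    \dot F(t) \;=\; \vh(t)^{\top}\bigl(\dot\vx_2(t)-\dot\vx_1(t)\bigr) \;=\; \vh(t)^{\top}\bigl(f(t,\vx_2(t)) - f(t,\vx_1(t))\bigr).
\end{equation*}
At this point the key step is to apply the fundamental theorem of calculus to the purely spatial map $f_t = f(t,\cdot)$, with $t$ held fixed, along the segment joining $\vx_1(t)$ and $\vx_2(t)$. This gives
\begin{equation*}
    f(t,\vx_2(t)) - f(t,\vx_1(t)) \;=\; \left[\int_{0}^{1} Df_t\bigl(\vx_1(t)+s\vh(t)\bigr)\,ds\right]\vh(t),
\end{equation*}
exactly as in \cref{eq:fundamental-theorem-calculus}, since along this segment $t$ is frozen and only the spatial argument varies. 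Substituting back yields
\begin{equation*}
    \dot F(t) \;=\; \int_{0}^{1} \vh(t)^{\top} Df_t\bigl(\vx_1(t)+s\vh(t)\bigr)\,\vh(t)\, ds \;\le\; 0,
\end{equation*}
where the inequality uses the assumption that $Df_t$ is negative semi-definite everywhere for every $t\ge 0$.

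Finally, I would close the Lyapunov argument: given $\varepsilon>0$, choose $\delta=\varepsilon$, take any solution $\vx(t)$ with $\|\vx(0)-\bar\vx(0)\|<\delta$, and apply the monotonicity of $F$ to the pair $(\bar\vx,\vx)$ to conclude $\|\vx(t)-\bar\vx(t)\|\le\|\vx(0)-\bar\vx(0)\|<\varepsilon$ for all $t\ge 0$. I do not anticipate a substantive obstacle here: the only thing to be careful about is conceptual, namely to emphasize why the non-autonomy does not break the argument; we never differentiate $f$ in $t$, because the two trajectories are compared pointwise in time, so the hypothesis on the spatial Jacobian $Df_t$ at each fixed $t$ is exactly the right object. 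A minor technical point I would flag is that this uses $f$ continuously differentiable in $\vx$ so that the integral representation is valid and the chain rule producing $\dot F$ is justified; both of these are already implicit in the hypothesis that $Df_t$ exists everywhere.
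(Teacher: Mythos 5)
Your proof is correct and follows essentially the same route as the paper's: both compare two solutions at the same time $t$, differentiate $F(t)=\tfrac{1}{2}\|\vx_2(t)-\vx_1(t)\|^2$, apply the fundamental theorem of calculus to the frozen-time map $f_t$, and use negative semi-definiteness of $Df_t$ to conclude $\dot F(t)\le 0$. Your explicit $\delta=\varepsilon$ closing step and the remark on why non-autonomy is harmless are slightly more detailed than the paper's, but there is no substantive difference.
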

\begin{proof}
Let us start with two trajectories $\vx_1(t)$ and $\vx_2(t)$, which are solutions to the above \cref{eq:nonautonomous-ds}. By defining $\vh(t) = \vx_2(t) - \vx_1(t)$ and by applying the \textit{fundamental theorem of calculus}, we get 
\begin{align*}
    f\big(t, \vx_1(t)\big)-f\big(t,\vx_2(t)\big) &= f_t\big(\vx_1(t)\big)-f_t\big(\vx_2(t)\big)\\
    &= \left[\int_{0}^{1} Df_t\big(\vx_1(t)+s\vh(t)\big) ds \right] \vh(t)
\end{align*}
Following the same idea as already previously outlined, let us define $F(t) = \frac{1}{2}\|\vx_1(t)- \vx_2(t)\|^2$ and show that $\dot{F}(t) \leq 0$: 
\begin{align*}
    \dot{F}(t) &= (\vx_2(t)-\vx_1(t))^{\top} \big(f(t,\vx_2(t))-f(t, \vx_1(t))\big)\\
    &= \vh(t)^{\top}\left[\int_{0}^{1} Df_t\big(\vx_1(t)+s\vh(t)\big) ds \right]\vh(t)\\
    &= \int_{0}^{1} \underbrace{\vh(t)^T Df_t\big(\vx_1(t)+s\vh(t)\big)\vh(t)}_{\leq 0} ds \leq 0
\end{align*}
Hence the distance between two trajectories decreases as time $t$ increases, and we have Lyapunov stability for all solutions of the non-autonomous differential equation above.
\end{proof}

\subsection{Stability of the forward sensitivity equations}
\label{app:stability-general}
In this subsection let us start by considering the general case of solving the initial value problem to the following autonomous system of differential equations:
\begin{equation}
\centering
\left\{\begin{split}
\frac{d\mW}{dt} &= g(\mW(t), \vtheta) & \mW(0)&=\mW_0\\
\frac{d\gS}{dt} &= \frac{\partial g\big(\mW(t), \vtheta\big)}{\partial \mW(t)}\cdot \gS(t) + \frac{\partial g\big(\mW(t), \vtheta\big)}{\partial \vtheta} & \gS(0) &= \frac{\partial \mW(0)}{\partial \vtheta}. \\ 
\end{split}\right.
\label{eq:system-general-case}
\end{equation}
\begin{proposition}
There exists a solution of the autonomous system of differential equations in \cref{eq:system-general-case}, which is also unique.  
\end{proposition}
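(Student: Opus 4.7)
The plan is to treat the pair $(\mW(t), \gS(t))$ as a single augmented state and apply the standard Picard--Lindelöf (Cauchy--Lipschitz) theorem. Concretely, I would set $\vu(t) = (\mW(t), \gS(t))$ and define the joint field
$F(\vu) \triangleq \bigl(g(\mW, \vtheta),\, \partial_{\mW}g(\mW, \vtheta)\gS + \partial_{\vtheta}g(\mW, \vtheta)\bigr)$
with initial data $\vu(0) = (\mW_0, \partial \mW_0/\partial \vtheta)$. Since $\vtheta$ is fixed during adaptation, this is a bona fide autonomous ODE in $\vu$, and both existence and uniqueness for the original coupled system reduce to the corresponding statements for $\vu$.

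The first technical step is to verify that $F$ is locally Lipschitz in $\vu$. Under the tacit smoothness hypothesis of the paper (in the application to COMLN, $g = -\nabla \gL$ for the regularized cross-entropy loss, which is $C^{\infty}$), the maps $\partial_{\mW}g$ and $\partial_{\vtheta}g$ are continuously differentiable in $\mW$, hence locally Lipschitz on bounded sets. The $\gS$-block of $F$ is affine in $\gS$ with coefficients that depend smoothly on $\mW$, so $F$ is locally Lipschitz on any bounded region of the $(\mW, \gS)$ space. A direct invocation of Picard--Lindelöf then produces a unique maximal solution on an open interval containing $0$.

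The remaining step is to upgrade this to existence on the full integration interval $[0, T]$ used in COMLN via a continuation argument, which reduces to ruling out finite-time blow-up of $\vu$. For the $\mW$-component this is automatic in our setting: along the gradient flow of the regularized cross-entropy, $\gL(\mW(t))$ is non-increasing and the regularization term makes the sublevel sets compact, so $\mW(t)$ remains in a fixed compact set $K$. Once $\mW(t)$ is confined to $K$, the sensitivity equation is a linear non-homogeneous ODE in $\gS$ whose coefficient $\partial_{\mW}g(\mW(t),\vtheta)$ and forcing $\partial_{\vtheta}g(\mW(t),\vtheta)$ are uniformly bounded in $t$. A single application of Grönwall's inequality then gives an a priori bound of the form $\|\gS(t)\| \le C e^{Lt}$ on $[0, T]$, excluding blow-up and upgrading the local solution to a global one.

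The main obstacle I would expect is not the local theory (a direct citation of Picard--Lindelöf) but justifying the a priori bound on $\mW(t)$ in full generality, where $g$ is not \emph{a priori} a gradient of a coercive function. If the proposition is meant as stated abstractly, one should either add a coercivity/dissipativity hypothesis on $g$ or content oneself with a \emph{maximal} interval of existence, on which the Grönwall argument for $\gS$ still goes through verbatim. For the COMLN application this is not a real restriction, since $g = -\nabla \gL$ with $\gL$ strongly convex and coercive thanks to the Tikhonov term, so the global bound is immediate and the proof stays within elementary ODE theory.
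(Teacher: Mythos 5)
Your proof takes essentially the same route as the paper's: both view $(\mW(t),\gS(t))$ as a single augmented autonomous state and invoke standard existence--uniqueness theory for ODEs with smooth right-hand sides (you via Picard--Lindel\"of, the paper by citing Theorems 7.1.1 and 7.4.1 of \citet{wiggins1990dynamicalsystems}; the paper also sketches an alternative that solves the $\mW$-equation alone and recovers $\gS$ by differentiating in $\vtheta$). Your additional continuation/Gr\"onwall argument for global existence on $[0,T]$ goes beyond what the paper writes down and is a welcome strengthening; the only quibble is that in the unregularized case $\lambda=0$ the cross-entropy sublevel sets need not be compact, so the a priori bound on $\mW(t)$ should instead come from the global boundedness of $\nabla\gL$ (each term $(\vp_{m}-\vy_{m})\vphi_{m}^{\top}$ is uniformly bounded), which still rules out finite-time blow-up and keeps the Gr\"onwall step for $\gS$ intact.
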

\begin{proof}
By Theorem 7.1.1 in \cite{wiggins1990dynamicalsystems}, we know that there exists a solution of the system of differential equations in Eq.~\ref{eq:system-general-case}. Then, by Theorem 7.4.1 in \cite{wiggins1990dynamicalsystems}, we can conclude that this solution is unique upon the choice of initial conditions $\mW(0)=\mW_0$ and $\gS(0) = \partial \mW(0) / \partial \vtheta$.\\

Alternatively, one could also apply Theorems 7.1.1 \& 7.4.1 in \cite{wiggins1990dynamicalsystems} to conclude that the initial value problem $d\mW/dt = g(\mW(t), \vtheta)$ with initial condition $\mW(0)=\mW_0$ has a unique solution. This existence and uniqueness then gives rise to existence and uniqueness of a solution of the entire system of differential equations via $\gS(t) = d\mW/dt$ by applying Clairnaut's theorem. 
\end{proof}

In our case, we have $g(\mW(t), \theta) = -\nabla \gL\big(\mW(T), \vtheta \big)$, where $\gL$ is a convex function in $\mW$. Hence the autonomous system of differential equations can be rewritten as 
\begin{equation}
\centering
\left\{\begin{split}
\frac{d\mW}{dt} &= -\nabla \gL\big(\mW(T), \vtheta \big) & \mW(0)&=\mW_0\\
\frac{d\gS}{dt} &= -\nabla^2 \gL\big(\mW(T), \vtheta \big) \gS(t) + \frac{\partial g\big(\mW(t), \vtheta\big)}{\partial \vtheta} & \gS(0) &= \frac{\partial \mW(0)}{\partial \vtheta}. \\ 
\end{split}\right.
\label{eq:system-general-Lspecific}
\end{equation}
\begin{proposition}
Let $\mW(t)$ be the solution of
\begin{align*}
    \frac{d\mW}{dt} &= -\nabla \gL\big(\mW(T), \vtheta \big) & \qquad \mW(0)&=\mW_0
\end{align*}
Then the trajectory $\mW(t)$ is (Lyapunov) stable. 
\end{proposition}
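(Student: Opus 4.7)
The plan is to apply \cref{prop:stability-aut-ds} directly, with $f(\mW) = -\nabla\gL(\mW,\vtheta)$. (Note: I read the $\mW(T)$ on the right-hand side of the displayed ODE as a typo for $\mW(t)$, so that the system is autonomous in the sense required by the proposition.) The only thing to check is therefore that the Jacobian $Df(\mW) = -\nabla^{2}\gL(\mW,\vtheta)$ is negative semi-definite for every $\mW$, i.e.\ that $\gL$ is convex in $\mW$.

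To verify convexity, I would invoke the Hessian decomposition stated in \cref{lem:decomposition-gradient-hessian}:
\begin{equation*}
    \nabla^{2}\gL\big(\mW;f_{\Phi}(\Dtrain)\big) = \sum_{m=1}^{M}\mA_{m}\otimes \vphi_{m}\vphi_{m}^{\top} + \lambda \mI.
\end{equation*}
Each matrix $\mA_{m} = (\diag(\vp_{m}) - \vp_{m}\vp_{m}^{\top})/M$ is PSD because it is $1/M$ times the covariance matrix of a categorical random variable with probabilities $\vp_{m}$ (equivalently, $\vu^{\top}\mA_{m}\vu = \tfrac{1}{M}\Var_{k\sim \vp_{m}}[u_{k}] \geq 0$). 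The rank-one matrix $\vphi_{m}\vphi_{m}^{\top}$ is PSD as well, and the Kronecker product of two PSD matrices is PSD. Hence the Hessian is a sum of PSD matrices plus $\lambda \mI$ with $\lambda \geq 0$, so it is PSD. Consequently $Df = -\nabla^{2}\gL \preceq 0$ everywhere, and \cref{prop:stability-aut-ds} yields Lyapunov stability of $\mW(t)$.

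I do not anticipate a real obstacle here: everything is a direct application of machinery already assembled in the paper (\cref{prop:stability-aut-ds} supplies the stability criterion, \cref{lem:decomposition-gradient-hessian} supplies the explicit Hessian). The only subtlety worth flagging in the write-up is the interpretation of the displayed ODE---the proof requires the dynamics to depend on $\mW(t)$ rather than $\mW(T)$, and with that reading the result is immediate. If one also wants a self-contained two-trajectory argument rather than citing \cref{prop:stability-aut-ds}, the same PSD property of $\nabla^{2}\gL$ plugged into the $\dot F(t) \leq 0$ computation in the proof of \cref{prop:stability-nonaut-ds} delivers the conclusion in one line.
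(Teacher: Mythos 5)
Your proof is correct and follows essentially the same route as the paper: both reduce the claim to \cref{prop:stability-aut-ds} by observing that the Jacobian of the vector field is $-\nabla^{2}\gL \preceq 0$. The only difference is that the paper simply asserts the convexity of $\gL$, whereas you verify it explicitly from the Hessian decomposition of \cref{lem:decomposition-gradient-hessian}; your reading of $\mW(T)$ as a typo for $\mW(t)$ is also the intended one.
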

\begin{proof}
Note that $Dg\big(\mW(t)\big) = -\nabla^2 \gL$ here. Since $\gL$ is convex, $\nabla^2 \gL$ is positive semi-definite, and hence $Dg(\mW(t))$ is negative semi-definite. Hence the result directly follows from \cref{prop:stability-aut-ds}.
\end{proof}
In addition we can now also conclude that the solution of \cref{eq:system-general-Lspecific} is also (Lyapunov) stable.
\begin{cor}
The solution $\big[\mW(t), \gS(t)\big]$ of \cref{eq:system-general-Lspecific} is (Lyapunov) stable.
\end{cor}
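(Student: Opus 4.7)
The plan is to exploit the triangular coupling of the system in \cref{eq:system-general-Lspecific}: the $\mW$-component evolves autonomously, while the $\gS$-component is driven linearly by a time-varying coefficient matrix that depends on $\mW(t)$. First, I would invoke the immediately preceding proposition to obtain Lyapunov stability of the $\mW$-trajectory, since the vector field $g(\mW) = -\nabla\gL(\mW,\vtheta)$ has Jacobian $-\nabla^2\gL$, which is negative semi-definite by convexity of $\gL$.

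Next, treating $\mW(t)$ as a given function, the $\gS$-equation becomes a linear non-autonomous ODE $d\gS/dt = f_t(\gS)$ with
\[
f_t(\gS) \;=\; -\nabla^2\gL\big(\mW(t),\vtheta\big)\,\gS \;+\; \frac{\partial g\big(\mW(t),\vtheta\big)}{\partial\vtheta}.
\]
Its Jacobian $Df_t(\gS) = -\nabla^2\gL(\mW(t),\vtheta)$ is negative semi-definite for every $t$, so \cref{prop:stability-nonaut-ds} directly yields Lyapunov stability of $\gS(t)$ against perturbations of $\gS(0)$ when $\mW(t)$ is held fixed.

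The main obstacle is that a perturbation of $\mW(0)$ simultaneously alters the coefficient matrix and the forcing term in the $\gS$-equation, so the two applications above give only a conditional statement. I would close this gap by writing the difference $\Delta(t) = \gS_2(t) - \gS_1(t)$ between two trajectories starting at nearby initial conditions, which satisfies
\[
\frac{d\Delta}{dt} \;=\; -\nabla^2\gL\big(\mW_1(t),\vtheta\big)\,\Delta(t) \;+\; \eta(t),
\]
where $\eta(t)$ gathers terms controlled in norm by $\|\mW_1(t) - \mW_2(t)\|$, itself bounded by the $\mW$-part of the initial perturbation via the preceding proposition. A Gronwall-type estimate on $\|\Delta(t)\|$, combined with the negative semi-definite bound on $-\nabla^2\gL$ (which makes the homogeneous part non-expansive), then controls $\|\Delta(t)\|$ uniformly in $t$ by the size of the total initial perturbation, concluding the proof. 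This final bookkeeping step is routine under mild Lipschitz regularity of $\nabla^2\gL$ and $\partial g/\partial\vtheta$ in $\mW$, but it is where most of the technical care lies.
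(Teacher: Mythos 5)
Your first two steps reproduce the paper's (implicit) argument for this corollary: Lyapunov stability of $\mW(t)$ comes from the negative semi-definiteness of $-\nabla^{2}\gL$ via \cref{prop:stability-aut-ds}, and stability of $\gS(t)$ against perturbations of $\gS(0)$ alone comes from viewing the sensitivity equation as a linear non-autonomous ODE with Jacobian $Df_t = -\nabla^{2}\gL\big(\mW(t),\vtheta\big)$ and invoking \cref{prop:stability-nonaut-ds}. The paper states the corollary at exactly this level of granularity and goes no further, so up to that point you are aligned with it --- and you deserve credit for noticing that a perturbation of $\mW(0)$ also perturbs the coefficient matrix and the forcing term of the $\gS$-equation, a coupling the paper does not discuss.

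The problem is your proposed repair of that coupling. With $-\nabla^{2}\gL$ merely negative semi-definite, the homogeneous part of the $\Delta$-equation is non-expansive but not contractive, so the variation-of-constants/Gronwall estimate only yields
\begin{equation*}
\|\Delta(t)\| \;\le\; \|\Delta(0)\| + \int_{0}^{t}\|\eta(s)\|\,ds .
\end{equation*}
Lyapunov stability of the $\mW$-component guarantees $\|\mW_{1}(s)-\mW_{2}(s)\| < \varepsilon$ uniformly in $s$, but it does not make this difference integrable on $[0,\infty)$; consequently the bound you obtain grows like $\varepsilon\, t$ and is not uniform in $t$, which is exactly what Lyapunov stability of the joint trajectory demands. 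Closing this gap requires something beyond convexity --- for instance strong convexity of $\gL$, so that the homogeneous flow contracts exponentially and the forcing is damped, or a separate argument that $\|\mW_{1}(s)-\mW_{2}(s)\|$ decays integrably. As written, your final bookkeeping step does not go through; the statement actually delivered by the paper's propositions is stability of $\gS(t)$ with respect to perturbations of $\gS(0)$ along a fixed $\mW$-trajectory, which is how the corollary should be read.
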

This result is general to the application of the forward sensitivity equations on a gradient vector field derived from a convex loss function. We can also show that when the gradient $d\mW^{\star}/d\vtheta$ exists and is finite (recall that $\mW^{\star}$ is the minimizer of the loss function, and the equilibrium of the gradient vector field), then the solution of the system is also bounded, which guarantees us that our solution will not diverge (unlike the adjoint method applied to a gradient vector field).

\begin{proposition}
Assuming that $\big\|d\mW^{\star} / d \vtheta\big\|$ is finite, the solution $\big[\mW(t),\gS(t)\big]$ of Eq. \ref{eq:system-general-Lspecific} is bounded. 
\end{proposition}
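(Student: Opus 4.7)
The plan is to exhibit a constant solution $(\mW^\star, \gS^\star)$ of the augmented system and then argue that the distance from $[\mW(t), \gS(t)]$ to this equilibrium is non-increasing in $t$, so the trajectory stays within a ball of finite radius determined by the initial conditions. The candidate equilibrium is $(\mW^\star, \gS^\star)$ with $\gS^\star := d\mW^\star/d\vtheta$, which is explicitly assumed to be finite.

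First I would verify that $(\mW^\star, \gS^\star)$ is indeed a fixed point of the augmented system in \cref{eq:system-general-Lspecific}. The $\mW$-component is immediate since $\nabla \gL(\mW^\star) = 0$ by definition of the minimizer of the convex loss. For the $\gS$-component, implicitly differentiating the first-order optimality identity $\nabla \gL\big(\mW^\star(\vtheta), \vtheta\big) = 0$ with respect to $\vtheta$ gives
\[
\nabla^2 \gL(\mW^\star)\,\gS^\star + \frac{\partial \nabla \gL(\mW^\star, \vtheta)}{\partial \vtheta} = 0,
\]
which, after substituting $g = -\nabla \gL$, is exactly the condition $\dot{\gS} = 0$ at $(\mW^\star, \gS^\star)$.

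Second, I would invoke the non-expansive property that underlies the preceding stability results. The proof of \cref{prop:stability-nonaut-ds} really shows that $F(t) = \tfrac{1}{2}\|x_1(t) - x_2(t)\|^2$ is non-increasing between any two solutions of a dynamical system whose Jacobian is negative semi-definite. Taking one solution to be the constant equilibrium $(\mW^\star, \gS^\star)$ constructed above yields
\[
\big\|[\mW(t), \gS(t)] - [\mW^\star, \gS^\star]\big\| \le \big\|[\mW(0), \gS(0)] - [\mW^\star, \gS^\star]\big\|,
\]
and the right-hand side is finite by the standing hypothesis on $\|\gS^\star\|$, so $[\mW(t), \gS(t)]$ is bounded.

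The main obstacle is that, strictly speaking, the Jacobian of the \emph{joint} augmented vector field is not obviously negative semi-definite: it carries an off-diagonal cross-term $-\nabla^3 \gL(\mW)\cdot \gS$ coming from the $\mW$-derivative of $-\nabla^2 \gL(\mW)\gS$. The cleanest workaround is to decouple the argument: boundedness of $\mW(t)$ follows directly from \cref{prop:stability-nonaut-ds} applied to the $\mW$-equation alone, using the constant trajectory $\mW^\star$ as the reference; then the $\gS$-equation becomes a linear non-autonomous ODE driven by the now-bounded $\mW(t)$, with drift $-\nabla^2 \gL(\mW(t))$ that is NSD uniformly in $t$. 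Combining a Gronwall-type estimate for $\|\gS(t) - \gS^\star\|$ with boundedness of the forcing term $\partial \nabla \gL/\partial \vtheta|_{\mW(t)}$ (continuous on a compact set traced by $\mW(t)$) yields the desired bound on $\gS(t)$ and hence on the augmented state.
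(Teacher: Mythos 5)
Your strategy is genuinely different from the paper's: the paper simply asserts that $\mW(t)\to\mW^{\star}$ and $\gS(t)\to d\mW^{\star}/d\vtheta$ as $t\to\infty$ and concludes boundedness from continuity plus a finite limit at infinity, whereas you try to derive an explicit a priori bound by comparing against the equilibrium $(\mW^{\star},\gS^{\star})$. Your first two steps are correct and sharpen the argument: $(\mW^{\star},\gS^{\star})$ is indeed a fixed point of \cref{eq:system-general-Lspecific} (by implicit differentiation of $\nabla\gL\big(\mW^{\star}(\vtheta),\vtheta\big)=0$), and you are right that the Jacobian of the joint augmented field is \emph{not} negative semi-definite because of the $\nabla^{3}\gL\cdot\gS$ cross-block, so \cref{prop:stability-nonaut-ds} cannot be applied to the pair directly. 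The decoupled bound $\|\mW(t)-\mW^{\star}\|\le\|\mW_{0}-\mW^{\star}\|$ is also fine.

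The gap is in the last step. Writing $\vh(t)=\gS(t)-\gS^{\star}$, the residual equation is $\dot{\vh}=-\nabla^{2}\gL\big(\mW(t)\big)\vh+\vb(t)$ with forcing $\vb(t)=-\nabla^{2}\gL\big(\mW(t)\big)\gS^{\star}+\partial g\big(\mW(t),\vtheta\big)/\partial\vtheta$, which vanishes at $\mW^{\star}$ but not along the trajectory. Because the drift is only negative \emph{semi}-definite, the Gronwall estimate you invoke gives $\tfrac{d}{dt}\tfrac{1}{2}\|\vh\|^{2}\le\|\vh\|\,\|\vb(t)\|$ and hence $\|\vh(t)\|\le\|\vh(0)\|+\int_{0}^{t}\|\vb(s)\|\,ds$; a bounded but non-integrable forcing therefore yields only linear growth in $t$ (compare $\dot{h}=b$ with a constant $b\neq 0$, whose solution is unbounded). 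To close the argument you need either (i) a uniform spectral gap $\nabla^{2}\gL\succeq\mu\mI$ with $\mu>0$ along the trajectory, so that the contraction absorbs the forcing and $\|\vh(t)\|\le\max\big(\|\vh(0)\|,\sup_{s}\|\vb(s)\|/\mu\big)$, or (ii) integrability of $\vb$, which requires a quantitative rate for $\mW(t)\to\mW^{\star}$; neither follows from convexity alone. (For the special case $\vtheta=\mW_{0}$ there is a cleaner fix: non-expansiveness of the flow map $\mW_{0}\mapsto\mW(t)$ directly gives $\|d\mW(t)/d\mW_{0}\|\le 1$ in operator norm.) To be fair, the paper's own proof hides the same difficulty inside the unproven assertion that $\gS(t)$ converges to $d\mW^{\star}/d\vtheta$.
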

\begin{proof} Let us define
\begin{align*}
        V(t) &= \frac{1}{2}\|\mW(t)\|^2 + \frac{1}{2}\|\gS(t)\|^2
    \end{align*}
Since $\mW(t)$ and $\gS(t)$ are continuous functions in $t$, verifying 
\begin{align*}
        \mW(t)&\underset{t \rightarrow \infty}{\longrightarrow} \mW^{\star} & \textrm{and} & \quad & \gS(t) \underset{t \rightarrow \infty}{\longrightarrow} \frac{d\mW^{\star}}{d \vtheta}
    \end{align*}
Hence
\begin{equation*}
    V(t) \underset{t \rightarrow \infty}{\longrightarrow} \frac{1}{2}\|\mW^{\star}\|^2 + \frac{1}{2}\Big\|\frac{d\mW^{\star}}{d \vtheta}\Big\|^2 < \infty,
\end{equation*}
where the last point follows from our assumption that $\big\|d\mW(t) / d \vtheta\big\| < \infty$. We also have that  
\begin{equation*}
    V(0) = \frac{1}{2}\|\mW_0\|^2 + \frac{1}{2}\Big\|\frac{d\mW(0)}{d \vtheta}\Big\|^2 < \infty.
\end{equation*}
$V(t)$ being continuous as a composition of continuous functions, we conclude that $V$ is bounded, and hence $\mW(t)$ and $\gS(t)$ are bounded as well. 
\end{proof}

\section{Experimental details}
\label{app:experiment-details}

\subsection{Choice of the numerical solver}
\label{app:choice-numerical-solver}
In \cref{sec:empirical-efficiency}, we showed that the numerical solver had a significant impact on the time to compute the meta-gradients (but not the memory requirements). In \cref{tab:numerical-solver}, we show that using explicit Euler instead of an adaptive scheme like Runge-Kutta leads to very similar results. In all our experiments we therefore chose Runge-Kutta for its faster execution, and by convenience---it is the default numerical solver available in JAX \citep{jax2018github}.

\begin{table}[t]
    \caption{The effect of the numerical solver on the performance of COMLN. The average accuracy (\%) on $1,000$ held-out meta-test tasks is reported with $95\%$ confidence interval. Note that for a given setting, the same $1,000$ tasks are used for evaluation, making both methods directly comparable. RK: 4th-order Runge-Kutta with Dormand Prince adaptive step size. Euler: explicit Euler scheme.}
    \label{tab:numerical-solver}
    \centering
    \begin{tabular}{lcc}
        \toprule
        \multirow{2}{*}{\textbf{Method}} & \multicolumn{2}{c}{\textbf{\textit{mini}ImageNet 5-way}}\\
        \cmidrule(lr){2-3}
         & \textbf{1-shot} & \textbf{5-shot}\\
        \midrule
        COMLN (RK) & $53.01 \pm 0.62$ & $70.54 \pm 0.54$\\
        COMLN (Euler) & $53.00 \pm 0.83$ & $70.50 \pm 0.72$\\
        \bottomrule
    \end{tabular}
\end{table}

\subsection{Details for measuring the efficiency}
\label{app:details-measuring-efficiency}
In Figures \ref{fig:conv4_memory_runtime} \& \ref{fig:resnet12_memory_runtime}, we show the efficiency, both in terms of memory and runtime, of COMLN compared to other meta-learning algorithms. The computational efficiency (runtime) was measured as the average time taken to compute the meta-gradients on a single 5-shot 5-way task from the \textit{mini}ImageNet dataset over 100 runs.

\begin{figure}[h]
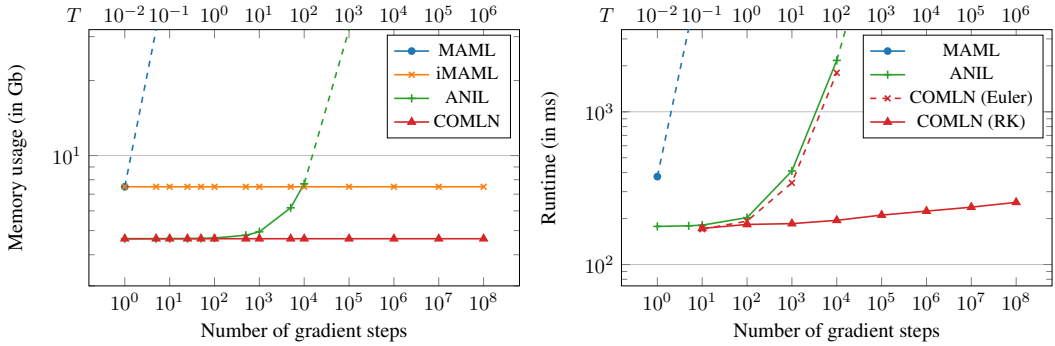

    \centering
    \includestandalone[width=\linewidth]{figures/pgfplots/resnet12_memory_runtime}
    \caption{Empirical efficiency of COMLN on a single 5-shot 5-way task, with a ResNet-12 backbone; this figure is similar to \cref{fig:conv4_memory_runtime}. (Left) Memory usage for computing the meta-gradients as a function of the number of inner-gradient steps. The extrapolated dashed lines correspond to the method reaching the memory capacity of a Tesla V100 GPU with 32Gb of memory. (Right) Average time taken (in ms) to compute the exact meta-gradients. The extrapolated dashed lines correspond to the method taking over 3 seconds.}
    \label{fig:resnet12_memory_runtime}
\end{figure}

In order to ensure fair comparison between methods that rely on a discrete number of gradient steps (MAML, ANIL, and iMAML), and COMLN which relies on a continuous integration time $T$, we added a conversion between the number of gradient steps and $T$. This corresponds to taking a learning rate of $\alpha = 0.01$ in \cref{eq:gradient-based-meta-learning-adaptation} (which is standard practice for MAML and ANIL on \text{mini}ImageNet). This means that the number of gradient steps is $100\times$ larger than $T$. This can be formally justified by considering an explicit Euler scheme for COMLN (see \cref{sec:connection-anil}), with a constant step size $\alpha = 0.01$. The memory requirements is independent of the choice of the numerical solver. For the computation time, this correspondence between $T$ and the number of gradient steps is no longer exact for \emph{COMLN (RK)}---the comparison with \emph{COMLN (Euler)} is still valid though.

\subsection{Analysis of the learned horizon}
\label{app:analysis-learned-horizon}
\begin{figure}[t]
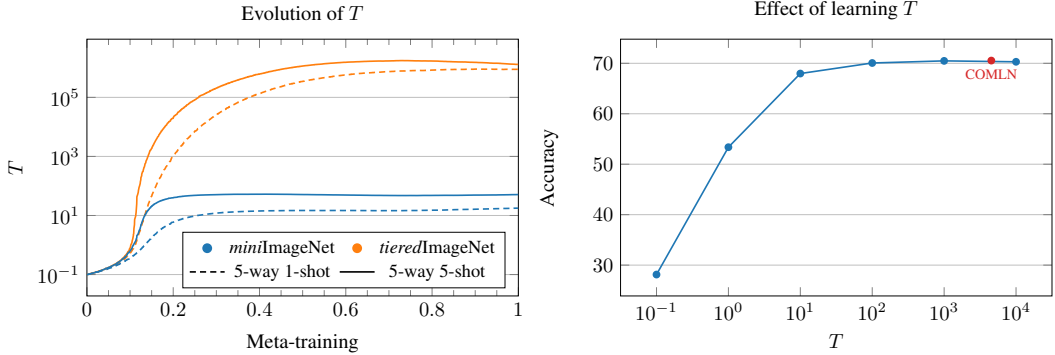

    \centering
    \includestandalone[width=\linewidth]{figures/pgfplots/evolution_and_fixed_T}
    \caption{(Left) Evolution of the meta-parameter $T$ controlling the amount of adaptation necessary for all tasks during meta-training. Here, the backbone is a ResNet-12. We normalized the duration of meta-training in $[0, 1]$ to account for early stopping; typically the model for \textit{mini}ImageNet requires an order of magnitude fewer iterations. (Right) Comparison of COMLN (where $T$ is learned, in red) to meta-learning with a fixed length of adaptation $T$ (in blue), on a 5-shot 5-way classification problem on the \textit{mini}ImageNet dataset (with a Conv-4 backbone).}
    \label{fig:experiments}
\end{figure}
Since one of the advantage of COMLN compared to other gradient-based methods is its capacity to learn the amount of adaptation through the time horizon $T$, \cref{fig:experiments} (left) shows the evolution of this meta-parameter during meta-training. We can observe that for the more complex dataset \textit{tiered}ImageNet, COMLN appropriately learns to use a longer sequence of adaptation. Similarly within each dataset, it also learns to use shorter sequences of adaptation of 1-shot problems, possibly to allow for better generalization and to reduce overfitting.

Besides adapting the amount of adaptation to the problem at hand, learning $T$ also has the advantage of saving computation while reaching high levels of performance. If we were to fix $T$ ahead of meta-training (as is typically the case in gradient-based meta-learning, where the number of gradient steps for adaptation is a hyperparameter) to a large value in order to reach high accuracy, as is shown in \cref{fig:experiments} (right), then it would induce larger computational costs early on during meta-training compared to COMLN, which achieves equal performance while tuning the value of $T$. In COMLN, the value of $T$ is relatively small at the beginning of meta-training.

\subsection{Additional Experiment: Preprocessed \textit{mini}ImageNet dataset}
\label{app:additional-experiments}
In addition to our experiments on the \textit{mini}ImageNet and \textit{tiered}ImageNet datasets in \cref{sec:experiments}, we want to evaluate the performance of continuous-time adaptation in isolation from learning the embedding network. We evaluate this using a preprocessed version of \textit{mini}ImageNet introduced in \citet{rusu2018leo}, where the embeddings were trained using a Wide Residual Network (WRN) via supervised classification on the meta-train set. For our purposes, we consider these embeddings $\vphi \in \sR^{640}$ as fixed, and only meta-learn the initial conditions $\mW_{0}$ as well as $T$. 

The classification accuracies for COMLN and other baselines using pretrained embeddings are shown in \cref{tab:results-leo-embeddings}. 
COMLN achieves comparable or better performance with a single linear classifier layer as other meta-learning methods in both the 5-way 1-shot and 5-way 5-shot classification tasks. The single exception is the 5-way 1-shot result of Meta-SGD from \citet{rusu2018leo}, which exceeded the performance of COMLN. However, our implementation of Meta-SGD achieved comparable performance to COMLN. This gap in performance is likely due to the additional data used in \citet{rusu2018leo} (meta-train and meta-validation splits) during meta-training, as opposed to using only the meta-training set for all other baselines. These results show that isolated from representation learning, all these meta-learning algorithms (either gradient-based or not) perform similarly, and COMLN is no exception.

One notable exception though is MetaOptNet \citep{lee2019metaoptnet}, where the performance is not as high as the other baselines when the backbone network is not learned anymore---despite often being the best performing model in \cref{tab:results-architecture}. Our hypothesis is that this discrepancy is due to the accuracy of the QP solver used in MetaOptNet, since learning individual SVMs on 1,000 held-out meta-test tasks leads to performance matching all other methods (about 50\% for 5-way 1-shot and about 70\% for 5-way 5-shot).

\begin{table}[t]
\centering
\caption{\textit{mini}ImageNet results using LEO embeddings and a single linear classifier layer. The average accuracy (\%) on 1,000 held-out meta-test tasks is reported with $95\%$ confidence interval. * Results reported in \citep{rusu2018leo}. ** Note that LEO uses more than a single linear classifier layer, but we add the numbers for completeness.}
\label{tab:results-leo-embeddings}
\begin{tabular}{lcc}
    \toprule
    \multirow{2}{*}{\textbf{Model}} & \multicolumn{2}{c}{\textbf{\textit{mini}ImageNet 5-way}}\\
     \cmidrule(lr){2-3}
     & \textbf{1-shot} & \textbf{5-shot}\\
    \midrule
    MAML \citep{finn17maml} & $50.35 \pm 0.63$ & $65.28 \pm 0.54$\\
    $\text{Meta-SGD}^*$ \citep{li2017metasgd} & $54.24 \pm 0.03$ & $70.86 \pm 0.04$\\
    Meta-SGD \citep{li2017metasgd} & $50.57 \pm 0.64$ & $69.09 \pm 0.53$\\
    iMAML \citep{rajeswaran2019imaml} & $50.26 \pm 0.61 $ & $69.52 \pm 0.51 $ \\
    R2D2 \citep{bertinetto2018r2d2} & $50.33 \pm 0.62$ & $70.38 \pm 0.52$\\
    LRD2 \citep{bertinetto2018r2d2} & $50.41 \pm 0.62 $ & $70.29 \pm 0.52$\\
    LEAP \citep{flennerhag2018leap} & $50.95 \pm 0.62$ & $66.72 \pm 0.55$\\
    MetaOptNet \citep{lee2019metaoptnet} & $40.60 \pm 0.60$ & $50.94 \pm 0.62$\\
    \midrule
    LEO** \citep{rusu2018leo} & $61.76 \pm 0.08$ & $77.59 \pm 0.12$\\
    \midrule
    \textbf{COMLN (Ours)} & $50.39 \pm 0.63$ & $70.06 \pm 0.52$\\
    \bottomrule
\end{tabular}
\end{table}

\subsection{Experimental details}
\label{app:experimental-details-sub}
For all methods and all datasets, we used SGD with momentum $0.9$ and Nesterov acceleration, with a decreasing learning rate starting at $0.1$ and decreasing according to the schedule provided by \citet{lee2019metaoptnet}. For meta-training, we followed the standard procedure in gradient-based meta-learning, and meta-trained with a fixed number of shots: for example in \textit{mini}ImageNet 5-shot 5-way, we only used tasks with $k = 5$ training examples for each of the $N = 5$ classes. This contrasts with \citet{lee2019metaoptnet}, which uses a larger number of shots during meta-training than the one used for evaluation (e.g. meta-training with $k=15$, and evaluating on $k=1$). This may explain the gap in performance between COMLN and MetaOptNet, especially on 1-shot settings. We opted to not follow this decision made by \citet{lee2019metaoptnet} to ensure a fair comparison with other gradient-based methods, which all used the process described above.

\paragraph{Conv-4 backbone} We used a standard convolutional neural network with 4 convolutional blocks \citep{finn17maml}. Each block consists of a convolutional layer with a $3\times 3$ kernel and $64$ channels, followed by a batch normalization layer, and a max-pooling layer with window size and stride $2 \times 2$. The activation function is a ReLU.

\paragraph{ResNet-12 backbone} We largely followed the architecture from \citet{lee2019metaoptnet}, which consists of a 12-layer residual network. The neural network is composed of 4 blocks with residual connections of 3 convolutional layers with a $3\times 3$ kernel. The convolutional layers in the residual block have $k=64$, $160$, $320$ and $640$ channels respectively. The non-linearity functions are \textsc{LeakyReLU}$(0.1)$, and a max-pooling layer with window size and stride $2 \times 2$ is applied at the end of each block. No global pooling is performed at the end of the embedding network, meaning that the embedding dimension is $d = 16,\!000$. The only notable difference with the architecture used by \citet{lee2019metaoptnet} is the absence of DropBlock \citep{ghiasi2018dropblock} regularization.

\end{document}